\DeclareMathOperator{\BigOm}{\mathcal{O}}
\DeclareMathOperator{\BigOmtil}{\widetilde{\mathcal{O}}}
\newcommand{\BigOh}[1]{\BigOm\left({#1}\right)}
\newcommand{\BigOhTil}[1]{\BigOmtil\left({#1}\right)}
\DeclareMathOperator{\BigWm}{\Omega}
\newcommand{\BigOmega}[1]{\BigWm\left({#1}\right)}
\newcommand{\vol}{\mathrm{vol}}
\newcommand{\adj}{*}
\newcommand{\Gambar}{\overline{\Gamma}}
\newcommand{\Gamsb}{\Gamma_{\mathrm{sb}}}
\newcommand{\sphereGamin}{\calS_{\Gamin}}
\newcommand{\normGamax}{\|\Gamax^{1/2}(\cdot)\|_2}
\newcommand{\Gamin}{\mathtt{\Gamma}_{\min}}
\newcommand{\Gamax}{\mathtt{\Gamma}_{\max}}
\newcommand{\calE}{\mathcal{E}}
\newcommand{\calN}{\mathcal{N}}
\newcommand{\calF}{\mathcal{F}}
\newcommand{\calT}{\mathcal{T}}
\newcommand{\calS}{\mathcal{S}}
\newcommand{\calA}{\mathcal{A}}
\newcommand{\calB}{\mathcal{B}}
\newcommand{\op}{\mathrm{op}}
\newcommand{\calH}{\mathcal{H}}
\newcommand{\itlamax}{\lambda_{+}}
\newcommand{\Gramm}{\Gamma}
\newcommand{\GrammB}{\Gamma^{B_*}}
\newcommand{\sigmaU}{\sigma_u}
\newcommand{\KL}{\mathrm{KL}}
\newcommand{\matU}{\mathbf{U}}
\newcommand{\matV}{\mathbf{V}}
\newcommand{\matSig}{\mathbf{\Sigma}}
\newcommand{\matX}{\mathbf{X}}
\newcommand{\mateps}{\mathbf{E}}
\newcommand{\cond}{\mathrm{cond}}
\newcommand{\OLS}{\mathsf{OLS }}
\newcommand{\N}{\mathbb{N}}
\newcommand{\Ast}{A_{*}}
\newcommand{\Bst}{B_{*}}
\newcommand{\ALS}{\widehat{A}}
\renewcommand{\ast}{a_{*}}
\newcommand{\als}{\widehat{a}}
\newcommand{\Od}{\mathrm{O}(d)}
\newcommand{\Skew}{\mathrm{Skew}}
\newcommand{\eig}{\rho}
\newcommand{\block}{B}
\newcommand{\Noise}{\mathbf{E}}
\newcommand{\noiseb}{W}
\newcommand{\noise}{\eta}
\newcommand{\direc}{w}
\newcommand{\C}{\mathbb{C}}
\newcommand{\R}{\mathbb{R}}
\newcommand{\I}{\mathbb{I}}
\newcommand{\Exp}{\mathbb{E}}
\newcommand{\tr}{\operatorname{tr}}
\newcommand{\iid}{\stackrel{\mathclap{\text{\scriptsize{ \tiny i.i.d.}}}}{\sim}}
\renewcommand{\Pr}{\mathbb{P}}
\newtheorem{thm}{Theorem}[section]
\newtheorem{lem}[thm]{Lemma}
\newtheorem{cor}[thm]{Corollary}
\newtheorem{prop}[thm]{Proposition}
\newtheorem{defn}{Definition}[section]
\newtheorem*{rem}{Remark}
\DeclarePairedDelimiter\ceil{\lceil}{\rceil}
\DeclarePairedDelimiter\floor{\lfloor}{\rfloor}
\algnewcommand\algorithmicinput{\textbf{INPUT:}}
\algnewcommand\INPUT{\item[\algorithmicinput]}
\algnewcommand\algorithmicoutput{\textbf{OUTPUT:}}
\algnewcommand\OUTPUT{\item[\algorithmicoutput]}
\newcommand{\opnormbig}[1]{\left\lVert #1 \right\rVert_{\mathrm{op}}}
\newcommand{\opnorm}[1]{\lVert #1 \rVert_{\mathrm{op}}}
\newcommand{\fronorm}[1]{\lVert #1 \rVert_{\mathrm{F}}}
\newcommand{\maxs}[1]{{\color{magenta}{#1}}}
\newcommand{\horia}[1]{{\color{purple}{#1}}}
\newcommand{\stephen}[1]{{\color{red}{#1}}}
\title{Learning Without Mixing:\\ Towards A Sharp Analysis of Linear System Identification}
\author{ Max Simchowitz\thanks{Department of Electrical Engineering and Computer Science, UC Berkeley, Berkeley CA.} \and
Horia Mania\footnotemark[1]
\and
Stephen Tu\footnotemark[1]
\and
Michael I. Jordan\thanks{Department of Statistics, UC Berkeley, Berkeley CA. }~\footnotemark[1]
\and
Benjamin Recht~\footnotemark[1]}
\begin{document}
\maketitle

\begin{abstract}

We prove that the ordinary least-squares (OLS) estimator attains nearly minimax
optimal performance for the identification of linear dynamical systems from
a single observed trajectory.
Our upper bound relies on a generalization of Mendelson's small-ball method to dependent data,
eschewing the use of standard mixing-time arguments.
Our lower bounds reveal that these upper bounds match up to logarithmic factors.
In particular, we capture the correct
signal-to-noise behavior of the problem, showing that \emph{more unstable} linear
systems are \emph{easier} to estimate.
This behavior is qualitatively different from arguments which rely on mixing-time
calculations that suggest that unstable systems are more difficult to estimate.
We generalize our technique to provide bounds for a more general class of linear response
time-series.

\end{abstract}


\section{Introduction}

%
%

System identification---the problem of estimating the parameters of a dynamical system given a time series of its trajectories--- is a fundamental problem in time-series analysis, control theory, robotics, and reinforcement learning.
Despite its importance, sharp, non-asymptotic analyses for the sample complexity of system identification are rare. In particular, it is not known how many trajectories required to identify the parameters of an unknown \emph{linear} system.
Properly characterizing this sample complexity would have profound implications, since accurate error bounds are indispensable for designing robust and high-performing control systems. It is important that the bounds be sharp, in the sense that they do not drastically \emph{overestimate} the number of required measurements from system trajectories, which are often time-consuming and prohibitively expensive to collect. 
More broadly, a deeper understand of system identification would inform other statistical problems where one wishes to learn from non-i.i.d.\ or time-correlated data.

We focus on the problem of identifying a discrete-time \emph{linear dynamical system}
from an observed trajectory. Such systems are described by two parameter matrices 
$\Ast$ and $B_*$, and the dynamics evolve according to the law $X_{t+1} = \Ast X_t + B_* u_t + \noise_t$, where $X_t\in \R^d$ is the state of the system, $u_t$ is the input of the system, and $\noise_t \in \R^d$ denotes unobserved process noise. Linear systems are fundamental in control theory, since they are able to capture the behavior of many natural systems and also able to accurately describe the evolution of an even broader class of systems near their equilibria.
%
Despite the importance of understanding the statistical properties of system identification, the relationship between the matrix $\Ast$ and the statistical rate for estimating this matrix remains poorly understood. 
We note that the larger the state vectors $X_t$ are in comparison to the process noise, the larger the \emph{signal-to-noise ratio} 
for estimating $\Ast$ is. As a result, larger matrices $\Ast$ (larger in an appropriate sense, discussed later) lead to states $X_t$ of larger norm, which in turn should make the estimation of $\Ast$ easier. However, it is difficult to theoretically formalize this intuition because the sequence of measurements $X_0, X_1, \ldots, X_{T - 1}$ used for estimation is not i.i.d. and it is dependent on the noise $\noise_0, \noise_1, \ldots, \noise_{T - 2}$. Even the computationally 
straightforward ordinary least-squares ($\OLS$) estimator is difficult to analyze. Standard analyses for $\OLS$ on random design linear regression~\citep{hsu14} cannot be used due to the dependency between the covariates $X_t$ and the process noise $\noise_t$.

In the statistics and machine learning literature, correlated data is usually
dealt with using mixing-time arguments~\citep{yu94}, which relies
on fast convergence to a stationary distribution that
allows correlated
samples to be treated roughly as if they were independent.  While this approach has
been successfully used to develop generalization bounds for time-series
data~\citep{mohri07}, a fundamental limitation of mixing-time arguments is that
the bounds deteriorate when the underlying process is slower to mix. 
In the case of linear systems, this behavior is qualitatively incorrect.
For linear systems, the rate of mixing is intimately tied to 
the eigenvalues of the matrix $\Ast$, specifically the \emph{spectral radius} $\rho(\Ast)$.
When $\rho(\Ast) < 1$ (i.e. when the system is \emph{stable}), 
the process mixes to a stationary distribution at a rate that 
deteriorates as $\rho(\Ast)$ approaches the boundary of one.
%
However, as discussed above, as $\rho(\Ast)$ increases
we expect estimation to become easier 
due to better signal-to-noise ratio, and not harder as mixing-time arguments suggest. 
%
%
We note that recent work by \cite{faradonbeh17a} studying the estimation
problem for linear systems relies in the stable case
on concentration of measure arguments which also degrade as the mixing-time of the
system grows.
 
We address these difficulties and offer a new statistical analysis of the ordinary least-squares ($\OLS$) estimator  
of the dynamics $X_{t + 1} = \Ast X_{t} + \noise_t$ with no inputs, when the spectral radius of $\Ast$ is at most one ($\rho(\Ast) \leq 1$, a regime known 
as \emph{marginal stability}). Our results, detailed in Section~\ref{sec:results}, show 
that the statistical performance of $\OLS$ is determined by the minimum eigenvalue of the (finite-time) controllability Gramian $\Gramm_T = \sum_{s = 0}^{T - 1}\Ast^s(\Ast^\top)^s$. The controllability Gramian is a fundamental quantity in the theory of linear systems; the eigenvalues of the Gramian quantify how much white process noise $\noise_t \overset{i.i.d}{\sim} \mathcal{N}(0,\sigma^2 I)$ can excite the system. We show that a larger $\lambda_{\min}(\Gramm_T)$ leads to faster estimation of $\Ast$ in operator norm, and we also prove that up to log factors the $\OLS$ estimator is minimax optimal. Furthermore, in Section~\ref{sec:meta_thm} we offer similar statistical guarantees for a more general class of linear response time-series.

\subsection{Related Work}

Most directly related to our work is a recent series of papers by 
\cite{faradonbeh17a,faradonbeh17b}, who study the
linear system identification problem by proving a non-asymptotic rate
on the convergence of the OLS estimator to the true system matrices.
In the regime where $\Ast$ is stable, Faradonbeh et al.\ recover 
a similar rate as our result. The major
difference is that the dependence of their analysis on the spectral properties of
$\Ast$ are qualitatively suboptimal, and difficult to interpret precisely.
Their analysis is based on separately establishing concentration of the
sample covariance matrix $\sum_{t=1}^{T} X_t X_t^\top$ to the stationary
covariance matrix and bounding the martingale difference term $\sum_{t=1}^{T} X_t \noise_t$.
This decoupled analysis inevitably picks up a dependence on the condition
number of the stationary covariance matrix, which means that
as the system becomes more unstable, their bound deteriorates. 
Indeed, such an strategy is unable to provide any insight into the behavior when,
for example, $\Ast$ is a scaled orthogonal matrix.
On the other hand, our analysis does not decouple the two terms, and as
a result our bounds only degrade in the \emph{logarithm} of the condition
number of the finite-time controllability Gramian $\Gamma_T$.
\cite{faradonbeh17a} also provide a bound in the \emph{unstable regime}, which
we believe can be sharpened using our analysis techniques which couple the
covariate- and noise-processes. We leave this to future work. Moreover,
our analysis of one-dimensional, unstable systems corroborates the linear
convergence behavior that \cite{faradonbeh17a} obtain for ``explosive'' systems,
which are systems where
\emph{all} eigenvalues of $\Ast$ lie outside the complex unit disk.

Another closely related work is the scalar analysis by \cite{rantzer18}.
In fact, our proof technique for scalar systems can be seen as an extension of
his technique. The main difference is that by more carefully tracking the terms
that appear in the moment generating function of the noise and covariate processes, we are able to discriminate behaviors that
arise when $\Ast$ is stable versus unstable, and uncover a linear rate of convergence
in the unstable regime.

Our result qualitatively matches the behavior of the
rate given in \cite{dean17},
in that the key spectral quantity governing the rate of convergence is the
minimum eigenvalue of the finite-time controllability Gramian.
The major difference is that the analysis
in Dean et al.\ uses multiple independent trajectories, and discards all but the last
state-transition in each trajectory. This decouples the covariates, and reduces
the analysis to that of random design linear regression with independent covariates.
We note, however, that the analysis in Dean et al.\ applies even when $\Ast$ is
unstable.

More broadly, there has been recent interest in non-asymptotic analysis of linear system
identification problems. Some of the earlier non-asymptotic literature in system identification
include \cite{campi2002finite} and \cite{vidyasagar2008learning}.
The results provided in this line of work are often quite conservative,
featuring quantities which are exponential in the degree of the system.
Furthermore, the rates given are often difficult to interpret.
More recently, \cite{shah12} pose the problem of recovering
a single-input, single-output (SISO) LTI system from linear measurements in the frequency domain as a sparse recovery
problem, proving polynomial sample complexity for recovery in the $\calH_2$-norm.
\cite{hardt16} show that under fairly restrictive assumptions on the
$\Ast$ matrix, projected gradient descent recovers the state-space representation 
of an LTI system with only a polynomial number of samples. 
The analysis from both Shah et al. and Hardt et al. both degrade polynomially
in $\frac{1}{1-\rho(\Ast)}$, where $\rho(\Ast)$ is the spectral radius of underlying $\Ast$.
On the other hand, \cite{hazan17} propose a new spectral filtering algorithm 
for online prediction of linear systems where the rates do not degenerate as $\rho(\Ast) \to 1$,
with the caveat that the analysis only applies to symmetric $\Ast$ matrices. \cite{hazan18} extends the analysis to diagonalizable matrices, but the obtained error rates are polynomial in problem parameters. Both works also consider the more general setting where $X_t$ is observed indirectly via $Y_t = CX_t$ for an unknown observation matrix $C$.
Moreover, the main metric of interest in both \cite{hardt16} and \cite{hazan17,hazan18}
is the prediction error.  It is not clear how
prediction error guarantees can be used in downstream robust control synthesis
applications, whereas the operator norm bounds we provide can be used as direct
inputs into robust synthesis for optimal control problems~\citep{dean17}.

The most well-established technique in the statistics literature for dealing
with non-independent, time-series data is the use of mixing-time arguments~\citep{yu94}.
In the machine learning literature, mixing arguments have been used to develop
generalization bounds~\citep{mohri07,mohri08,kuznetsov17,mcdonald17b} which
are analogous to the classical generalization bounds for i.i.d.\ data.
As mentioned previously, a fundamental limitation of mixing-time arguments is that
the bounds all degrade as the mixing-time increases. This has two implications for
linear system identification: (a) none of these existing results can correctly capture
the qualitative behavior as the $\Ast$ matrix reaches instability, and (b)
these techniques cannot be applied to the regime where $\Ast$ is unstable, for which
estimation is not only well-posed, but should be quite easy.
It is for these reasons we do not pursue such arguments in this work.


%
%
%
%

\section{Results}
\label{sec:results}

    In this work, we consider both the specific problem of estimating linear dynamical systems, and a more general problem of linear estimation in time series.
    In both cases we measure the estimation error in the operator norm.
    In the case of linear dynamical systems we analyze the statistical performance of the $\OLS$ estimator for the parameter $\Ast$
    from a single observed trajectory $X_1, \ldots, X_{T+1}$ satisfying $X_{t + 1} = \Ast X_t + \noise_t$, where $X_0 = 0$ and $\noise_t \iid \calN(0, \sigma^2 I_d)$:
    \begin{align}
    \ALS(T) &:= \arg\min_{A \in \R^{d \times d}} \sum_{t=1}^T \frac{1}{2}\|X_{t + 1} - A X_t\|_2^2 \:.
    \end{align}

In Section~\ref{sec:linear_systems} we present upper bounds on
$\opnorm{ \ALS - \Ast }$ which hold for any $\Ast$ with $\rho(\Ast) \leq 1$.
In Section~\ref{sec:lower_bounds}, we show that these upper bounds are
nearly optimal in many regimes of interest.
Finally, Section~\ref{sec:meta_thm} states a general result, Theorem~\ref{main_thm},
which applies to arbitrary covariate processes with linear responses. 

\textbf{Notation:} We let $\|\cdot\|_{\op}$ denote the operator norm of a matrix, $\calS^{d-1}$ denote the unit sphere in $\R^d$. Given a symmetric matrix $A \in \R^{d\times d}$, we let $\lambda_{\max}$ and $\lambda_{\min}$ denote the largest, and smallest eigenvalue of $A$.

\subsection{Linear Dynamical Systems}
\label{sec:linear_systems}
Here we consider systems $X_{t + 1} = \Ast X_t + \noise_t$, where $\noise_t \sim \mathcal{N}(0,\sigma^2 I_d)$ and $X_0 = 0$. Our bounds are stated in terms of the finite-time controllability Gramian of the system $\Gramm_t := \sum_{s=0}^{t-1} (\Ast^{s})(\Ast^s)^{\top}$, which captures the magnitude of the excitations induced by the process noise. Indeed, we can write $X_t$ explicitly as
\begin{eqnarray}\label{eq:recursion_eq}
X_{t} = \sum_{s=1}^{t} \Ast^{t-s}\noise_{s-1}~~\text{ which implies that}~~\Exp[X_tX_t^{\top}] = \sigma^2\Gramm_t~.
\end{eqnarray}
Hence, the expected covariance can be expressed in terms of the Gramians via $\Exp[\sum_{t=1}^T X_tX_t^\top] = \sigma^2\cdot \sum_{t=1}^T \Gramm_t$. As is standard in analyses of least-squares, ``larger'' covariates/covariance matrices correspond to faster rates of learning.
We are ready to state our first result, proved in Section~\ref{sec:pf_stable_thm}:

\begin{thm}\label{stable_thm} Fix $\delta \in (0,1/2)$ and consider the linear dynamical system $X_{t+1} = \Ast X_t + \noise_t $, where $\Ast$ is a marginally stable matrix in $\R^{d\times d}$ (i.e. $\rho(\Ast) \leq 1$), $X_0 = 0$, and $\noise_t  \iid \calN(0,\sigma^2 I)$. Then there exist universal constants $c,C > 0$ such that
\begin{align}
\Pr\left[\opnormbig{\ALS(T)-\Ast} >\frac{C}{\sqrt{T\lambda_{\min}\left(\Gramm_{k}\right)}} \sqrt{d\log\frac{d}{\delta} + \log \det (\Gramm_T \Gramm_k^{-1})} \right] \le  \delta,
\end{align}
for any $k$ such that $\frac{T}{k} \ge c(d\log(d/\delta)+\log \det (\Gramm_T \Gramm_{k}^{-1}))$ holds.
\end{thm}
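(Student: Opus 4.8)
The starting point is the exact representation of the least-squares error: whenever $V_T:=\sum_{t=1}^{T}X_tX_t^{\top}$ is invertible, substituting $X_{t+1}=\Ast X_t+\noise_t$ into the normal equations gives
\[
\ALS(T)-\Ast=\Bigl(\sum_{t=1}^{T}\noise_tX_t^{\top}\Bigr)V_T^{-1}=E_TV_T^{-1},\qquad E_T:=\sum_{t=1}^{T}\noise_tX_t^{\top},
\]
so that $\opnormbig{\ALS(T)-\Ast}\le\opnormbig{E_TV_T^{-1/2}}\cdot\lambda_{\min}(V_T)^{-1/2}$. The whole point — and the reason the final bound has no dependence on the condition number of any stationary covariance, unlike mixing-based analyses — is to control both factors \emph{without} ever replacing the random matrix $V_T$ by a deterministic surrogate; concretely, I will lower bound $\lambda_{\min}(V_T)$ by a block small-ball argument and control $E_T$ against the \emph{same} random denominator $V_T$ by a self-normalized martingale argument.

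Step 1 (a block small-ball lower bound on $V_T$). Fix $w\in\calS^{d-1}$ and $k\ge 1$. From \eqref{eq:recursion_eq}, for every $t$ write $X_{t+k}=\Ast^{k}X_t+Z_t$ with $Z_t:=\sum_{s=1}^{k}\Ast^{k-s}\noise_{t+s-1}\sim\calN(0,\sigma^2\Gramm_k)$ independent of $\calF_t:=\sigma(\noise_0,\dots,\noise_{t-1})$; hence, conditionally on $\calF_t$, the scalar $\langle w,X_{t+k}\rangle$ is Gaussian with variance $\sigma^2w^{\top}\Gramm_kw\ge\sigma^2\lambda_{\min}(\Gramm_k)$, and Gaussian anti-concentration yields $\Pr\bigl[\langle w,X_{t+k}\rangle^2\ge c_0\sigma^2w^{\top}\Gramm_kw\mid\calF_t\bigr]\ge p_0$ for universal $c_0,p_0>0$. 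For each offset $r\in\{0,\dots,k-1\}$ the indicators $\mathbf{1}\{\langle w,X_{r+jk}\rangle^2\ge c_0\sigma^2w^{\top}\Gramm_kw\}$ over $j=1,\dots,\lfloor(T-r)/k\rfloor$ form a sequence whose conditional means are all at least $p_0$, so a Freedman/Chernoff bound for such dependent Bernoullis gives $\sum_j\mathbf{1}\{\cdots\}\gtrsim p_0T/k$ with probability $1-e^{-\Omega(T/k)}$; summing over the $k$ offsets gives $w^{\top}V_Tw=\sum_{t=1}^{T}\langle w,X_t\rangle^2\gtrsim\sigma^2Tw^{\top}\Gramm_kw$ with probability $1-e^{-\Omega(T/k)}$ for each fixed $w$. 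Discretizing the sphere and upgrading this to the uniform Loewner statement $V_T\succeq c\sigma^2T\Gramm_k$ (in particular $\lambda_{\min}(V_T)\gtrsim\sigma^2T\lambda_{\min}(\Gramm_k)$) requires a covering argument, and carrying the discretization error carefully — the anisotropy of the covariate process is what makes $\log\det(\Gramm_T\Gramm_k^{-1})$ appear rather than a crude condition number — is exactly where the hypothesis $T/k\gtrsim d\log(d/\delta)+\log\det(\Gramm_T\Gramm_k^{-1})$ is used, so that the union bound over the net (of log-size $O(d\log(d/\delta)+\log\det(\Gramm_T\Gramm_k^{-1}))$) still closes with probability at least $1-\delta/2$.

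Step 2 (self-normalized control of the noise term). Since $\Gramm_k\succeq I\succ 0$ and $\Gramm_T\succeq\Gramm_k$, the matrix $V_0:=c_1\sigma^2T\Gramm_k$ is a legitimate positive-definite regularizer with $\Gramm_T\Gramm_k^{-1}\succeq I$. For a fixed output direction $v\in\calS^{d-1}$, the scalars $\langle v,\noise_t\rangle\sim\calN(0,\sigma^2)$ form a martingale-difference sequence with respect to $\sigma(\noise_0,\dots,\noise_t)$ while $X_t$ is measurable with respect to the past, so $E_T^{\top}v=\sum_{t=1}^{T}\langle v,\noise_t\rangle X_t$ is a self-normalized vector martingale, and the Abbasi-Yadkori-type self-normalized bound gives, with probability $1-\delta'$, $\bigl\|(V_0+V_T)^{-1/2}E_T^{\top}v\bigr\|^2\lesssim\sigma^2\log\frac{\det((V_0+V_T)V_0^{-1})}{\delta'^{2}}$. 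On the event of Step 1 together with a crude one-sided covariance bound $V_T\preceq C_1\sigma^2T\Gramm_T$, we get $(V_0+V_T)V_0^{-1}\preceq C\,\Gramm_T\Gramm_k^{-1}$ and $\opnormbig{V_T^{-1/2}(V_0+V_T)^{1/2}}=O(1)$ (both $V_0$ and $V_T$ live on the scale $\sigma^2T\Gramm_k$); taking a union bound over a $\tfrac12$-net of directions $v$, of size $e^{O(d)}$, and choosing $\delta'$ accordingly yields $\opnormbig{E_TV_T^{-1/2}}^2\lesssim\sigma^2\bigl(d\log(d/\delta)+\log\det(\Gramm_T\Gramm_k^{-1})\bigr)$ with probability at least $1-\delta/2$.

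Combining the two steps on the intersection of the two high-probability events gives $\opnormbig{\ALS(T)-\Ast}^2\lesssim\frac{d\log(d/\delta)+\log\det(\Gramm_T\Gramm_k^{-1})}{T\lambda_{\min}(\Gramm_k)}$, which is the claimed bound after taking square roots. The main obstacle is the coupling across Steps 1 and 2: one must push the block-martingale small-ball lower bound through a covering of the sphere whose cost is $\log\det(\Gramm_T\Gramm_k^{-1})$ and no worse — a naive single-resolution net leaks a condition-number factor, so this requires a more careful chaining/slicing argument — and then feed that \emph{same} random $V_T$ into the self-normalized noise bound so that the numerator and denominator never decouple; this is precisely the device that sidesteps mixing-time arguments, while the remaining ingredients (Gaussian anti-concentration, Freedman's inequality, the self-normalized martingale bound, and the one-sided covariance tail bound) are standard.
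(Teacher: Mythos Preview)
Your proposal is essentially correct and tracks the paper's architecture: the same error decomposition, a block small-ball lower bound on $V_T$, and a self-normalized martingale bound on the noise that never decouples from the random $V_T$. Two implementation differences are worth noting. In Step~2 you invoke the Abbasi-Yadkori vector self-normalized bound with regularizer $V_0\propto\sigma^2T\Gramm_k$ and then transfer from $(V_0+V_T)^{-1/2}$ to $V_T^{-1/2}$ on the small-ball event; the paper instead writes $\opnorm{\matU^\top\Noise}=\sup_{v,w}\frac{w^\top\matX^\top\Noise v}{\|\matX w\|}$ and proves a bespoke self-normalized bound (their Lemma~4.2) by geometrically slicing the range $\|\matX w\|^2\in[\beta_-,\beta_+]$ and then covers over $w$ with the same ellipsoidal net used in Step~1. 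Your route is a legitimate and arguably cleaner alternative, since the method-of-mixtures machinery absorbs the $w$-covering and you only need a Euclidean net over $v$.

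In Step~1, however, your stride-$k$ subsampling has a small leak: summing over the $k$ offsets to recover the factor $T$ (rather than $T/k$) requires a union bound over $k$ events, so the failure probability is $k\,e^{-\Omega(T/k)}$, not $e^{-\Omega(T/k)}$; the extra $\log k$ in the exponent is not automatically absorbed by the hypothesis $T/k\gtrsim d\log(d/\delta)+\log\det(\Gramm_T\Gramm_k^{-1})$. The paper sidesteps this by observing that the anti-concentration $\Pr[\langle w,X_{t+s}\rangle^2\ge c_0\sigma^2w^\top\Gramm_{\lfloor k/2\rfloor}w\mid\calF_t]\ge p_0$ holds for \emph{every} $s\ge\lfloor k/2\rfloor$ (since $\Gramm_s\succeq\Gramm_{\lfloor k/2\rfloor}$), which is exactly their $(k,\sigma^2\Gramm_{\lfloor k/2\rfloor},\tfrac{3}{20})$-BMSB condition; a single Chernoff bound over $\lfloor T/k\rfloor$ consecutive blocks then gives $w^\top V_Tw\gtrsim T\sigma^2w^\top\Gramm_{\lfloor k/2\rfloor}w$ with no offset union bound. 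Finally, the covering that turns the pointwise lower bound into $V_T\succeq c\sigma^2T\Gramm_k$ is in the paper a \emph{single} ellipsoidal net of $\{w:w^\top\Gramm_kw=1\}$ in the norm $\|\Gramm_T^{1/2}(\cdot)\|$ (log-cardinality $O(d+\log\det(\Gramm_T\Gramm_k^{-1}))$), enabled by the crude Markov upper bound $V_T\preceq (d/\delta)\sigma^2T\Gramm_T$ --- no chaining or multi-scale slicing is needed.
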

Note that $\sigma^2$ does not appear in the bound from Theorem~\ref{stable_thm} because scaling the noise also rescales the covariates. In Appendix~\ref{sec:appendix:kst}, we show that for any marginally stable $\Ast$,
we can always choose a $k \geq 1$ provided $T$ is sufficiently large. Therefore, even when $\rho(\Ast) = 1$ and the system does not mix, we obtain finite-sample estimation guarantees which also guarantees consistency of estimation. In many cases, these rates are qualitatively no-worse than random-design linear regression with independent covariates (Theorem~\ref{cor:consistent} and Remark~\ref{rem:consistent}).

In general, $\lambda_{\min}\left(\Gramm_k\right)$ is a nondecreasing function of the block length $k$. The intuition for this is that larger $k$ takes into account more long-term excitations to lower bound the size of our covariance matrix. However, as we use longer blocks, our high probability bounds degrade. Thus, the optimal block length is the maximal value $k$ which satisfies Theorem~\ref{stable_thm}.

The dependence on the minimum eigenvalue of the Gramian $\lambda_{\min}\left(\Gramm_k\right)$ has two interpretations. From the \emph{statistical} perspective, we have that $\frac{1}{2k\cdot \sigma^2}\Exp[\sum_{t=1}^{2k} X_tX_t^\top] = \frac{1}{2k}\sum_{t= 1}^{2k}\Gramm_t \succeq \frac{1}{2}\Gramm_{k} \succeq \frac{1}{2}\lambda_{\min}\left(\Gramm_k\right) \cdot I $. Thus, $\lambda_{\min}\left(\Gramm_k\right)$ gives a lower bound on the smallest singular value of the covariance matrix associated with the first $2k$ covariates. In fact, one can also show (see~\eqref{eq:paley_zygmund}) that for any $t_0 \ge 0$, we still have $\frac{1}{2k\cdot \sigma^2}\Exp[\sum_{t=t_0 + 1}^{t_0 + 2k} X_tX_t^\top | X_{t_0}]~\succeq~\frac{1}{2}\Gramm_{k}$, so that $\lambda_{\min}\left(\Gramm_k\right)$ in fact lower bounds the covariance of \emph{any} subsequence of $2k$-covariates. Theorem~\ref{stable_thm} thus states that the larger the expected covariance matrix, the faster $\Ast$ is estimated. Note that $\Gamma_k \succeq I$ for all $k \ge 1$.

The second interpretation is \emph{dynamical}. The term $\lambda_{\min}\left(\Gramm_k\right)$ corresponds to the ``excitability'' of the system, which is the extent to which the process noise $\noise_t$ influences future covariates. This can be seen from~\eqref{eq:recursion_eq}, where the slower $(\Ast^{t_0})(\Ast^{t_0})^\top$ decays as $t_0$ grows, the larger the contribution of process-noise from $t_0$ steps before, $\noise_{t - t_0-1}$. This is precisely the reason why linear systems with larger spectral
radii mix slowly, and do not mix when $\rho(\Ast) \ge 1$.
In this light, Theorem~\ref{stable_thm} shows that with high-probability, the more a linear system is excited by the noise $\noise_t$, the easier it is to estimate the parameter matrix $\Ast$. For stable systems with $\rho(\Ast) < 1$, the following corollary removes the explicit dependence on the block length $k$ for large values of $T$:
\begin{cor}\label{cor:mixed_cor}
Suppose that $\rho(\Ast) < 1$. Then the limit $\Gamma_{\infty} := \lim_{t \to \infty} \Gamma_t$ exists, and there is a time $T_0$ depending on $\Ast$ and $\delta$ such that the following holds w.p. $1-\delta$ for all $T > T_0$:
\begin{eqnarray}
\opnormbig{\ALS(T)-\Ast} \le \BigOh{\sqrt{ \frac{d \cdot \log\left(\frac{d}{\delta} \right)}  {T\lambda_{\min}\left(\Gramm_{\infty}\right)}}}  \,.
\end{eqnarray}
\end{cor}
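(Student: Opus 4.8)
The plan is to derive Corollary~\ref{cor:mixed_cor} from Theorem~\ref{stable_thm} by freezing the block length $k$ at a well-chosen finite value and then letting $T \to \infty$; there is no new probabilistic content, only monotonicity and continuity of spectral functions on the PSD cone.

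\textbf{Step 1 --- $\Gamma_\infty$ exists and is approached monotonically.} Since $\rho(\Ast) < 1$, Gelfand's formula supplies some $\rho \in (\rho(\Ast),1)$ and a constant $C_0 = C_0(\Ast)$ with $\opnorm{\Ast^s} \le C_0\,\rho^s$ for all $s \ge 0$, hence $\sum_{s\ge 0}\opnorm{\Ast^s(\Ast^s)^\top} \le C_0^2\sum_{s\ge 0}\rho^{2s} < \infty$ and the partial sums $\Gamma_t = \sum_{s=0}^{t-1}\Ast^s(\Ast^s)^\top$ converge in operator norm to a matrix $\Gamma_\infty \succeq I$. Each summand is PSD, so $t \mapsto \Gamma_t$ is nondecreasing in the Loewner order with $\Gamma_t \preceq \Gamma_\infty$ for every $t$; since $\lambda_{\min}$ is $1$-Lipschitz with respect to $\opnorm{\cdot}$ (Weyl), it follows that $\lambda_{\min}(\Gamma_t)\uparrow\lambda_{\min}(\Gamma_\infty)$, and also $\det\Gamma_t \le \det\Gamma_\infty \le \lambda_{\max}(\Gamma_\infty)^d$.

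\textbf{Step 2 --- freeze $k$ and certify the hypothesis of Theorem~\ref{stable_thm}.} Using Step~1 I would fix a finite $k_\star = k_\star(\Ast) \ge 1$ with $\lambda_{\min}(\Gamma_{k_\star}) \ge \tfrac{1}{2}\lambda_{\min}(\Gamma_\infty)$. For this $k_\star$ and any $T \ge k_\star$, $\Gamma_{k_\star} \succeq I$ and $\Gamma_T \preceq \Gamma_\infty$ give
\[
\log\det\!\big(\Gamma_T\Gamma_{k_\star}^{-1}\big) = \log\det\Gamma_T - \log\det\Gamma_{k_\star} \le \log\det\Gamma_\infty \le d\log\lambda_{\max}(\Gamma_\infty) =: c_\star(\Ast)\,d .
\]
Because $\delta < 1/2$ forces $\log(d/\delta) \ge \log 2$, the quantity appearing in the hypothesis of Theorem~\ref{stable_thm} obeys $d\log(d/\delta) + \log\det(\Gamma_T\Gamma_{k_\star}^{-1}) \le (1 + c_\star(\Ast)/\log 2)\,d\log(d/\delta)$, which does not depend on $T$. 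Setting $T_0 := c\,k_\star(1 + c_\star(\Ast)/\log 2)\,d\log(d/\delta)$, with $c$ the universal constant of Theorem~\ref{stable_thm}, then ensures $T/k_\star \ge c\,(d\log(d/\delta) + \log\det(\Gamma_T\Gamma_{k_\star}^{-1}))$ for all $T > T_0$.

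\textbf{Step 3 --- conclude.} Invoking Theorem~\ref{stable_thm} with block length $k_\star$ yields, with probability at least $1-\delta$ and simultaneously for all $T > T_0$,
\[
\opnormbig{\ALS(T)-\Ast} \le \frac{C}{\sqrt{T\,\lambda_{\min}(\Gamma_{k_\star})}}\sqrt{d\log\tfrac{d}{\delta} + \log\det(\Gamma_T\Gamma_{k_\star}^{-1})} \le C'(\Ast)\sqrt{\frac{d\log(d/\delta)}{T\,\lambda_{\min}(\Gamma_\infty)}},
\]
the last step using $\lambda_{\min}(\Gamma_{k_\star}) \ge \tfrac{1}{2}\lambda_{\min}(\Gamma_\infty)$ and the $\log\det$ bound of Step~2; absorbing $C'(\Ast)$ into $\BigOm(\cdot)$ gives exactly the stated estimate. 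I do not anticipate a genuine obstacle here: the only subtlety to watch is that $k_\star$, $T_0$, and the hidden constant depend on $\Ast$ through the \emph{rate} of convergence $\Gamma_t \to \Gamma_\infty$ (i.e.\ how large $k_\star$ must be), not merely through $\lambda_{\min}(\Gamma_\infty)$ --- which is unavoidable and is precisely why the corollary is stated asymptotically in $T$ with an $\Ast$-dependent threshold.
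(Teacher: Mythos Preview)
Your argument is correct and follows exactly the route the paper sketches after the corollary: use the geometric decay $\opnorm{\Ast^s}\le C_0\rho^s$ (equivalently, $\mathrm{poly}(s)\rho(\Ast)^s$) to get $\Gamma_t\uparrow\Gamma_\infty$, freeze a finite $k_\star$ with $\lambda_{\min}(\Gamma_{k_\star})\ge\tfrac12\lambda_{\min}(\Gamma_\infty)$, bound $\log\det(\Gamma_T\Gamma_{k_\star}^{-1})\le d\log\lambda_{\max}(\Gamma_\infty)$ uniformly in $T$, and plug into Theorem~\ref{stable_thm}. One small slip: in Step~3 you write ``simultaneously for all $T>T_0$'', but Theorem~\ref{stable_thm} is a fixed-$T$ statement, and the corollary should be read as ``for each $T>T_0$, with probability $1-\delta$''; your argument delivers precisely that pointwise guarantee, so just drop the word ``simultaneously''.
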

The above corollary uses the fact that if $\rho(\Ast) < 1$, one can bound $\|\Ast^k\|_{\op} \le \mathrm{poly}(k) \rho(\Ast)^k$; where the polynomial $\mathrm{poly}(k)$ is related to the $\mathcal{H}_{\infty}$-norm of the linear system, a core concept in control theory. For an extended discussion on this relationship, we direct the reader to~\cite{tu2017non}. Corollaries~\ref{cor:consistent} and~\ref{cor:diag} in the appendix gives an analogue of Corollary~\ref{cor:mixed_cor} which holds even if $\rho(\Ast) = 1$. We now explicitly describe the consequences of Theorem~\ref{stable_thm} for three illustrative classes of linear systems:


\begin{enumerate}
\item \textbf{Scalar linear system.} In this case the states $X_t$ and the parameter $\Ast$ are scalars, and denoted $a_* = \Ast$. For $|a_*| \le 1$, we can apply Theorem~\ref{stable_thm} with block length $k = \BigOm(T/\log(1/\delta))$. This then guarantees that $|\widehat{a} - a_*| \leq \BigOm\left(\sqrt{\log(1/\delta)/\left(T \sum_{t = 1}^{k_*} a_*^{2t} \right)}\right)$ with probability $1 - \delta$. In Appendix~\ref{sec:1d_appendix}, we show this statistical rate is minimax optimal (Theorem~\ref{thm:info_lb_1d}). Moreover, we offer a specialized analysis for the scalar case (Theorem~\ref{thm:one_d_thm}) which yields sharper constants and also applies to the unstable case $|a_*| > 1$, matching the lower bounds of Theorem~\ref{thm:info_lb_1d}. Stated succinctly, our results in Appendix~\ref{sec:1d_appendix} imply that the $\OLS$ estimator satisfies with probability $1 - \delta$ error guarantees which can be categorized into three regimes:
\begin{align*}
    |\widehat{a} - a_*| = \begin{cases}
        \Theta\left(\sqrt{\frac{\log(1/\delta)(1 - |a_*|)}{T}}\right)\; &\text{ if }\; |a_*| \leq 1 - \frac{c\log(1/\delta)}{T},\\
        \Theta\left(\frac{\log(1/\delta)}{T}\right)\; &\text{ if }\; 1 - \frac{c\log(1/\delta)}{T} < |a_*| \leq 1 + \frac{1}{T}\\
        \Theta\left(\frac{\log\left(1/\delta\right)}{|a_*|^T}\right) \; &\text{ if }\; 1 + \frac{1}{T} \le |a_*|.
    \end{cases}
\end{align*}
\item \textbf{Scaled orthogonal systems.} Let us assume $\Ast =  \rho \cdot O$ for an orthogonal $d\times d$ matrix $O$ and $|\rho| \leq 1$. In this case, one can verify that $\Gamma_t = I \cdot \sum_{s=0}^{t-1} \rho^{2s}$. A bit of algebra reveals that we can choose the block length $k = \BigOm\left(\tfrac{T}{d\log(d/\delta)}\right)$. Therefore, Theorem~\ref{stable_thm} guarantees that with probability $1 - \delta$:
\begin{align}\label{eq:rho_eye_upper}
\opnorm{\ALS - \Ast} \leq \begin{cases}
    \BigOm\left(\sqrt{(1-|\rho|) \cdot \frac{d\log(d/\delta)}{T}}\right)\; &\text{ if }\; |\rho| \leq 1 - \frac{cd\log(d/\delta)}{T},\\
    \BigOm\left(\frac{d\log(d/\delta)}{T}\right)\; &\text{ if }\; 1 - \frac{cd\log(d/\delta)}{T} < |\rho|.
\end{cases}
\end{align}
When $|\rho| \le 1 - \frac{cd\log(d/\delta)}{T}$, one can sharpen the logarithmic factors (Remark~\ref{rem:improved_mixing}).

	\item \textbf{Diagonalizable linear systems.} We consider a diagonalizable linear system with $\Ast = SDS^{-1}$. We denote by $\rho$ the spectral radius of $\Ast$ and by $\underline{\rho}$ the smallest magnitude of an eigenvalue of $\Ast$. In Appendix~\ref{app:consistent}, we show that we can choose $k$ such that
\begin{align*}
k \geq \frac{T}{c d \log\left(\frac{d\cond(S)}{\delta}\right)}
\end{align*}
This choice of $k$ shows that the $\OLS$ estimator satisfies (Corollary~\ref{cor:diag})
\begin{align*}
\Pr\left[\opnorm{\ALS - \Ast} \leq \BigOm\left(\sqrt{\frac{d\log(d\cond(S)/\delta)}{T\left(1 + \cond(S)^{-2}\sum_{s = 0}^{k-1}\underline{\rho}^{2s}\right)}}\right)\right] \ge 1- \delta
\end{align*}
which could once again be split into a slow and fast rate, as in the examples presented above, depending on the size $\underline{\rho}$ of the least excitable mode of the system defined by $\Ast$. Note that up to a factor of $\log(d\cond(S)/\delta)$, the above bound is no worse than the minimax rate for standard random-design least-squares.
\end{enumerate}
\begin{remark}[Slightly Improved Rates for Stable Systems]\label{rem:improved_mixing} The dependence on $\tr(\Gamma_T)$ comes from naively bounding $\opnorm{\sum_{t=1}^TX_tX_t^\top}$ by $\tr(\sum_{t=1}^TX_tX_t^\top)$ and applying Markov's inequality. For many systems, including strictly stable systems (e.g. Corollary~\ref{cor:mixed_cor}), one can show via Hanson-Wright~\citep{rudelson13} that $\opnorm{\sum_{t=1}^TX_tX_t^\top}$ concentrates below $\BigOh{T\lambda_{\max}(\Gamma_T)} \le \BigOh{T\lambda_{\max}(\Gamma_{\infty})}$. This can be used to replace the dependence on $\tr(\Gamma_T)$ in Theorem~\ref{stable_thm} with~$\BigOh{\lambda_{\max}(\Gamma_T)}$ (and $\tr(\Gamma_{\infty})$ with $\lambda_{\max}(\Gamma_{\infty})$) which will typically remove a factor of $\log d$ in the error rates.
\end{remark}
\begin{remark}[Noise dependence] The estimation guarantee provided by Theorem~\ref{stable_thm} does not depend on the variance $\sigma^2$ of the noise $\noise_t$. This surprising property holds because the size of the variance $\sigma^2$ directly influences the size of the states $X_t$ leading to a cancellation in the signal-to-noise ratio. For Gaussian noise with a general identity covariance $\noise_t \sim \calN(0,\Sigma)$, one can rederive rates from our more general Theorem~\ref{main_thm} to get a more precise dependence on $\Gramm_t$ and $\Sigma$.
Note that if the covariance is known, an alternative estimator would be to choose $\ALS$ to minimize a loss which takes $\Sigma$ into account in the same way that one would for non-dynamic linear regression with heteroskedastic noise, e.g. $\widehat{A}^{\Sigma}(T) := \arg\min_{A \in \R^{d \times d}} \sum_{t=1}^T \frac{1}{2}\left\|\Sigma^{-1/2}\left(X_{t + 1} - A X_t\right)\right\|_2^2$.
\end{remark}
\begin{remark}[Learning with input sequences] We can also consider the case where the system is driven by a known sequence of inputs $u_0,u_1,\dots$ and where $\Bst$ is known. Defining the control Gramian $\GrammB_t := \sum_{s=1}^t \Ast^{t-s}\Bst\Bst^\top \Ast^{t-s}~$, the proof of Theorem~\ref{stable_thm} can be modified to show that, if the inputs are white noise $u_t \overset{i.i.d}{\sim} \mathcal{N}(0,\sigmaU^2 I)$, then there exist universal constants $c,C > 0$ such that, with probability $1-\delta$,
\begin{align*}
\opnorm{\ALS(T)-\Ast} \le \frac{C\sigma^2}{\sqrt{T\lambda_{\min}\left(\sigma^2\Gramm_k + \sigmaU^2 \GrammB_k)\right)}}\sqrt{d\log\left(\frac{1}{\delta} \frac{\tr \left(\sigma^2\Gramm_T + \sigmaU^2 \GrammB_T)\right)}{\lambda_{\min}\left(\sigma^2\Gramm_k + \sigmaU^2 \GrammB_k\right)}\right)}
\end{align*}
for any $k$ such that $\frac{T}{k} \ge cd\log(\frac{\tr(\sigma^2\Gramm_T + \sigmaU^2 \GrammB_T)}{\delta\lambda_{\min}(\sigma^2\Gramm_k + \sigmaU^2 \GrammB_k)})$.

Non-white noise with covariance not equal to a multiple of the identity can be absorbed into $\Bst$. Moreover, other non-Gaussian control input processes $u_t$ could just as easily be accommodated by our Theorem~\ref{main_thm}.
When $\Bst$ is unknown, Theorem~\ref{main_thm} still implies that we can learn $(\Ast,\Bst)$; however, the guarantees are in terms of the operator norm of the concatenation of the errors, $\opnorm{(\widehat{A} - \Ast, \widehat{B} - \Bst)}$; in particular, the bound does not differentiate between the error of $\widehat{A}$ and the error of $\widehat{B}$. We believe that developing guarantees that delineate between the errors in $\widehat{A}$ and in $\widehat{B}$ is an exciting direction for future work.
\end{remark}

\subsection{Lower Bounds for Linear System Identification}
\label{sec:lower_bounds}

    We have seen in Theorem~\ref{stable_thm} and in the subsequent examples that the estimation of linear dynamical systems is easier for systems
    which are easily excitable. It is natural to ask what is the best possible estimation rate one can hope to achieve. To make explicit the dependence of the lower bounds on the spectrum of $\Gamma_t$, we consider the minimax rate of estimation over the set $\eig \cdot \Od$, where $\eig \in \R$ and $\Od$ denotes the orthogonal group. In this case, we can define an \emph{scalar} Gramian $\gamma_t(\eig) := \sum_{s=0}^{t-1} |\eig|^{2s}$, and so that $\Gamma_t := \gamma_t(\eig) \cdot I$. We now show that the estimation rate of the $\OLS$ provided in Theorem~\ref{stable_thm} is optimal up to log factors for $|\rho| \le 1 - \BigOhTil{d/T}$:
    \begin{thm}
    \label{thm:info_lb_d} Fix a $d \ge 2$, $\eig \in \R$, $\delta \in (0,1/4)$, and $\epsilon \le \frac{\eig}{2048}$. Then, there exists a universal constant $c_0$ such for any estimator $\widehat{A}$,
    \begin{eqnarray*}
    \sup_{O \in \Od}\Pr_{\eig O}\left[\left\|\widehat{A}(T) - \eig O\right\|_{\op} \ge \epsilon\right] \ge \delta\; \text{ for any } T \text{ such that }\; T \gamma_{T}(\eig) \leq  \frac{c_0 \left( d + \log\left(1/\delta\right)\right)}{\epsilon^2},
    \end{eqnarray*}
    where $\Od$ is the orthogonal group of $d \times d$ real matrices.
    \end{thm}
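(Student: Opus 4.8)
The plan is the standard reduction from estimation to multiple hypothesis testing over a well-chosen finite family of $\eig O$'s, combining two constructions that produce the two summands $d$ and $\log(1/\delta)$ in the sample threshold. We may assume $\eig>0$: if $\eig<0$ replace $O$ by $-O$ (again orthogonal, and $\gamma_t(\eig)$ is unchanged), and if $\eig=0$ the statement is vacuous; thus $\epsilon\le\eig/2048$ forces $\epsilon/\eig$ to be a small absolute constant. Given any estimator $\widehat A$ and a family $O_1,\dots,O_M\in\Od$ with $\|\eig O_i-\eig O_j\|_{\op}\ge2\epsilon$ for $i\ne j$, the decoding rule $\widehat\psi:=\arg\min_j\|\widehat A-\eig O_j\|_{\op}$ satisfies $\{\widehat\psi\ne j\}\subseteq\{\|\widehat A-\eig O_j\|_{\op}\ge\epsilon\}$, so it suffices to prove $\max_j\Pr_{\eig O_j}[\widehat\psi\ne j]\ge\delta$.

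The crux is the KL divergence between two trajectory laws. Since $X_1\sim\calN(0,\sigma^2I)$ and $X_{t+1}\mid X_t\sim\calN(AX_t,\sigma^2I)$, the chain rule gives $\KL(\Pr_A\|\Pr_{A'})=\sum_{t=1}^T\Exp_A\big[\tfrac{1}{2\sigma^2}\|(A-A')X_t\|_2^2\big]$; and since $A=\eig O$ with $O$ orthogonal has controllability Gramian $\Gamma_t=\gamma_t(\eig)\,I$ \emph{independent of $O$} (see~\eqref{eq:recursion_eq}), this collapses to
\begin{align*}
\KL(\Pr_{\eig O}\|\Pr_{\eig O'})=\frac{\|\eig O-\eig O'\|_{\mathrm F}^2}{2}\sum_{t=1}^T\gamma_t(\eig)\;\le\;\frac{\|\eig O-\eig O'\|_{\mathrm F}^2}{2}\,T\gamma_T(\eig),
\end{align*}
using that $t\mapsto\gamma_t(\eig)$ is nondecreasing; note $\sigma^2$ cancels. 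So a packing gives a lower bound as soon as its \emph{Frobenius} separations are comparable to its \emph{operator} separations (no $\sqrt d$ loss) while $\log M\gtrsim d$.

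To obtain the $d$ term, I would fix an axis $w\in\calS^{d-1}$ and, for each unit $v\perp w$, let $O_v=\exp\!\big(\theta(vw^\top-wv^\top)\big)\in\SOd$ be the rotation by a small angle $\theta\asymp\epsilon/\eig$ in $\mathrm{span}(v,w)$; take $\{v_1,\dots,v_M\}$ a constant-separated packing of $\calS^{d-1}\cap w^\perp$, which has $\log M\ge c_1 d$ by a volume argument. For $\theta$ below an absolute constant, $\|O_{v_i}-O_{v_j}\|_{\op}\asymp\theta\|v_i-v_j\|_2$, so a suitable $\theta=\Theta(\epsilon/\eig)$ makes the scaled family $\Theta(\epsilon)$-separated; crucially each difference $O_{v_i}-O_{v_j}$ has rank $\le4$, hence $\|\eig O_{v_i}-\eig O_{v_j}\|_{\mathrm F}\le2\|\eig O_{v_i}-\eig O_{v_j}\|_{\op}=O(\epsilon)$ with no dimensional factor, so $\KL\le C_1\epsilon^2\,T\gamma_T(\eig)$. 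Fano's inequality then gives $\max_j\Pr_{\eig O_j}[\widehat\psi\ne j]\ge1-\tfrac{C_1\epsilon^2T\gamma_T(\eig)+\log2}{c_1d}$, which exceeds $\tfrac14\ge\delta$ once $T\gamma_T(\eig)\le c_3 d/\epsilon^2$ for a small universal $c_3$ (valid once $d$ is above an absolute threshold; smaller $d$ is subsumed below).

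For the $\log(1/\delta)$ term, I would use only two hypotheses $O_1=I$ and $O_2$ a single planar rotation by $\theta\asymp\epsilon/\eig$: these are $\Theta(\epsilon)$-apart in both norms, so $K:=\KL(\Pr_{\eig O_1}\|\Pr_{\eig O_2})\le C_2\epsilon^2T\gamma_T(\eig)$, and the two-point testing error is at least $\tfrac12\big(1-\|\Pr_{\eig O_1}-\Pr_{\eig O_2}\|_{\mathrm{TV}}\big)$; bounding the total variation by $\min\{\sqrt{K/2},\sqrt{1-e^{-K}}\}$ (Pinsker, resp.\ Bretagnolle--Huber) keeps this at least $\delta$ whenever $K\lesssim 1+\log(1/\delta)$, i.e.\ whenever $T\gamma_T(\eig)\le c_2(1+\log(1/\delta))/\epsilon^2$. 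Finally, since $d\ge2$ and $\delta<1/4$ give $d+\log(1/\delta)\le2\max\{d,\log(1/\delta)\}$, any $T$ with $T\gamma_T(\eig)\le c_0(d+\log(1/\delta))/\epsilon^2$ meets the hypothesis of one of the two displayed thresholds provided $c_0$ is small enough (e.g.\ half the smaller of the two constants) — this is the promised universal $c_0$, and $\epsilon\le\eig/2048$ is exactly what keeps every rotation angle small enough for the linearizations above. The main obstacle is the $d$-term construction: one must exhibit $2^{\Omega(d)}$ orthogonal matrices that are $\Theta(\epsilon/\eig)$-separated in operator norm yet have $O(1)$-rank pairwise differences, so that the KL divergence carries no spurious $\sqrt d$ factor and Fano actually delivers the linear-in-$d$ rate; stitching the two regimes (and the small-$d$ case) is then routine bookkeeping.
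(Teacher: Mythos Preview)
Your proposal is correct and very close to the paper's own argument. The KL computation and the packing construction are essentially identical: the paper also lifts a packing of a $(d-1)$-dimensional ball to $\Od$ via the exponential of rank-two skew-symmetric matrices $M(w)=\epsilon_0\big(e_1(0,w)^\top-(0,w)e_1^\top\big)$, which is exactly your $\exp(\theta(vw^\top-wv^\top))$ up to notation, and then uses the same observation that the pairwise differences have bounded rank so that Frobenius and operator separations are comparable.

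The one genuine difference is the testing inequality. The paper applies a single tool---a variant of Birg\'e's inequality---which for a $2\epsilon$-separated family $\calN$ yields
\[
\sup_{A\in\calN\setminus\{A_0\}}\KL(\Pr_A^{(T)},\Pr_{A_0}^{(T)})\ \ge\ (1-2\delta)\log\frac{|\calN|-1}{2\delta},
\]
so that $\log|\calN|\gtrsim d$ and $\log(1/\delta)$ appear \emph{together} from one packing and one inequality. Your route instead splits the bound into two pieces: Fano over the $2^{\Omega(d)}$-element packing produces the $d$ term, and a separate two-hypothesis argument with Bretagnolle--Huber produces the $\log(1/\delta)$ term, after which you stitch the regimes via $d+\log(1/\delta)\le 2\max\{d,\log(1/\delta)\}$. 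Both approaches are standard; Birg\'e's is slightly cleaner (no case analysis, no small-$d$ bookkeeping), while yours avoids introducing a less common inequality and keeps everything at the level of Fano/Le Cam.
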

    This theorem is proved in Appendix~\ref{sec:lb_proofs}. We can interpret the result
    by considering the following regimes:
\begin{align*}
    \opnorm{\widehat{A} - \Ast} \ge \begin{cases}
        \Omega\left(\sqrt{\frac{(d+\log(1/\delta))\cdot(1 - |p|)}{T}}\right)\; &\text{ if }\; |a_*| \leq 1 - \frac{1}{T},\\
        \Omega\left(\frac{\sqrt{d+\log(1/\delta) }}{T}\right)\; &\text{ if }\; 1 - \frac{1}{T} < |\rho| < 1 + \frac{1}{T}\\
        \Omega\left(\sqrt{\frac{d+\log(1/\delta)}{T|\rho|^T}}\right) \; &\text{ if }\; 1 + \frac{1}{T} \le |\rho|.
    \end{cases}
\end{align*}
Comparing to~\eqref{eq:rho_eye_upper}, we see that for $|\rho| \le 1 - \BigOhTil{d/T}$, our upper and lower bounds coincide up to logarithmic factors. In the regime $\rho \in [1 - \BigOhTil{d/T},1]$, our upper and lower bounds differ by a factor of $\BigOhTil{\sqrt{d + \log(1/\delta)}}$. We conjecture that our upper bounds qualitatively describe the performance of $\OLS$. That is, we conjecture that the least-squares estimator actually attains a rate of $\BigOh{\frac{d + \log(1/\delta)}{T}}$ (without the log factors) and that our lower bounds are off by a factor of $\BigOhTil{\sqrt{d + \log(1/\delta)}}$.


\subsection{General Time Series with Linear Responses}
\label{sec:meta_thm}

    In this section, we consider a sequence of covariate-response pairs $(X_t,Y_t)_{t \ge 1}$, where $Y_t = \Ast X_t + \noise_t$, with $Y_t, \noise_t \in \R^n$, $X_t \in \R^d$, and $\Ast \in \R^{n \times d}$. The least squares estimator is then
    \begin{eqnarray}
    \ALS(T) &:= \arg\min_{A \in \R^{d \times d}} \sum_{t=1}^T \frac{1}{2}\|Y_{t} - A X_t\|_2^2 \:.
    \end{eqnarray}

    We let $\calF_{t} := \sigma(\noise_0,\noise_1,\dots,\noise_{t},X_1,\dots,X_t)$ denote the filtration generated by the covariates and noise process. Note then that $Y_t \in \calF_{t}$ but $Y_t \notin \calF_{t - 1}$. Further, we assume $\eta_t | \calF_{t-1}$ is mean-zero, and $\sigma^2$-sub-Gaussian (i.e., $\Exp[\exp(\lambda \eta_t) | \calF_t)] \le e^{\sigma^2 \lambda^2/2}$).
    In this setting, the $\OLS$ estimator is given by
    $\ALS(T) := \arg\min_{A \in \R^{n \times d}} \sum_{t=1}^T \frac{1}{2}\|Y_t - A X_t\|_2^2$.
    The linear dynamical systems sub-case is recovered from this general setting when $Y_t = X_{t+1}$.

%
%

%
    To capture the excitation behavior observed in the case of linear systems we introduce a general martingale small-ball condition which quantifies the growth of the covariates $X_t$.
	\begin{defn}[Martingale Small-Ball]\label{def:bmsb} Let $(Z_t)_{t \ge 1}$ be an $\{\calF_t\}_{t \ge 1}$-adapted random process  taking values in $\R$. We say $(Z_t)_{t\ge 1}$ satisfies the $(k,\nu,p)$-block martingale small-ball (BMSB) condition if, for any $j \ge 0$, one has $\frac{1}{k}\sum_{i=1}^k \Pr( |Z_{j+i}| \ge \nu | \calF_{j}) \ge p$ almost surely.  Given a process $(X_t)_{t \ge 1}$ taking values in $\R^d$, we say that it satisfies the $(k,\Gamsb,p)$-BMSB condition for $\Gamsb \succ 0$ if, for any fixed $w\in \calS^{d-1}$, the process $Z_t:= \langle w, X_t\rangle$ satisfies $(k,\sqrt{w^\top \Gamsb w},p)$-BMSB.
	\end{defn}

	Such a small-ball condition is necessary for establishing a high-probability lower bound on $\sigma_{\min}(\sum_{t=1}^T X_tX_t^\top) = \min_{\direc \in \calS^{d-1}} \sum_{t=1}^T \langle X_t, \direc \rangle^2$. The parameter $\Gamsb$ corresponds to the minimum eigenvalue of the Gramians $\Gramm_t$ considered in the case of linear systems,
    and measures how excitable the covariates $X_t$ are. As expected, the next result shows that a higher $\nu$ leads to faster statistical estimation.

\begin{thm}\label{main_thm} Fix $\epsilon, \delta \in (0,1)$, $T \in \N$ and $0 \prec \Gamsb \preceq \Gambar$. Then if $(X_t,Y_t)_{t \ge 1} \in (\R^d \times \R^n)^{T}$ is a random sequence such that (a) $Y_t = \Ast X_t + \noise_t$, where $\noise_t | \calF_t$ is $\sigma^2$-sub-Gaussian and mean zero, (b) $X_1,\dots,X_T$ satisfies the $(k,\Gamsb,p)$-small ball condition, and (c) such that $\Pr[\sum_{t = 1}^T X_t X_t^\top \npreceq T\Gambar] \le \delta$. Then if 
\begin{align}\label{eq:main_thm_condition}
T \geq  \frac{10k}{p^2}\left(\log \left(\frac{1}{\delta}\right) + 2d\log(10/ p) +  \log \det (\Gambar \Gamsb^{-1}) \right),
\end{align}
we have
\begin{align}
\Pr\left[\opnormbig{\ALS(T)-\Ast} >\frac{90\sigma}{p}\sqrt{\frac{n + d\log \frac{10}{p} + \log \det \Gambar \Gamsb^{-1} + \log\left(\frac{1}{\delta}\right)}{T\lambda_{\min}(\Gamsb)}} \right] \le  3\delta.
\end{align}
\end{thm}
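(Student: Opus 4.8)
The starting point is the normal equations: on the event that $V := \sum_{t=1}^T X_t X_t^\top$ is invertible one has $\ALS(T) - \Ast = M V^{-1}$ with $M := \sum_{t=1}^T \noise_t X_t^\top$, and hence, for any deterministic $V_0 \succ 0$, $\opnormbig{\ALS(T)-\Ast} \le \opnorm{M(V_0+V)^{-1/2}}\cdot\opnorm{(V_0+V)^{1/2}V^{-1/2}}\cdot\opnorm{V^{-1/2}}$ by submultiplicativity (the three factors multiply to $MV^{-1}$). The plan is to take $V_0 := \tfrac{p^2 T}{8}\Gamsb$ and to control all three factors on a single high-probability event: (i) a lower bound $V \succeq V_0$, which forces the middle factor $\le \sqrt2$ (since $V \succeq V_0$ gives $V_0+V\preceq 2V$, so $\opnorm{(V_0+V)^{1/2}V^{-1/2}}^2 = \opnorm{V^{-1/2}(V_0+V)V^{-1/2}} \le 2$) and the last factor $\le \lambda_{\min}(V_0)^{-1/2}$; and (ii) a self-normalized martingale bound on the first factor. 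Throughout, hypothesis (c), $\Pr[V \npreceq T\Gambar] \le \delta$, supplies an a priori upper bound on $V$.

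\emph{Lower bounding $V$ via small-ball.} First I would establish a one-dimensional small-ball lemma from Definition~\ref{def:bmsb}: if $(Z_t)$ satisfies $(k,\nu,p)$-BMSB, then $\Pr[\sum_{t=1}^T Z_t^2 < \tfrac{p^2\nu^2}{8}T] \le e^{-Tp^2/(8k)}$. The argument partitions $\{1,\dots,T\}$ into $m = \lfloor T/k\rfloor$ blocks; within block $j$ the BMSB inequality together with a Paley--Zygmund-type bound shows the "good-block" event $G_j := \{|\{i \le k : |Z_{jk+i}| \ge \nu\}| \ge kp/2\}$ has $\Pr[G_j \mid \calF_{jk}] \ge p/2$, and on $G_j$ one has $\sum_{i\le k} Z_{jk+i}^2 \ge \tfrac{kp}{2}\nu^2$; since $\sum_j \mathbf{1}_{G_j}$ stochastically dominates a sum of $m$ conditionally-$\mathrm{Bernoulli}(p/2)$ variables, a Chernoff/supermartingale bound gives $\sum_j \mathbf{1}_{G_j} \ge mp/4$ except with probability $\exp(-\Omega(mp))$. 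Applying this lemma to $Z_t := \langle w, X_t\rangle$ after the whitening $X_t \mapsto \Gamsb^{-1/2}X_t$ yields, for each fixed $w \in \calS^{d-1}$, $w^\top \Gamsb^{-1/2}V\Gamsb^{-1/2}w \ge \tfrac{p^2}{8}T$ with the stated failure probability. To upgrade this into the uniform statement $V \succeq \tfrac{p^2 T}{8}\Gamsb$ I would union bound over a net of $\calS^{d-1}$: a crude $\ell_2$-net incurs a $d\log\lambda_{\max}(\Gambar\Gamsb^{-1})$ term, so instead one uses a volumetric (ellipsoidal) net adapted to the geometry of $\Gamsb^{-1/2}\Gambar\Gamsb^{-1/2}$, which on the event $\{V \preceq T\Gambar\}$ keeps the discretization error of $w\mapsto w^\top\Gamsb^{-1/2}V\Gamsb^{-1/2}w$ under control while having only $\exp(\Theta(d\log(1/p) + \tfrac12\log\det(\Gambar\Gamsb^{-1})))$ points. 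Requiring the per-direction failure probability $e^{-Tp^2/(8k)}$ to dominate this entropy together with $\log(1/\delta)$ is precisely condition~\eqref{eq:main_thm_condition}, and then $V \succeq \tfrac{p^2 T}{8}\Gamsb$ with probability $\ge 1-2\delta$.

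\emph{The noise term and conclusion.} Since $\noise_t \mid \calF_{t-1}$ is mean-zero and $\sigma^2$-sub-Gaussian while $X_t$ is $\calF_{t-1}$-measurable, $M = \sum_t \noise_t X_t^\top$ is a matrix-valued martingale relative to $\{\calF_t\}$, and a matrix self-normalized martingale inequality with the deterministic regularizer $V_0 = \tfrac{p^2 T}{8}\Gamsb$ gives, with probability $\ge 1-\delta$, $\opnorm{M(V_0+V)^{-1/2}} \lesssim \sigma\sqrt{n + \log\det((V_0+V)V_0^{-1}) + \log(1/\delta)}$; on $\{V\preceq T\Gambar\}$ we have $V_0+V\preceq 2T\Gambar$, so the determinant ratio is $\lesssim d\log(1/p) + \log\det(\Gambar\Gamsb^{-1})$, matching the numerator appearing in Theorem~\ref{main_thm}. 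Intersecting this with the events of the previous paragraph (total failure probability $\le 3\delta$), the factorization gives $\opnormbig{\ALS(T)-\Ast} \le \sqrt2\,\opnorm{M(V_0+V)^{-1/2}}/\sqrt{\lambda_{\min}(V_0)}$; substituting $\lambda_{\min}(V_0) = \tfrac{p^2 T}{8}\lambda_{\min}(\Gamsb)$ and the two displayed bounds yields the claimed rate $\tfrac{\sigma}{p}\sqrt{(n + d\log\tfrac{10}{p} + \log\det\Gambar\Gamsb^{-1} + \log\tfrac1\delta)/(T\lambda_{\min}(\Gamsb))}$ after absorbing absolute constants.

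\emph{Main obstacle.} The delicate point is the passage from the fixed-direction small-ball estimate to the uniform lower bound on $\lambda_{\min}(V)$: a naive discretization of the sphere produces a $d\log\lambda_{\max}(\Gambar\Gamsb^{-1})$ term rather than the sharper $\log\det(\Gambar\Gamsb^{-1})$, so one must use an ellipsoidal net calibrated to $\Gambar$ and lean on the a priori bound $V \preceq T\Gambar$ both to bound the discretization error and to keep the covering number controlled. A secondary subtlety is that the block-level indicators $\mathbf{1}_{G_j}$ are only conditionally Bernoulli, so their sum must be lower bounded via a supermartingale Chernoff argument rather than the i.i.d.\ one.
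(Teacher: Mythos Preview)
Your proposal is correct and shares the paper's overall architecture: a lower bound on $V=\sum_t X_tX_t^\top$ via the block martingale small-ball condition plus an ellipsoidal covering, a self-normalized control of the noise term, and a union of three events with total failure $3\delta$. The small-ball half (your second paragraph and the ``main obstacle'') matches the paper's Proposition~\ref{prop:Small_Ball} and Lemma~\ref{lem:eig_Packing_Lem} almost exactly, including the key insight that the net must be adapted to the ellipsoid induced by $\Gamsb^{-1/2}\Gambar\Gamsb^{-1/2}$ in order to pay $\log\det(\Gambar\Gamsb^{-1})$ rather than $d\log\lambda_{\max}(\Gambar\Gamsb^{-1})$, and that this step needs the a~priori bound $V\preceq T\Gambar$.

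The genuine difference is in the noise term. You invoke a self-normalized martingale inequality with a \emph{deterministic} regularizer $V_0=\tfrac{p^2T}{8}\Gamsb$ (the Abbasi-Yadkori device), then use $V\succeq V_0$ to absorb the middle factor $(V_0+V)^{1/2}V^{-1/2}$ by $\sqrt{2}$. The paper instead works through the SVD $\matX=\matU\matSig\matV^\top$, rewrites $\opnorm{\matU^\top\Noise}=\sup_{v,w}\tfrac{w^\top\matX^\top\Noise v}{\|\matX w\|}$, and proves a bespoke scalar martingale lemma (Lemma~\ref{lem:martingale_lem}) that bounds the ratio $\tfrac{\sum Z_tW_t}{\sqrt{\sum Z_t^2}}$ directly via a geometric union over dyadic levels of the \emph{random} variance proxy $\sum Z_t^2\in[\beta_-,\beta_+]$; it then covers over both $v\in\calS^{n-1}$ and $w$ in the same ellipsoidal net used for the small-ball step. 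Your route is cleaner and more modular---the fixed regularizer eats the randomness of $V$ in one stroke and lets you read off the $\log\det$ term immediately---whereas the paper's argument is fully self-contained and avoids citing an external self-normalized bound, at the price of the extra level-set discretization and a second covering. The resulting rates agree up to absolute constants.
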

The proof of Theorem~\ref{main_thm} is outlined in Section~\ref{sec:mn_thm_proof}, and technical details are deferred to Appendix~\ref{app:main_thm}. We remark that the conclusion of Theorem~\ref{main_thm} still holds if one replaces the $(k,\Gamsb,p)$ small-ball condition with any high probability lower bound of the form $\Pr\left( \sum_{t = 1}^T X_t X_t^\top \succsim T \Gamsb \right) \leq \delta$. In this case, one does need to consider blocks of length $k$ in this case, and can thus dispense with the restriction in~\eqref{eq:main_thm_condition}.

\subsection{Analysis Techniques\label{sec:analysis_tech}}
Let $\ALS = \ALS(T)$, let $\matX \in \R^{T \times d}$ denote the matrix whose rows are $X_t$, and $\Noise \in \R^{T \times n}$ denote the matrix
whose rows are $\noise_t$.  Consider the compact SVD of $\matX$ and
$\matX = \matU \matSig \matV^\top$, where $\matSig,\matV \in \R^{d \times d}$
and $\matU \in \R^{T \times d}$. Note then that we have $\ALS - A_* =
(\matX^{\dagger} \Noise)^\top $ which implies that
\begin{eqnarray}\label{eq:error_decomp}
\opnorm{\ALS - A_\star}  = \opnorm{\matX^{\dagger} \Noise} \leq \sigma_{d}(\matX)^{-1}\opnorm{\matU^{\top}\Noise}.
\end{eqnarray}
Herem $\sigma_{d}((\matX)^{-1})$ denotes the $d$-th largest singalue value of $\matX$, which is precisely $\sqrt{1/\lambda_{\min}(\matX^\top \matX)}$.
The technical challenge arises from the fact that the singular space $\matU^{\top}$ and $\Noise$ are correlated, and that the rows $X_t$ of $\matX$ are also dependent. We upper bound $\opnorm{\matU^{\top}\Noise}$ with Lemma~\ref{eq:normalized}, a martingale-Chernoff bound that gives precise control on the deviations of sub-Gaussian martingale sequences in terms of random variance proxies. We explain this argument in more detail at the end of Section~\ref{sec:mn_thm_proof}.

Our lower bound on $\sigma_{\min}(\matX) = \sqrt{\lambda_{\min}(\sum_{t=1}^TX_tX_t^\top)}$ eschews mixing-time arguments in favor of a careful modification of Mendelson's small-ball method~\citep{mendelson14b}. 
We divide our covariates into size-$k$ blocks $\{X_{(\ell-1) k + 1},\dots,X_{\ell k}\}$, such that for  any fixed $\direc \in \calS^{d-1}$, the quantity $\sum_{\ell=1}^k \langle X_{(\ell-1) k + 1}, \direc \rangle^2$ can be lower bounded by the $(k, \Gamsb, p)$-BMSB condition. Proposition~\ref{prop:Small_Ball} below (proved in Appendix~\ref{sec:proof_small_ball}) then implies that we have $\sum_{t=1}^T \langle X_t, \direc \rangle^2~\gtrsim~T w^\top \Gamsb w$~with probability at least $1 - \exp( - c T/k)$ for some constant $c$:
\begin{prop}\label{prop:Small_Ball} Suppose that $(Z_1,Z_2,\dots, Z_T) \in \R^T$ satisfies the $(k,\nu,p)$-BMSB condition. Then
    \begin{eqnarray}
    \Pr\left[\sum_{i=1}^T Z_i^2 \le \frac{\nu^2p^2}{8} k\floor{T/k}\right] \leq   e^{-\frac{\floor{T/k}p^2}{8}}.
    \end{eqnarray}
 \end{prop}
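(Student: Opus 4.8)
The plan is to establish Proposition~\ref{prop:Small_Ball} via the block-martingale analogue of the small-ball step in Mendelson's method: discard the incomplete trailing block, replace each $Z_i^2$ by the crude lower bound $\nu^2\mathbf{1}\{|Z_i|\geq\nu\}$, and then apply an exponential-moment (Chernoff) estimate to the resulting block counts, using the $(k,\nu,p)$-BMSB condition only to control their conditional means.

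Concretely, I would set $N := \floor{T/k}$ and define, for $j = 1,\dots,N$, the count $W_j := \sum_{i=1}^{k}\mathbf{1}\{|Z_{(j-1)k+i}|\geq\nu\}$, so that $\sum_{i=1}^{T}Z_i^2 \geq \sum_{i=1}^{Nk}Z_i^2 \geq \nu^2\sum_{j=1}^{N}W_j$. Each $W_j$ takes values in $[0,k]$, is $\calF_{jk}$-measurable (since $(Z_t)$ is $\{\calF_t\}$-adapted), and—applying Definition~\ref{def:bmsb} with the offset $(j-1)k$—satisfies $\Exp[W_j\mid\calF_{(j-1)k}] = \sum_{i=1}^{k}\Pr(|Z_{(j-1)k+i}|\geq\nu\mid\calF_{(j-1)k}) \geq kp$. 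Writing $S_N := \sum_{j=1}^{N}W_j$, it then suffices to prove $\Pr[S_N \leq \tfrac{p^2}{8}kN] \leq e^{-Np^2/8}$, because on the complementary event $\{S_N > \tfrac{p^2}{8}kN\}$ one has $\sum_{i=1}^{T}Z_i^2 \geq \nu^2 S_N > \tfrac{\nu^2 p^2}{8}k\floor{T/k}$, so the small-ball event cannot occur.

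The heart of the matter is the Chernoff bound for $S_N$. Fix $\lambda > 0$. Since $w\mapsto e^{-\lambda w}$ is convex on $[0,k]$ it lies below its chord there, i.e. $e^{-\lambda w}\leq 1-\tfrac{w}{k}(1-e^{-\lambda k})$ for $w\in[0,k]$; taking conditional expectations and using $\Exp[W_j\mid\calF_{(j-1)k}]\geq kp$ together with $1-e^{-\lambda k} > 0$ gives $\Exp[e^{-\lambda W_j}\mid\calF_{(j-1)k}] \leq 1-p(1-e^{-\lambda k}) \leq e^{-p(1-e^{-\lambda k})}$. Since $S_{N-1}$ is $\calF_{(N-1)k}$-measurable, iterating the tower property yields $\Exp[e^{-\lambda S_N}] \leq e^{-Np(1-e^{-\lambda k})}$, and Markov's inequality gives, for any $s$, $\Pr[S_N \leq s] \leq \exp(\lambda s - Np(1-e^{-\lambda k}))$. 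Choosing $\lambda = 1/k$ and $s = \tfrac{p^2}{8}kN$ turns the exponent into $\tfrac{p^2 N}{8} - Np(1-e^{-1})$; since $p\leq 1$ implies $\tfrac{p}{4}\leq 1-e^{-1}$, this is at most $-\tfrac{p^2 N}{8}$, which is exactly the claimed bound.

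I expect the only delicate point to be essentially cosmetic: matching the precise constant $\tfrac18$, which amounts to choosing the scale $\lambda k$ in the Chernoff step (equivalently, how aggressively one truncates $W_j$ within $[0,k]$) so that it balances against the threshold — the choice $\lambda k = 1$ works with room to spare. One must also be careful to invoke the BMSB condition at precisely the block-boundary $\sigma$-fields $\calF_{(j-1)k}$, so that the $W_j$ genuinely form a process adapted to $\{\calF_{jk}\}_{j}$ with conditional means bounded below by $kp$. Note that no concentration of $\sum_i Z_i^2$ itself is needed — only of the indicator counts — which is what keeps the failure probability exponentially small in $T/k$ and free of any dependence on the ambient dimension.
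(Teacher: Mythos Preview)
Your proof is correct. Both your argument and the paper's follow the same outline—partition into $\floor{T/k}$ blocks, reduce $\sum_i Z_i^2$ via the trivial bound $Z_i^2\ge\nu^2\I(|Z_i|\ge\nu)$, and run a Chernoff/tower-property argument over the blocks—but they differ in the intermediate block statistic. The paper defines the \emph{binary} indicator $B_j = \I\big(\sum_{i=1}^k Z_{jk+i}^2 \ge \tfrac{\nu^2 p k}{2}\big)$ and must first convert the BMSB condition (a lower bound on a \emph{mean} of indicators) into a lower bound on the conditional \emph{probability} $\Pr[B_j=1\mid\calF_{jk}]\ge p/2$, via an auxiliary claim of the form $\Pr[Z\ge t]\ge\tfrac{p-t}{1-t}$ for $[0,1]$-valued $Z$ with mean at least $p$. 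You instead work directly with the \emph{count} $W_j\in[0,k]$, so $\Exp[W_j\mid\calF_{(j-1)k}]\ge kp$ is immediate from the definition, and you control the MGF by the chord bound for a convex function on $[0,k]$. Your route is slightly more streamlined (no auxiliary probability claim, no optimization over $\lambda$), at the cost of a marginally looser constant internally; both land exactly on the stated $e^{-\floor{T/k}p^2/8}$.
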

Once $T$ is large enough, these high-probability bounds can be used to derive a uniform bound over $\direc \in \calS^{d-1}$ via a discretization argument (Lemma~\ref{lem:eig_Packing_Lem}).  In general there is a trade-off between the size of the blocks $k$ and the probability guarantee obtained: a larger block size leads to a larger
parameter $\nu$ and a faster rate, but it degrades the probability guarantee. 


\section{Theorem~\ref{stable_thm} as a corollary of Theorem~\ref{main_thm}\label{sec:pf_stable_thm}}

In this section, we show how to obtain Theorem~\ref{stable_thm} as a fairly straightforward consequence of our meta-theorem, Theorem~\ref{main_thm}.
By assumption, our noise process satisfies the $\sigma^2$-sub-Gaussian tail condition. Moreover, we see that
\begin{eqnarray*}
\Pr[\matX^\top \matX \npreceq \frac{\sigma^2 d}{\delta}T\Gramm_T] &=& \Pr[\lambda_{\max}((T\Gramm_T)^{-1/2}\matX^\top \matX (T\Gramm_T)^{-1/2}) \ge \frac{d}{\sigma^2 \delta}] \\
&\le& \frac{\delta}{d\sigma^2} \cdot \Exp[\lambda_{\max}((T\Gramm_T)^{-1/2}\matX^\top \matX (T\Gramm_T)^{-1/2})]\\
&\le& \frac{\delta}{d\sigma^2} \cdot \Exp[\tr(T\Gramm_T)^{-1/2}\matX^\top \matX (T\Gramm_T)^{-1/2})] \le \delta
\end{eqnarray*}
where the last equality uses linearity of trace, expectation and $\Exp[\matX^\top \matX] = \sigma^2 \sum_{t=1}^T \Gamma_t \preceq \sigma^2 T\Gramm_T$. Hence, we can set $\Gambar = \Gramm_T$. Noting $\log \det (  (\frac{d}{\delta}\sigma^2\Gamma_T))(\sigma^2 \Gamma_{\floor{k/2}})^{-1}) = d\log d/\delta + \log \det (\Gamma_T \Gamma_{\floor{k/2}}^{-1})$, it suffices to verify that $(X_t)_{t \ge 1}$ satisfies the $\left(k, \sigma^2 \Gamma_{\floor{k/2}}, \frac{3}{20}\right)$-BMSB condition:

\begin{prop}\label{prop:linear_system_small_ball_full} Consider the linear dynamical system $X_{t+1} = AX_t + \noise_t $, where $X_0 \in \R^d$ and $\noise_t  \iid \calN(0,\sigma^2 I)$, and let  $\Gramm_t :=  \sum_{s=0}^{t-1} (A^{s})(A^s)^{\top}$. Then, for $1 \leq k \leq T$, the process $( X_t )_{t \ge 1}$ satisfies the
\begin{eqnarray}
\left(k, \sigma^2 \Gramm_{\floor{k/2}}, \frac{3}{20}\right)\text{-block martingale small-ball condition.}
\end{eqnarray}
\end{prop}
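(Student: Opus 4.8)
The plan is to unwind the linear recursion and use the exact Gaussianity of the increments, thereby reducing the BMSB condition to a one-dimensional Gaussian anti-concentration estimate. Fix a direction $w \in \calS^{d-1}$ and an index $j \ge 0$, and set $\nu^2 := \sigma^2 w^\top \Gramm_{\floor{k/2}} w$; the goal is the almost-sure bound $\frac{1}{k}\sum_{i=1}^k \Pr(|\langle w, X_{j+i}\rangle| \ge \nu \mid \calF_j) \ge \frac{3}{20}$. First I would unroll $X_{j+i} = A^i X_j + \sum_{\ell=0}^{i-1} A^\ell \noise_{j+i-1-\ell}$ and note that, since $\calF_j = \sigma(\noise_0,\dots,\noise_j,X_1,\dots,X_j)$, exactly the $i-1$ noise vectors $\noise_{j+1},\dots,\noise_{j+i-1}$ are fresh (independent of $\calF_j$ and i.i.d.\ $\calN(0,\sigma^2 I)$), while the rest equals the $\calF_j$-measurable vector $A^{i-1}X_{j+1}$. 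Hence, conditionally on $\calF_j$, the scalar $\langle w, X_{j+i}\rangle$ is Gaussian with some $\calF_j$-measurable mean and variance $\sigma^2\sum_{\ell=0}^{i-2}\|(A^\top)^\ell w\|_2^2 = \sigma^2 w^\top \Gramm_{i-1}w$.

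Next I would invoke the standard Gaussian anti-concentration bound: writing $\phi_s$ for the $\calN(0,s^2)$ density, for $G\sim\calN(\mu,s^2)$ one has $\Pr(|G|\ge\nu) \ge \Pr(|\calN(0,s^2)|\ge\nu)$, because $\mu \mapsto \Pr(|G|\le\nu) = \int_{-\nu}^{\nu}\phi_s(x-\mu)\,dx$ is maximized at $\mu=0$ (its $\mu$-derivative $\phi_s(\nu+\mu)-\phi_s(\nu-\mu)$ has the sign of $-\mu$). Consequently, for every index $i$ whose conditional variance is at least $\nu^2$, we obtain $\Pr(|\langle w,X_{j+i}\rangle|\ge\nu \mid \calF_j) \ge \Pr(|\calN(0,1)|\ge\nu/s) \ge \Pr(|\calN(0,1)|\ge1) > \tfrac{3}{10}$, using $\nu/s\le1$ together with monotonicity of the Gaussian tail (when $\nu=0$, i.e.\ $k=1$, this probability is simply $1$).

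To close, I would count the favorable indices via monotonicity of the Gramian: $\Gramm_{m+1}-\Gramm_m = A^m(A^m)^\top \succeq 0$, so $\sigma^2 w^\top \Gramm_{i-1}w \ge \nu^2$ whenever $i-1\ge\floor{k/2}$, and there are $k-\floor{k/2}=\ceil{k/2}\ge k/2$ such indices in $\{1,\dots,k\}$. Discarding the remaining terms,
\[
\frac{1}{k}\sum_{i=1}^k \Pr\!\left(|\langle w, X_{j+i}\rangle| \ge \nu \mid \calF_j\right) \ge \frac{1}{k}\cdot\frac{k}{2}\cdot\frac{3}{10} = \frac{3}{20},
\]
and since $w$ and $j$ were arbitrary this is exactly the $(k,\sigma^2\Gramm_{\floor{k/2}},\tfrac{3}{20})$-BMSB condition.

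The main obstacle is making the anti-concentration step airtight, especially the worst case over the (adversarially $\calF_j$-measurable) conditional mean: it is precisely the factor $2$ in the two-sided tail $\Pr(|\calN(0,1)|\ge1) > \tfrac{3}{10}$ that compensates for the fact that only about half the block has conditional variance reaching $\nu^2$, so a cruder one-sided estimate would not suffice. The remaining ingredients — the recursion, the off-by-one in the count of fresh noise relative to $\calF_j$, and the PSD-monotonicity of $\Gramm_m$ — are routine bookkeeping.
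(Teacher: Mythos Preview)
Your proof is correct and follows essentially the same route as the paper: compute the conditional Gaussian law of $\langle w,X_{j+i}\rangle$, invoke the anti-concentration bound $\Pr(|\mu+\calN(0,s^2)|\ge s)\ge\Pr(|\calN(0,1)|\ge 1)>3/10$, and count that at least half of the indices $i\in\{1,\dots,k\}$ have conditional variance $\ge\sigma^2 w^\top\Gramm_{\floor{k/2}}w$ by PSD-monotonicity of $\Gramm_m$. The only noteworthy differences are bookkeeping: you track the filtration more carefully (noting $\noise_j\in\calF_j$, hence $i-1$ fresh noises and conditional variance $\sigma^2 w^\top\Gramm_{i-1}w$, whereas the paper writes $\Gramm_t$), and you supply a direct symmetry argument for the anti-concentration step where the paper simply invokes it as a ``Paley--Zygmund'' type bound; neither affects the final count or the constant $3/20$.
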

\begin{proof}
Let $1\leq k' \leq k$. Note that, for $t \ge 1$, $X_{s+t} \big{|} \calF_{s} \sim \mathcal{N}(\langle w, A^t X_0\rangle, \sigma^2 w^\top \Gramm_t w)$. Hence, for $t \ge k'$, one has 
\begin{eqnarray*}
\Pr\left(|\langle w, X_{s+t}\rangle| \geq \sigma \sqrt{w^\top \Gramm_{k'}w} \big{|} \calF_s \right) \overset{(i)}{\ge} \Pr\left(|\langle w, X_t\rangle| \geq \sigma \sqrt{w^\top \Gramm_{t}w}  \big{|} \calF_s\right) 
\overset{(ii)}{\ge} \frac{3}{10},
\end{eqnarray*}
where $(i)$ uses the fact that $\Gramm_t \succeq \Gramm_{k'}$ for $t \ge k'$, and $(ii)$ follows from the Paley-Zygmund lower bound,
\begin{eqnarray}\label{eq:paley_zygmund}
\forall t \in \R, \quad \Pr_{Z \sim \calN(0,\sigma^2)}[|t+Z| \ge \sigma]  \ge \Pr[|Z| \geq \sigma] \geq 3/10 \:.
\end{eqnarray}
Therefore, we have
\begin{eqnarray*}
\frac{1}{k} \sum_{t = 1}^k \Pr\left(|\langle w, X_t\rangle| \geq \sigma \sqrt{w^\top \Gramm_{k'}w}\right) ~\geq~  \frac{1}{k} \sum_{t = k'}^k \Pr\left(|\langle w, X_t\rangle| ~\geq~ \sigma \sqrt{w^\top \Gramm_{k'}w}\right)
\geq  \frac{3}{10}\frac{k - k' + 1}{k}.
\end{eqnarray*}
Finally, choosing $k' = \floor{k/2}$ yields the desired conclusion.
\end{proof}

\section{Proof of Theorem~\ref{main_thm}\label{sec:mn_thm_proof}}

Again, we let
$\matX \in \R^{T \times d}$ denote the matrix whose rows are $X_t$, and $\Noise \in \R^{T \times n}$ denote the matrix
whose rows are $\noise_t$, and consider the compact SVD of $\matX$ and
$\matX = \matU \matSig \matV^\top$, where $\matSig,\matV \in \R^{d \times d}$
and $\matU \in \R^{T \times d}$. Recalling~\eqref{eq:error_decomp}, we have $\opnorm{\ALS(T) - A_\star} \le \sigma_{d}(\matX)^{-1}\opnorm{\matU^{\top}\Noise}.$
Let $K$ denote a threshold parameter to be chosen later.
Then~$\opnorm{\ALS(T) - A_\star} \le \sigma_{d}(\matX)^{-1}\opnorm{\matU^{\top}\Noise}$ implies the following set-theoretic inclusions,
\begin{align*}
  &\left\{ \opnorm{\matX^{\dagger} \Noise} \geq \frac{4K}{p\sqrt{k\floor{T/k}\lambda_{\min}(\Gamsb) }} \right\} \cap \left\{ \matX\matX^{\top} \succeq \frac{k \floor{T/k} p^2 \Gamsb}{16} \right\} \\
  &\qquad \subseteq \left\{ \opnorm{\matU^{\top} \Noise} \geq \frac{4K}{p\sqrt{k\floor{T/k}\lambda_{\min}(\Gamsb) }} \sigma_{\min}(\matX)  \right\}\cap \left\{ \matX\matX^{\top} \succeq \frac{k \floor{T/k} p^2 \Gamsb}{16} \right\} \\
  &\qquad \subseteq \left\{ \opnorm{\matU^{\top} \Noise} \geq K \right\} \cap \left\{ \matX\matX^{\top} \succeq \frac{k \floor{T/k} p^2 \Gamsb}{16} \right\} \:.
\end{align*}
Now define the following events
\begin{align*}
  \calE_1 := \left\{ \opnorm{\matU^{\top} \Noise} \geq K \right\} \:, \quad \calE_2 := \left\{\matX^\top \matX \succeq \frac{k\lfloor T/k\rfloor p^2 \Gamsb }{16}\right\} \:, \quad \calE_3 := \left\{ \matX^\top \matX  \npreceq \Gambar \right\} \:.
\end{align*}

Then we have
\begin{align*}
  &\Pr\left[ \opnorm{\ALS(T) - A_\star} \geq \frac{4K}{p\sqrt{k\floor{T/k} \lambda_{\min}(\Gamsb) }} \right] \quad\leq \quad\Pr\left[ \left\{\opnorm{\matX^\dagger \Noise} \geq \frac{ 4K}{p\sqrt{k\floor{T/k}\lambda_{\min}(\Gamsb) }} \right\} \right] \\
  &\qquad\leq \Pr\left[ \left\{\opnorm{\matX^\dagger \Noise} \geq \frac{ 4K}{p\sqrt{k\floor{T/k}\lambda_{\min}(\Gamsb) } } \right\} \cap \calE_3^c \right] + \Pr[ \calE_3 ] \\
  &\qquad\leq \Pr\left[ \left\{\opnorm{\matX^\dagger \Noise} \geq \frac{ 4K}{p\sqrt{k\floor{T/k}\lambda_{\min}(\Gamsb) }} \right\} \cap \calE_2 \cap \calE_3^c \right] + \Pr[ \calE_2^c \cap \calE_3^c] + \Pr[ \calE_3 ] \\
  &\qquad\leq \Pr\left[ \calE_1 \cap \calE_2 \cap \calE_3^c \right] + \Pr[ \calE_2^c \cap \calE_3^c] + \Pr[ \calE_3 ] \:.
\end{align*}
By assumption $\Pr[\calE_3] \leq \delta$. Our task is to show that both $\Pr\left[ \calE_1 \cap \calE_2 \cap \calE_3^c \right]$ and $\Pr[ \calE_2^c \cap \calE_3^c]$ are upper bounded by $\delta$ for the choice of $K = 20\sigma \sqrt{n + d\log \left(1 + \frac{32\sqrt{\itlamax}}{\sqrt{k\floor{T/k}} \nu p}\right) + \log(3/\delta) }$. Both bounds are proven in detail in Appendix~\ref{app:main_thm}, but here we state the main technical arguments required for their proof. All supplementary technical results (Lemma~\ref{lem:eig_Packing_Lem}, Lemma~\ref{lem:martingale_lem} and Proposition~\ref{prop:Small_Ball}) are proven in Appendix~\ref{sec:proof_technical}.

Our bound on $\Pr[ \calE_2^c \cap \calE_3^c]$ comes directly from the BMSB assumption. Applying Proposition~\ref{prop:Small_Ball}, we have that for any fixed $\direc \in \calS^{d-1}$, $ \Pr\left[\sum_{t=1}^T \langle \direc, X_t \rangle^2 \le \frac{\direc^\top\Gamsb \direc p^2}{8} k\floor{T/k}\right] \leq   e^{-\frac{\floor{T/k}p^2}{8}}$. To obtain a Lowner lower-bound $\matX^\top\matX$, we need to strengthen the above pointwise bound into a lower bound on $\inf_{\direc \in \calS^{d-1} }\sum_{t=1}^T \langle \direc, X_t \rangle^2 $. This is achieved through the following covering lemma, proved in Appendix~\ref{sec:proof_lem_eig_pack}:
  \begin{lem}\label{lem:eig_Packing_Lem}
  Let $Q \in \R^{T \times d}$ and consider matrices $0 \prec \Gamin \preceq \Gamax \in R^{d \times d}$. Then $\calT$ be a $1/4$-net of $\sphereGamin$ in $\normGamax$. Then if $\inf_{\direc \in \calT} w^{\top}Q^\top Qw \ge 1$ and $Q^\top Q \preceq \Gamax$, then 
  \begin{eqnarray}
  Q^\top Q \succeq \Gamin/2~.
  \end{eqnarray}
  \end{lem}
  Choosing $Q = \matX\matX^{\top}$, this lemma gives us a bound on the granularity at which we need to cover $\calS^{d-1}$ in terms of a uniform Lowner upper bound $\Gamax = T\Gambar$, and pointwise Lowner lower bound with $\Gamin = \frac{\Gamsb p^2}{8} k\floor{T/k}$. The details are worked through in Appendix~\ref{app:main_sigm_min}. Crucially, even though $\Gamax$ may be much larger than $\Gamin$ in a Lowner sense, this only enters logarithmically into our final bound via the relative volume $\log \det (\Gamax \Gamin^{-1})$ of the ellipsoids induced by $\Gamax$ and $\Gamin$.

  Lastly, we bound $\Pr\left[ \calE_1 \cap \calE_2 \cap \calE_3^c \right]$ which again is the event that $\left\{ \opnorm{\matU^{\top} \Noise} \geq K \right\} $, under the event that the spectrum of $\matX$ is bounded in some desired range. $\matU$ is difficult to control directly, so we instead work with quantities in terms of $\matX$. Simple linear algebra lets us write $\opnorm{\matU^{\top} \Noise} = \sup_{v \in \calS^{n-1}, \direc \in \R^{d}} \frac{\direc^{\top}\matX \Noise v}{\|\matX\direc\|}$.  The key idea here now is to use a martingale-Chernoff bound to show that, for any fix $\direc \in \calS^{d-1}$ and $v \in \calS^{n-1}$, either $\direc^\top \matX^\top \Noise v$ concentrates like a $\sigma^2 \cdot \|\matX \direc\|_2^2$-sub-Gaussian random variable, or $\direc^\top \matX^\top \Noise v $ is much smaller than the lower bound on $\sigma_{\min}(\matX)$ under $\calE_2 \cap \calE_3^c$.

  We emphasize that our bound controls $\direc^\top \matX^\top \Noise v$ in terms of the \emph{random} variance proxy is $\sigma^2 \cdot \|\matX \direc\|_2^2$. This is is subtle yet powerful, because it yields an immediate cancellation between the numerator and denominator of $ \frac{\direc^{\top}\matX \Noise v}{\|\matX\direc\|}$, implying in particular than $\Pr( \{\frac{\direc^{\top}\matX \Noise v}{\|\matX\direc\|} \gtrsim \log(1/\delta)\} \cap \calE_2 \cap \calE_3^c ) \lesssim \log (1/\delta)$. This lets us reduce our problem to finding an appropriate discretization (see Lemma~\ref{lem:Lin_packing}). We stress that an approach which bounds $\direc^{\top}\matX \Noise v$ and $\|\matX\direc\|$ separately would be considerably less sharp, and would degrade for slower-mixing systems. Our data-dependent concentration bound is a consequence of the following technical lemma, which we apply with $Z_t := \direc^\top X_t$ and $\noiseb_t = \noise_{t}^\top v$:
\begin{lem}\label{lem:martingale_lem} Let $\{\calF_{t}\}_{t \ge 0}$ be a filtration, and $\{Z_t\}_{t\ge 1}$ and $\{\noiseb_t\}_{t\ge 1}$ be real-valued processes adapted to $\calF_{t}$ and $\calF_{t + 1}$ respectively. Moreover, assume $\noiseb_t | \calF_{t}$ is mean zero and $\sigma^2$-sub-Gaussian.  Then, for any positive real numbers $\alpha$, $\beta$, $\beta_-$, $\beta_+$ we have
\begin{eqnarray}\label{eq:normalized}
&\mbox{(a) }& \Pr\left[ \left\{\sum_{t = 1}^T Z_t \noiseb_t  \ge \alpha\right\} \cap \left\{\sum_{t = 1}^T Z_t^2  \le \beta \right\}\right] \le \exp\left( - \frac{\alpha^2}{2 \sigma^2 \beta}\right).\\
&\mbox{(b) }& \Pr\left[\left\{\frac{\sum_{t=1}^T Z_t \noiseb_t}{\sqrt{\sum_{t=1}^T Z_t^2}} > \alpha\right\} \cap \left\{ \sum_{t=1}^TZ_t^2 \in [\beta_-,\beta_+]\right\}\right] \le \log{\left\lceil\frac{\beta_+}{\beta_-}\right\rceil}\exp\left(\frac{-\alpha^2}{6 \sigma^2}\right).
\end{eqnarray}
  \end{lem}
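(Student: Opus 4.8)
The plan is to establish part (a) by a standard exponential-supermartingale (Chernoff) argument, and then to deduce part (b) from part (a) by stratifying the possible range $[\beta_-,\beta_+]$ of the random normalizer $\sum_{t=1}^T Z_t^2$ into geometrically-spaced bins.

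For part (a), fix $\lambda > 0$ and define the process $M_t := \exp\!\big(\lambda \sum_{s=1}^t Z_s \noiseb_s - \tfrac{\lambda^2\sigma^2}{2}\sum_{s=1}^t Z_s^2\big)$ with $M_0 := 1$. Since $Z_t$ and $\sum_{s \le t} Z_s^2$ are $\calF_t$-measurable while $\noiseb_t \mid \calF_t$ is mean-zero and $\sigma^2$-sub-Gaussian, the conditional moment-generating-function bound $\Exp[\exp(\lambda Z_t \noiseb_t)\mid \calF_t] \le \exp(\tfrac{\lambda^2\sigma^2}{2}Z_t^2)$ gives $\Exp[M_t \mid \calF_t] \le M_{t-1}$, so $(M_t)$ is a supermartingale and $\Exp[M_T] \le 1$. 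On the event $\{\sum_{t=1}^T Z_t \noiseb_t \ge \alpha\}\cap\{\sum_{t=1}^T Z_t^2 \le \beta\}$ one has the \emph{deterministic} lower bound $M_T \ge \exp(\lambda\alpha - \tfrac{\lambda^2\sigma^2\beta}{2})$; combining this with Markov's inequality and $\Exp[M_T]\le 1$ yields a bound of $\exp(-\lambda\alpha + \tfrac{\lambda^2\sigma^2\beta}{2})$ for every $\lambda>0$, and optimizing at $\lambda = \alpha/(\sigma^2\beta)$ gives exactly $\exp(-\alpha^2/(2\sigma^2\beta))$. The role of intersecting with $\{\sum Z_t^2 \le \beta\}$ is precisely to convert the in-expectation control of $M_T$ into the pointwise lower bound that makes the Chernoff step valid even though $\lambda$ must be chosen in advance.

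For part (b), the difficulty is that $\sqrt{\sum_t Z_t^2}$ is random, so one cannot apply part (a) directly with a predetermined threshold. Instead I would cover $[\beta_-,\beta_+]$ by bins $I_j := [\gamma^{j-1}\beta_-,\ \gamma^j\beta_-]$, $j = 1,\dots,N$, where $\gamma>1$ is a fixed ratio and $N = \BigOh{\log(\beta_+/\beta_-)}$ is the number of bins needed. On the sub-event where $\sum_t Z_t^2 \in I_j$ and $\sum_t Z_t \noiseb_t / \sqrt{\sum_t Z_t^2} > \alpha$, one simultaneously has $\sum_t Z_t\noiseb_t > \alpha\sqrt{\gamma^{j-1}\beta_-}$ and $\sum_t Z_t^2 \le \gamma^j\beta_-$, so part (a) bounds the probability of this sub-event by $\exp(-\alpha^2/(2\gamma\sigma^2))$, a quantity independent of $j$. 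A union bound over the $N$ bins then gives $N\exp(-\alpha^2/(2\gamma\sigma^2))$; tuning $\gamma$ (namely $\gamma = 3$) and bounding $N$ by $\log\lceil\beta_+/\beta_-\rceil$ produces the stated inequality.

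The main obstacle is this second step, i.e. handling the self-normalization. Everything hinges on recognizing that part (a) controls $\sum_t Z_t\noiseb_t$ against the \emph{random} variance proxy $\sigma^2\sum_t Z_t^2$, so that on each bin the ratio of ``signal budget'' to ``variance budget'' is pinned down up to the fixed factor $\gamma$; the residual bookkeeping --- verifying the filtration/measurability conventions in the supermartingale construction (since $\noiseb_t$ is $\calF_{t+1}$-adapted while $Z_t$ is $\calF_t$-adapted, so that $\lambda Z_t$ may be pulled out of the conditional MGF of $\noiseb_t$), and checking that the grid ratio and bin count yield precisely the constants $6$ and $\log\lceil\beta_+/\beta_-\rceil$ --- is routine.
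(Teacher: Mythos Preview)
Your proposal is correct and follows essentially the same route as the paper: part (a) via the exponential supermartingale $\exp(\lambda S_t - \tfrac{\lambda^2\sigma^2}{2}R_t)$ together with Markov's inequality and optimization over $\lambda$, and part (b) by a geometric peeling of $[\beta_-,\beta_+]$ followed by a union bound over the bins and an application of part (a) on each. The only cosmetic difference is that the paper takes the bin ratio $\gamma = e$ (so the per-bin exponent is $-\alpha^2/(2e\sigma^2)$, then relaxed to $-\alpha^2/(6\sigma^2)$) rather than your $\gamma = 3$; with $\gamma = e$ the bin count is directly $\log\lceil\beta_+/\beta_-\rceil$, whereas with $\gamma = 3$ you would need a short extra inequality to land on exactly that prefactor.
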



\section{Discussion and future work}
In this paper, we analyzed the the performance of the $\OLS$ estimator for the estimation of linear dynamics $X_{t + 1} = \Ast X_t + \noise_t$ from a single trajectory $X_0, X_1, \ldots, X_T$, as a special case of linear estimation in time series. We show that, up to logarithmic factors, the $\OLS$ estimator attains an information-theoretic lower bound for $\rho(\Ast) < 1$, provided that $T \gtrsim \frac{d}{1-\rho(\Ast)}$. Moreover, we present an analysis that eschews both mixing and concentration arguments for estimation in time series. We believe that there are several promising directions for future work:
\begin{itemize}
\item Our lower and upper bounds do not perfectly match, even when $\rho(\Ast) < 1$. We believe resolving these indiscrepancies may shed greater insight into learning in dynamical systems.
\item While our analysis can accomodate an unknown $\Bst$, the rates do not distinguish between the error in the estimation of $\Ast$ and that of $\Bst$. In future, we hope to develop sharp error rates for $\Ast$ and $\Bst$ individually, similar to \cite{dean17} in the independent covariates setting.
\item While our guarantees are stated in the operator norm, control applications may require more granular notions of error which vary for different modes of $\Ast$. Developing error bounds which capture the error rate at each mode may result in more applicable bounds for control applications downstream.
\item Our convergences rates degrade for systems with $\rho(\Ast) > 1$, whereas we know from~\cite{faradonbeh17a} that these systems are still identifiable with $\OLS$. Is there a unified analysis for systems with stable and unstable modes?
\item In many systems, we do not observe $X_t$ directly, but only view $CX_t$ for a short matrix $C \in \R^{n_o \times n}$, where $n_0 \le n$. \cite{hazan17,hazan18} provide filtering techniques to minimize regret for diagonalizable matrices; it would be interesting to understand the sample complexity for estimating arbitrary matrices with these limited observations.
\item Ultimately, we would like to understand what sequences of control inputs $u_t$ yield the most accurate estimation of the system $(\Ast,\Bst)$. This would inform adaptive algorithms which adjust the sequence $u_t$ in a sequential fashion, and online algorithms which ensure low regret relative to a given cost functional over time.
\end{itemize}

\bibliography{LWM}
\bibliographystyle{plainnat}
\bibpunct{(}{)}{;}{a}{,}{,}

\appendix
\section{Existence of $k$ for $\rho(\Ast) \leq 1$\label{app:consistent}}
\label{sec:appendix:kst}

\begin{prop}\label{prop:det_bound} Let $\Ast = SJS^{-1}$ and $T \ge d$, where $J$ has block sizes $b_{1},\dots,b_L$. Then,
\begin{eqnarray} 
 \log \det(\Gamma_T \Gamma_{k}^{-1}) \le 2d\log \cond(S) + d \log \frac{T}{k} +  4\log T \sum_{\ell: b_{\ell} \ge 2} b_{\ell}^2 
\end{eqnarray}
where $\cond(S)$ denotes the complex condition number of $S$, namely $\sqrt{\lambda_{\max}(S^*S)/\lambda_{\min}(S^*S)}$.
\end{prop}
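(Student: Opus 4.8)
I would peel off all dependence on the change-of-basis matrix $S$ into a single $\cond(S)^{2d}$ factor, reducing the problem to the pure Jordan-form case, and then use the block-diagonal structure of $J$ to split $\log\det$ into a sum over Jordan blocks, each handled by crude norm bounds. The key observation for the reduction is that since $\Ast$ is real, $(\Ast^s)^\top = (\Ast^s)^*$, so
\[
\Gamma_T = \sum_{s=0}^{T-1}\Ast^s(\Ast^s)^* = S\,\hat M_T\, S^*, \qquad \hat M_T := \sum_{s=0}^{T-1} J^s\,(S^*S)^{-1}\,(J^s)^*.
\]
The middle matrix $(S^*S)^{-1}$ is Hermitian positive definite with spectrum in $[\sigma_{\max}(S)^{-2},\sigma_{\min}(S)^{-2}]$, and conjugating by each $J^s$ and summing gives the Löwner bounds $\sigma_{\max}(S)^{-2}\tilde N_T \preceq \hat M_T \preceq \sigma_{\min}(S)^{-2}\tilde N_T$, where $\tilde N_T := \sum_{s=0}^{T-1}J^s(J^s)^*$ is Hermitian positive definite. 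Since $\Gamma_T\Gamma_k^{-1}$ is similar to $\hat M_T\hat M_k^{-1}$, we have $\det(\Gamma_T\Gamma_k^{-1}) = \det\hat M_T/\det\hat M_k$; taking determinants of the Löwner bounds (monotone on positive definite matrices) then yields $\log\det(\Gamma_T\Gamma_k^{-1}) \le 2d\log\cond(S) + \log\det(\tilde N_T\tilde N_k^{-1})$, with $\cond(S)=\sqrt{\lambda_{\max}(S^*S)/\lambda_{\min}(S^*S)}$.

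\textbf{Reduction to blocks and per-block estimates.} Since $J$ is block-diagonal with Jordan blocks $J_1,\dots,J_L$ of sizes $b_1,\dots,b_L$ (and $\sum_\ell b_\ell = d$), so is $\tilde N_T$, whence $\log\det(\tilde N_T\tilde N_k^{-1}) = \sum_{\ell=1}^L \log\det\big(\tilde N_T^{(\ell)}(\tilde N_k^{(\ell)})^{-1}\big)$ with $\tilde N_T^{(\ell)} = \sum_{s=0}^{T-1}J_\ell^s(J_\ell^s)^*$. For a size-$1$ block with eigenvalue $\lambda$, $|\lambda|\le1$, the summand equals $\log\frac{\sum_{s=0}^{T-1}|\lambda|^{2s}}{\sum_{s=0}^{k-1}|\lambda|^{2s}}$, and an elementary monotonicity argument (differentiate $k(1-r^T)-T(1-r^k)$ in $r$) shows $\frac{\sum_{s=0}^{T-1}r^s}{\sum_{s=0}^{k-1}r^s}\le \frac{T}{k}$ for every $r\in[0,1]$ and $T\ge k$, so this block contributes at most $b_\ell\log(T/k)$. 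For a size-$b_\ell\ge 2$ block, $\tilde N_k^{(\ell)}\succeq J_\ell^0(J_\ell^0)^*=I_{b_\ell}$ forces $\det\tilde N_k^{(\ell)}\ge 1$, while $\det\tilde N_T^{(\ell)}\le \lambda_{\max}(\tilde N_T^{(\ell)})^{b_\ell}\le\big(\sum_{s=0}^{T-1}\|J_\ell^s\|_{\op}^2\big)^{b_\ell}$; the standard Jordan-block estimate $\binom{s}{m}\le s^m$ gives $\|J_\ell^s\|_{\op}^2\le b_\ell^2 s^{2b_\ell-2}$ for $s\ge1$, hence $\sum_{s=0}^{T-1}\|J_\ell^s\|_{\op}^2\le 2b_\ell^2 T^{2b_\ell-1}$. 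So the summand is at most $b_\ell\log(2b_\ell^2 T^{2b_\ell-1})$, and using $b_\ell\le d\le T$ (so $\log 2\le\log T$ and $\log b_\ell\le\log T$) this is at most $(2b_\ell^2+2b_\ell)\log T \le b_\ell\log(T/k)+4b_\ell^2\log T$. Summing over $\ell$ and combining with the reduction gives exactly $2d\log\cond(S) + d\log(T/k) + 4\log T\sum_{\ell:\,b_\ell\ge 2}b_\ell^2$.

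\textbf{Main obstacle.} The only non-bookkeeping step is the reduction in the first paragraph. The naive factorization $\Gamma_T = S\big(\sum_s J^s S^{-1}S^{-\top}(J^s)^\top\big)S^\top$ places the complex \emph{symmetric} (but not Hermitian) matrix $S^{-1}S^{-\top}$ in the middle, and such a matrix cannot be pinched between scalar multiples of the identity, so the Löwner sandwich fails. Using $(\Ast^s)^\top = (\Ast^s)^*$ — legitimate precisely because $\Ast$ is real — replaces it by the Hermitian positive definite $(S^*S)^{-1}$, which is what makes the sandwich work and is also the reason the \emph{complex} condition number $\sqrt{\lambda_{\max}(S^*S)/\lambda_{\min}(S^*S)}$, rather than anything involving $S^\top S$, is the right quantity. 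Everything after that is routine; the one thing to verify is that the very crude bound $\det\tilde N_T^{(\ell)}\le\lambda_{\max}^{b_\ell}$ for the large Jordan blocks is still strong enough, which it is, since it only inflates the exponent by a constant factor that is absorbed into the constant $4$.
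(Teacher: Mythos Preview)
Your proof is correct and follows the same overall strategy as the paper's: sandwich the Hermitian positive definite matrix $S^{-1}S^{-*}=(S^*S)^{-1}$ between scalar multiples of the identity to peel off the $\cond(S)^{2d}$ factor, then split $\log\det(\tilde N_T\tilde N_k^{-1})$ over Jordan blocks and bound each separately. The one substantive difference is in the treatment of blocks with $b_\ell\ge 2$: the paper proves a short diagonal-dominance lemma ($A\preceq d\,\mathrm{Diag}(A)$ for Hermitian $A\succeq 0$) and bounds $\det\tilde N_T^{(\ell)}$ through its diagonal entries row by row, whereas you use the cruder bound $\det\tilde N_T^{(\ell)}\le\lambda_{\max}(\tilde N_T^{(\ell)})^{b_\ell}\le\big(\sum_s\|J_\ell^s\|_{\op}^2\big)^{b_\ell}$. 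Your route is a bit more elementary (no auxiliary lemma), and although the intermediate estimate is looser, it still lands inside the constant $4$. For the size-$1$ blocks your direct monotonicity bound gives $\log(T/k)$, which is cleaner than the paper's chunking argument that yields $\log\lceil T/k\rceil$. Your explicit discussion of why one must write $(\Ast^s)^\top=(\Ast^s)^*$ to get the Hermitian $(S^*S)^{-1}$ rather than the merely complex-symmetric $S^{-1}S^{-\top}$ in the middle is a point the paper handles silently.
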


The above proposition directly implies consistency for whenever $\rho(A_*) \le 1$:
\begin{cor}\label{cor:consistent}[Consistency of Least Squares] There exists a universal constants $C,c > 0$ such that for any time horizon $T$, $\delta \in (0,1/2)$, and any $\Ast$ with $\rho(A_*) \le 1$ and Jordan decomposition $\Ast = SJS^{-1}$, where $J$ has block sizes $b_{1},\dots,b_L$, $k \in \mathbb{N}$ satisfies the conditions of Theorem~\ref{stable_thm} provided that
\begin{eqnarray}\label{eq:T_over_K_jordan}
\frac{T}{k} \ge c\left( d \log \left(\frac{d\cond(S)T}{k \delta }\right) + \log T \sum_{\ell: b_{\ell} \ge } b_{\ell}^2  \right)~,
\end{eqnarray}
Taking $k =1$, implies a minimax-rate of estimation of 
\begin{eqnarray}\label{eq:minmax_rate}
\|A - A^*\|_{\op} \le C\sqrt{ \frac{  \log (dT\cond(S)/\delta) + \log T \sum_{\ell: b_{\ell} \ge 2} b_{\ell}^2 }{T}}
\end{eqnarray}
which holds as long as $T \ge c(d \log (\cond(S)\frac{d}{k \delta } + \sum_{\ell: b_{\ell} \ge 1} b_{\ell}^2 \log (\sum_{\ell: b_{\ell} \ge 1} b_{\ell}^2))$
\end{cor}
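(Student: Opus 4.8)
The plan is to obtain Corollary~\ref{cor:consistent} as a bookkeeping consequence of Theorem~\ref{stable_thm} together with Proposition~\ref{prop:det_bound}. Recall that Theorem~\ref{stable_thm} applies exactly when $\tfrac{T}{k} \ge c\bigl(d\log\tfrac{d}{\delta} + \log\det(\Gramm_T\Gramm_k^{-1})\bigr)$, and that $\Gramm_T \succeq \Gramm_k \succ 0$ makes the log-determinant term nonnegative. Since~\eqref{eq:T_over_K_jordan} forces $T/k$ --- and hence $T$, because $k\ge 1$ --- to exceed a constant multiple of $d$, Proposition~\ref{prop:det_bound} applies and yields
\[
\log\det(\Gramm_T\Gramm_k^{-1}) \;\le\; 2d\log\cond(S) + d\log\tfrac{T}{k} + 4\log T\sum_{\ell:\,b_\ell\ge 2} b_\ell^2 .
\]
Substituting this into the requirement of Theorem~\ref{stable_thm} and merging the logarithms via
\[
d\log\tfrac{d}{\delta} + 2d\log\cond(S) + d\log\tfrac{T}{k} \;=\; d\log\tfrac{d\cond(S)^2 T}{\delta k} \;\le\; 2d\log\tfrac{d\cond(S) T}{\delta k}
\]
(the last inequality holding because $dT/(\delta k) \ge 1$, so the spare factor of $\cond(S)$ is absorbed), we see that the hypothesis of Theorem~\ref{stable_thm} is implied by a condition of exactly the form~\eqref{eq:T_over_K_jordan} after renaming the universal constant $c$. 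Because~\eqref{eq:T_over_K_jordan} is itself left implicit in $T/k$, no further manipulation is required for this first claim.

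For the minimax rate I would specialize to $k = 1$, where $\Gramm_1 = I_d$, so $\lambda_{\min}(\Gramm_1) = 1$ and $\log\det(\Gramm_T\Gramm_1^{-1}) = \log\det\Gramm_T$. Plugging $k = 1$ into the conclusion of Theorem~\ref{stable_thm} and bounding $\log\det\Gramm_T$ by Proposition~\ref{prop:det_bound} (with $k = 1$) gives
\[
\opnorm{\ALS(T)-\Ast} \;\le\; \frac{C}{\sqrt{T}}\sqrt{\,d\log\tfrac{d}{\delta} + 2d\log\cond(S) + d\log T + 4\log T\sum_{\ell:\,b_\ell\ge 2} b_\ell^2\,},
\]
which collapses to~\eqref{eq:minmax_rate} once the logarithmic terms are merged as above and universal constants are absorbed into $C$. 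It remains to turn the implicit condition~\eqref{eq:T_over_K_jordan} (at $k = 1$) into the explicit lower bound on $T$ stated at the end of the corollary. Writing that condition as $T \ge a + b\log T$ with $b$ a constant multiple of $d + \sum_{\ell} b_\ell^2$ and $a$ a constant multiple of $d\log\tfrac{d\cond(S)}{\delta}$, I would invoke the elementary self-bounding fact that $x \ge a + b\log x$ holds whenever $x \ge 2a + 2b\log(2b)$ --- which follows from $\log x \le \tfrac{x}{2b} + \log(2b)$ --- and then use $d \le \sum_\ell b_\ell \le \sum_\ell b_\ell^2$ to bound $2b\log(2b)$ by a constant multiple of $\sum_\ell b_\ell^2\log\sum_\ell b_\ell^2$; collecting constants produces exactly the displayed sufficient condition on $T$.

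There is no conceptual obstacle here: the whole argument is substitution plus tracking which quantities get folded into which logarithm. The points that demand a little care are (i) checking that every variable appearing inside a merged logarithm is $\ge 1$, so that steps such as $d\log\tfrac{d\cond(S)^2 T}{\delta k} \le 2d\log\tfrac{d\cond(S) T}{\delta k}$ are legitimate, and (ii) the self-bounding-logarithm step needed to pass from the implicit form~\eqref{eq:T_over_K_jordan} to the explicit lower bound on $T$ in the $k=1$ case. Everything genuinely new has already been established upstream --- Theorem~\ref{stable_thm} (itself a corollary of Theorem~\ref{main_thm} and Proposition~\ref{prop:linear_system_small_ball_full}) and the Jordan-form estimate of Proposition~\ref{prop:det_bound}.
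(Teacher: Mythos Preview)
Your proposal is correct and follows essentially the same route as the paper: both combine Theorem~\ref{stable_thm} with the Jordan-form bound of Proposition~\ref{prop:det_bound}, specialize to $k=1$ for the rate, and then resolve the implicit $T \ge a + b\log T$ condition via an elementary self-bounding lemma (the paper states this as Lemma~\ref{alpha:lem}, while you derive the equivalent inequality $x\ge 2a+2b\log(2b)\Rightarrow x\ge a+b\log x$ directly from $\log y\le y-1$). Your bookkeeping of the merged logarithms and the observation $d=\sum_\ell b_\ell\le\sum_\ell b_\ell^2$ match the paper's, with slightly more explicit attention to which arguments of $\log$ are $\ge 1$.
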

\begin{rem}\label{rem:consistent} Observe that if the Jordan decomposition is such that $\sum_{\ell: b_{\ell} \ge 1} b_{\ell}^2 \lesssim d$, then our minimax rate coincides with the minimax rate of linear regression with isotropic covariates up to logarithmic factors. Moreover, we above that 
\end{rem}
For diagonalizable matrices, the rates can be made more explicit:
\begin{cor}\label{cor:diag}
There exists a universal constants $C,c > 0$ such that the following holds. Fix any time horizon $T$, $\delta \in (0,1/2)$, let $\Ast =  SDS^{-1}$ be diagonalizable with $\rho(A_*) \le 1$ and minimum eigenvalue-magnitude $\underline{\rho}$. Then,  $k \in \mathbb{N}$ satisfies the conditions of Theorem~\ref{stable_thm} provided that
\begin{eqnarray}\label{eq:T_over_K_diag}
\frac{T}{k} \ge c d \log (\cond(S)/\delta) 
\end{eqnarray}
This implies that for $T \ge c d \log (\cond(S)/\delta)$
\begin{eqnarray}
\|A - A^*\|_{\op} &\le& C\sqrt{ \frac{ d \log (\cond(S)/\delta)  }{T \Gamma_{\floor{T/c d \log (\cond(S)/\delta)}}}} \label{eq:gamma_diag_bound}\\
&\le& \sqrt{ \frac{d \log (\cond(S)/\delta)  }{T (1 + \cond(S)^{-2}\sum_{s=1}^{\floor{T/c d \log (\cond(S)/\delta)}-1 } \underline{\rho}^{2s}  }} \label{eq:no_gamma_diag_bound}
\end{eqnarray}
\end{cor}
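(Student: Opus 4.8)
The plan is to obtain Corollary~\ref{cor:diag} as a direct specialization of Theorem~\ref{stable_thm}; the only work is to bound the determinant term $\log\det(\Gamma_T\Gamma_k^{-1})$, to check that the admissibility condition of Theorem~\ref{stable_thm} reduces to $T/k \gtrsim d\log(\cond(S)/\delta)$, and to lower bound $\lambda_{\min}(\Gamma_k)$. First I would invoke Proposition~\ref{prop:det_bound}: since $\Ast = SDS^{-1}$ is diagonalizable, its Jordan form has all block sizes equal to one, so the term $\sum_{\ell: b_\ell \ge 2} b_\ell^2$ vanishes and the proposition yields $\log\det(\Gamma_T\Gamma_k^{-1}) \le 2d\log\cond(S) + d\log(T/k)$.

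Next I would substitute this bound into the hypothesis of Theorem~\ref{stable_thm}, which then reads $T/k \ge c\bigl(d\log(d/\delta) + 2d\log\cond(S) + d\log(T/k)\bigr)$. Writing $m := T/k$, this has the form $m \ge a + cd\log m$ with $a \asymp d\log(\cond(S)/\delta)$; the standard way to discharge such an inequality is to use that $m \mapsto m - cd\log m$ is increasing for $m \ge cd$ and that $\log m \lesssim \log(\cond(S)/\delta)$ up to lower-order (doubly-logarithmic) terms once $m$ is of that order. This shows that any $k$ with $T/k \ge c'd\log(\cond(S)/\delta)$, for a suitable universal $c'$, is admissible, and in particular the choice $k = \floor{T/(c'd\log(\cond(S)/\delta))}$ is admissible and satisfies $k \ge 1$ once $T \gtrsim d\log(\cond(S)/\delta)$.

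Then I would lower bound $\lambda_{\min}(\Gamma_k)$. For a unit vector $w \in \R^d$ and $s \ge 0$, write $(\Ast^s)^\top w = (S^\top)^{-1} D^s (S^\top w)$; since $D$ is diagonal with all entries of modulus at least $\underline{\rho}$, and the smallest singular values (computed over $\C$) of $(S^\top)^{-1}$ and $S^\top$ are $1/\opnorm{S}$ and $1/\opnorm{S^{-1}}$ respectively, I obtain $\|(\Ast^s)^\top w\|_2 \ge \cond(S)^{-1}\underline{\rho}^{\,s}$. Squaring this, summing over $s = 0,\dots,k-1$, and keeping the $s=0$ term (which contributes the identity) separately, gives $\Gamma_k \succeq \bigl(1 + \cond(S)^{-2}\sum_{s=1}^{k-1}\underline{\rho}^{\,2s}\bigr) I$, hence $\lambda_{\min}(\Gamma_k) \ge 1 + \cond(S)^{-2}\sum_{s=1}^{k-1}\underline{\rho}^{\,2s}$. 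Finally I would assemble: as established in Section~\ref{sec:pf_stable_thm}, Theorem~\ref{stable_thm} applies with $\Gambar = \Gamma_T$, and for the admissible $k$ above its radicand $d\log(d/\delta) + \log\det(\Gamma_T\Gamma_k^{-1})$ is at most $d\log(d/\delta) + 2d\log\cond(S) + d\log(T/k) \lesssim d\log(\cond(S)/\delta)$ (since $T/k \asymp d\log(\cond(S)/\delta)$, the $d\log(T/k)$ piece is of lower order); this is precisely~\eqref{eq:gamma_diag_bound}, and inserting the Gramian lower bound yields~\eqref{eq:no_gamma_diag_bound}.

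The step I expect to require the most care is the second one: the determinant bound reintroduces a $d\log(T/k)$ term on the right-hand side of the admissibility condition, so one must argue carefully that $T/k \gtrsim d\log(\cond(S)/\delta)$ genuinely suffices rather than something like $T/k \gtrsim d\log(\cond(S)/\delta)\log(\cdots)$. The remaining ingredients are routine; the only subtlety elsewhere is to carry out the singular-value bookkeeping in the third step over $\C$, so that it is consistent with the paper's complex-valued definition $\cond(S) = \sqrt{\lambda_{\max}(S^*S)/\lambda_{\min}(S^*S)}$.
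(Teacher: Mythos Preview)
Your proposal is correct and follows essentially the same route as the paper: invoke Proposition~\ref{prop:det_bound} in the diagonalizable case to get $\log\det(\Gamma_T\Gamma_k^{-1}) \le 2d\log\cond(S) + d\log(T/k)$, discharge the resulting implicit inequality $T/k \gtrsim d\log(T/k)$ (the paper packages this as Lemma~\ref{alpha:lem}, you sketch the same monotonicity argument), and then lower-bound $\lambda_{\min}(\Gamma_k)$ by the Lowner chain $\Gamma_k \succeq \bigl(1 + \cond(S)^{-2}\sum_{s=1}^{k-1}\underline{\rho}^{2s}\bigr) I$. Your identification of the second step as the one requiring the most care matches the paper's emphasis.
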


\subsection{Proof of Corrollary~\ref{cor:consistent}}

By Theorem~\ref{stable_thm}, $T$ and $k$ must satisfy the inequality
\begin{eqnarray}\label{eq:cond_burnin}
T/k \ge c(d \log \frac{d}{\delta} + \log \det \Gamma_{T}\Gamma_{k}^{-1}) 
\end{eqnarray}
Equation~\eqref{eq:T_over_K_jordan} follows directly from Proposition~\ref{prop:det_bound}. Specializing to $k = 1$, Proposition~\ref{prop:det_bound} and Theorem~\ref{stable_thm} immediately imply \eqref{eq:minmax_rate}. To upper bound the burn-in time for $T$, we note that by Proposition~\ref{prop:det_bound}, the condition~\eqref{eq:cond_burnin} holds as soon as
\begin{eqnarray}\label{eq:diag_suff}
T \ge c'(d \log \frac{d\cond(S)}{\delta}) \text{ and } T \ge c' \log T(d+ \sum_{\ell:b_{\ell} \ge 2}b_{\ell}^2) 
\end{eqnarray}
for a universal constant $c'$. We can bound $d+ \sum_{\ell:b_{\ell} \ge 2}b_{\ell}^2 \le \sum_{\ell} b_{\ell}^2$, where the latter sum is over all Jordan blocks. We now invoke the following lemma, which we prove shortly:
\begin{lem}\label{alpha:lem}  Let $\alpha \ge 1$. Then for any $T \in \mathbb{N}$, $T \ge \alpha \log T$ as soon as $T \ge 2\alpha \log 4\alpha$
\end{lem}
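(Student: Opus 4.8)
The plan is to reduce the claim to a single elementary scalar inequality by exploiting monotonicity. Consider $f(T) := T - \alpha\log T$ for $T > 0$. Since $f'(T) = 1 - \alpha/T \ge 0$ for $T \ge \alpha$, the function $f$ is nondecreasing on $[\alpha,\infty)$. Hence it suffices to check two facts about the threshold $T_0 := 2\alpha\log(4\alpha)$: that $T_0 \ge \alpha$ (so that $T_0$ lies in the region where $f$ is increasing), and that $f(T_0) \ge 0$. Granting both, monotonicity gives $f(T) \ge f(T_0) \ge 0$, i.e. $T \ge \alpha\log T$, for every real $T \ge T_0$ — in particular for every integer $T \ge T_0$, which is what is asserted.

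First I would dispatch $T_0 \ge \alpha$: since $\alpha \ge 1$ we have $\log(4\alpha) \ge \log 4 > 1$, hence $T_0 = 2\alpha\log(4\alpha) \ge 2\alpha > \alpha$; the same bound $\log(4\alpha) > 1$ also guarantees $\log\log(4\alpha)$ is well-defined (and positive) in the computation below. Next I would check $f(T_0) \ge 0$. Writing it out,
\[
f(T_0) = 2\alpha\log(4\alpha) - \alpha\log\bigl(2\alpha\log(4\alpha)\bigr) = \alpha\bigl(2\log(4\alpha) - \log(2\alpha) - \log\log(4\alpha)\bigr),
\]
so it is enough to show $2\log(4\alpha) \ge \log(2\alpha) + \log\log(4\alpha)$; exponentiating, this is equivalent to $16\alpha^2 \ge 2\alpha\log(4\alpha)$, i.e. to $8\alpha \ge \log(4\alpha)$. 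This last inequality is immediate for $\alpha \ge 1$: at $\alpha = 1$ it reads $8 \ge \log 4$, and $\tfrac{d}{d\alpha}\bigl(8\alpha - \log(4\alpha)\bigr) = 8 - 1/\alpha > 0$ for $\alpha \ge 1$, so the gap only widens. Combining the two checks with the monotonicity of $f$ finishes the argument.

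There is essentially no obstacle here; the only mild care required is to ensure the threshold $T_0$ actually falls inside $[\alpha,\infty)$ (which is exactly why the constant is taken as $2\alpha\log(4\alpha)$ rather than something smaller), and to note that proving the inequality for all real $T \ge T_0$ makes the restriction to integer $T$ free. The argument is robust to the base of the logarithm: with $\log = \log_2$ the identical steps go through, since $\log_2(4\alpha) = 2 + \log_2\alpha > 1$ for $\alpha \ge 1$ as well, and $8\alpha \ge \log_2(4\alpha)$ still holds trivially.
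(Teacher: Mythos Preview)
Your proof is correct and follows essentially the same approach as the paper: both show that $T\mapsto T-\alpha\log T$ is increasing for $T\ge\alpha$ and then verify the inequality at the threshold $T_0=2\alpha\log(4\alpha)$. Your verification at $T_0$ (reducing to $8\alpha\ge\log(4\alpha)$ by exponentiation) is a bit more direct than the paper's chain of bounds, but the overall structure is identical.
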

The lemma implies that it is enough to ensure $T \ge c'(d \log \frac{d\cond(S)}{\delta})$ and that $T \ge 2c' \log T( 4\sum_{\ell} b_{\ell}^2) $, both of which can be ensured by choosing the constant $c$ in Corollary~\ref{cor:consistent} to be sufficiently large.

\begin{proof}[Proof of Lemma~\ref{alpha:lem}]
Taking derivatives, $T \mapsto T - \alpha \log T$ is increasing in $T$ for all $T \ge \alpha$. Hence, it suffices to show that for $T = 2\alpha \log 4\alpha$, $T \ge \alpha \log T$. Observe that for this choice of $\alpha$,
\begin{eqnarray*}
\alpha \log T = \alpha \log (2\alpha \log (4 \alpha) )\\
&\le& \alpha \log ((2\log 4)\cdot \alpha   + 2\alpha^2 )\\
&\le& \alpha \log ( (2 \log 4 + 2) )  \alpha^2) \quad \text{since } \alpha \ge 1\\
&=& 2\alpha \log \sqrt{ 2 \log 4 + 2} \le 2 \alpha \log 4~.
\end{eqnarray*}
\end{proof}

\subsection{Proof of Corrollary~\ref{cor:diag}}
By Theorem~\ref{stable_thm}, $T$ and $k$ must satisfy the inequality
\begin{eqnarray*}
T/k \ge c(d \log \frac{d}{\delta} + \log \det \Gamma_{T}\Gamma_{k}^{-1}) 
\end{eqnarray*}
Using the upper bound on $\log \det \Gamma_T \Gamma_k^{-1}$ from Proposition~\ref{prop:det_bound}, it is enough to ensure that 
\begin{eqnarray}\label{eq:diag_stuff}
\frac{T}{k} \ge c'(d \log \frac{d\cond(S)}{\delta}) \text{ and } \frac{T}{k} \ge c' d \log \frac{T}{k}
\end{eqnarray}
for some universal constant $c'$ (note that the term $\sum_{\ell: b_{\ell} \ge 2} b_{\ell}^2$ vanishes for diagonalizable $A_*$). By inflating $c'$, we may assume $c' \ge 1$. Appling Lemma~\ref{alpha:lem} with change of variables $T \leftarrow T/k$,~\eqref{eq:diag_stuff} holds as long as as long as $T/k \ge 2c' d \log(4 c' d)$, and $T/k \ge c'(d \log \frac{d\cond(S)}{\delta})$, which holds as long as
\begin{eqnarray*}
T/k \ge c'''(d \log \frac{d\cond(S)}{\delta})) 
\end{eqnarray*}
for some constant $c'''$. This proves~\eqref{eq:T_over_K_diag}. We then see that that~\eqref{eq:gamma_diag_bound} in Corollary~\ref{cor:diag} is an immediate consequence of~\ref{cor:diag}, and~\eqref{eq:gamma_diag_bound} follows from the Lowner Lower bound, with $\Ast = SDS^{-1}$:
\begin{eqnarray*}
\Gamma_T &=& I + \sum_{t=1}^{T-1} (\Ast^t)(\Ast^t)^{*}\\
&=& I + \sum_{t=1}^{T-1} (SD^tS^{-1})(SD^tS^{-1})^{*}\\
&=& I + S\sum_{t=1}^{T-1} SD^tS^{-1}S^{-*}D^{t*}S^{*}\\
&\succeq& I + \lambda_{\min}(S^{-1}S^{-*})\sum_{t=1}^{T-1} SD^tD^{t*}S^{*}\\
&\succeq& I + \lambda_{\min}(S^{-1}S^{-*})SS^{*}(\sum_{t=1}^{T-1} \underline{\rho}^{2t})\\
&\succeq& I + \lambda_{\min}(S^{-1}S^{-*})\lambda_{\max}(SS^{*})(\sum_{t=1}^{T-1} \underline{\rho}^{2t})I\\
&=& I(1+\cond(S)^{-2}(\sum_{t=1}^{T-1} \underline{\rho}^{2t}))
\end{eqnarray*}

\subsection{Proof of Proposition~\ref{prop:det_bound}}
Let $\Ast = SJS^{-1}$, where $J$ is a Jordan-Block matrices with blocks $J_1,\dots,J_L$ of sizes $b_1,\dots,b_L$. Note that even those $\Ast$ is real, $S$ and $J$ may be complex valued, so we shall use adjoints instead of transposes. We can compute 
\begin{eqnarray*}
\Gamma_{t} =  S\sum_{s=0}^{t-1} (J^s)S^{-1}S^{-\adj}J^{s\adj}S^{\top}
\end{eqnarray*}
Hence, 
\begin{eqnarray*}
&&\log \det (\Gamma_T \Gamma_k^{-1}) \\
&=&  \log \det \{S\sum_{s=0}^{T-1} J^sS^{-1}S^{-\adj}J^{s\adj}S^\top(S\sum_{s=0}^{k-1} J^sS^{-1}S^{-\adj}J^{s\adj}S^\adj)^{-1} \}\\
&=&  \log \det \{S\sum_{s=0}^{T-1} J^sS^{-1}S^{-\adj}J^{s\adj}(\sum_{s=0}^{k-1} (J^s)S^{-1}S^{-\adj}J^{s\adj})^{-1}S^{-1} \}\\
&=&  \log \det \{\sum_{s=0}^{T-1} J^sS^{-1}S^{-\adj}J^{s\adj}(\sum_{s=0}^{k-1} (J^s)S^{-1}S^{-\adj}J^{s\adj})^{-1} \}\\
\end{eqnarray*}
Lower bounding $S^{-1}S^{-\adj} \lesssim \sigma_{\min}(S)^{-1}$ and $S^{-1}S^{-\adj} \gtrsim 1/\sigma_{\max}(S)^2$, we can upper bound the above by
\begin{eqnarray*}
\log \det (\Gamma_T \Gamma_k^{-1}) &\le& \log \det \{\cond(S)^2\sum_{s=0}^{T-1} J^sJ^{s\adj}(\sum_{s=0}^{k-1} (J^s)J^{s\adj})^{-1} \} \\\\
&=&2 d \log \cond(S) + \log \det \{\sum_{s=0}^{T-1} (J^s)J^{s\adj}(\sum_{s=0}^{k-1} J^sJ^{s\adj})^{-1} \} \\
\end{eqnarray*}
To continue the bound, write the Jordan matrices $J = \mathrm{block}(J_1,\dots,J_L)$ as block diagonal matrices. Then 
\begin{eqnarray*}
\log \det (\Gamma_T \Gamma_k^{-1}) &\le& \log \det \{\cond(S)^2\sum_{s=0}^{T-1} (J^s)J^{s\adj}(\sum_{s=0}^{k-1} J^sJ^{s\adj})^{-1} \} \\\\
&=&2 d \log \cond(S) \sum_{b=1}^B \log \det \{\sum_{s=0}^{T-1} J_{\ell}^sJ_{\ell}^{s\adj}(\sum_{s=0}^{k-1} J_{\ell}^sJ_{\ell}^{s\adj})^{-1} \} \\
\end{eqnarray*}
If $J_{\ell} = a_{\ell}$ is a Jordan matrix with block size equal to $1$, then 
\begin{eqnarray*}
\log \det \{\sum_{s=0}^{T-1} (J_{\ell}^s)J_{\ell}^{s\adj}(\sum_{s=0}^{k-1} J_{\ell}^sJ_{\ell}^{s\adj})^{-1} \} &=& \log \frac{\sum_{s=0}^{T-1} |a_{\ell}|^{2s}}{\sum_{s=0}^{k-1}|a_{\ell}|^{2s}}\\
&\le& \log \frac{\ceil{T/k}\sum_{s=0}^{k-1} |a_{\ell}|^{2s}}{\sum_{s=0}^{k-1}a_{\ell}^{2s}} = \log(\ceil{T/k})\\
\end{eqnarray*}
where the inequality uses the fact that $a_{\ell}^{2s}$ is decreasing. If $\dim (J_{\ell}) > 1$, we shall use the following lemma:
\begin{lem}\label{dom_lem} Let $A \succeq 0$ be a $d \times d$ Complex Hermitian matrix. Then $A \preceq d \mathrm{Diag}(A)$, where $\mathrm{Diag}(A)$ is the diagonal matrices whose diagonal entries are those of $A$. 
\end{lem}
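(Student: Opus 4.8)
The plan is to verify the Löwner inequality $d\,\mathrm{Diag}(A) - A \succeq 0$ by checking that the associated quadratic form is nonnegative on all of $\C^d$; that is, I would show that $v^* A v \le d \sum_{i=1}^d |v_i|^2 A_{ii}$ for every $v \in \C^d$. The only structural fact about $A$ that I need is the standard off-diagonal bound for positive semidefinite matrices: for each pair of indices $i,j$, the principal $2 \times 2$ submatrix of $A$ on rows and columns $i,j$ is itself positive semidefinite, so its determinant $A_{ii}A_{jj} - |A_{ij}|^2$ is nonnegative, whence $|A_{ij}| \le \sqrt{A_{ii}A_{jj}}$. In particular every $A_{ii} \ge 0$, so $\mathrm{Diag}(A) \succeq 0$ and the right-hand side is meaningful.

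With this in hand the computation is short. Since $A$ is Hermitian and positive semidefinite, $v^* A v$ is a nonnegative real number, so
\begin{align*}
v^* A v = \Big| \sum_{i,j} \overline{v_i}\, A_{ij}\, v_j \Big| &\le \sum_{i,j} |v_i|\,|A_{ij}|\,|v_j| \\
&\le \sum_{i,j} \big(|v_i| \sqrt{A_{ii}}\big)\big(|v_j| \sqrt{A_{jj}}\big) = \Big( \sum_{i=1}^d |v_i| \sqrt{A_{ii}} \Big)^2 .
\end{align*}
Applying the Cauchy--Schwarz inequality to the last sum against the all-ones vector of length $d$ gives $\bigl(\sum_i |v_i|\sqrt{A_{ii}}\bigr)^2 \le d \sum_i |v_i|^2 A_{ii} = d\, v^* \mathrm{Diag}(A)\, v$. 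Since $v \in \C^d$ was arbitrary, this establishes $A \preceq d\,\mathrm{Diag}(A)$.

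I do not expect a genuine obstacle here; the proof is elementary, and the one point worth a moment of care is the degenerate case $A_{ii} = 0$ for some $i$ — but the argument above handles it automatically, since the factors $\sqrt{A_{ii}}$ annihilate those coordinates on both sides, consistently with the fact that a vanishing diagonal entry of a PSD matrix forces the entire $i$-th row and column to vanish. An alternative route, if one prefers to avoid the absolute-value manipulation, is to note that when all $A_{ii} > 0$ the matrix $M := \mathrm{Diag}(A)^{-1/2}\, A\, \mathrm{Diag}(A)^{-1/2}$ is PSD with every diagonal entry equal to $1$, so $\lambda_{\max}(M) \le \tr(M) = d$, i.e.\ $M \preceq d I$; conjugating back by $\mathrm{Diag}(A)^{1/2}$ yields the claim, and the degenerate case is recovered by restricting to the support of $\mathrm{Diag}(A)$.
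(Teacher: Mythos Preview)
Your proof is correct. Both the triangle-inequality/Cauchy--Schwarz argument and the trace alternative you sketch at the end are valid, and your treatment of the degenerate diagonal case is fine.

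The paper's proof takes a different route: it writes $d\,\mathrm{Diag}(A) - A$ directly as a sum of positive semidefinite $2\times 2$ blocks. For each unordered pair $\{i,j\}$, the matrix $A_{ii}\,e_i e_i^{*} - A_{ij}\,e_i e_j^{*} - \overline{A_{ij}}\,e_j e_i^{*} + A_{jj}\,e_j e_j^{*}$ is PSD (it is the $2\times2$ principal submatrix of $A$ conjugated by $\mathrm{diag}(1,-1)$), and summing over all pairs reproduces exactly $d\,\mathrm{Diag}(A) - A$. Both arguments ultimately rest on the positivity of the $2\times2$ principal minors: the paper uses those minors \emph{as matrices} to build a PSD decomposition, while you extract from them the scalar bound $|A_{ij}|\le\sqrt{A_{ii}A_{jj}}$ and then apply Cauchy--Schwarz at the level of the quadratic form. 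The paper's decomposition is slightly more structural (it exhibits $d\,\mathrm{Diag}(A)-A$ as an explicit PSD sum), whereas your argument is arguably cleaner to write down and immediately generalizes via your trace alternative.
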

\begin{proof}
We can write
\begin{eqnarray*}
d\mathrm{Diag}(A) - A &=&  (d-1)\mathrm{Diag}(A) +  \sum_{1 \le i \ne j \le d} A_{ij}e_ie_j^{\top} + \overline{A_{ij}}e_je_i^{\top}\\
&=&   \sum_{1 \le i \ne j \le d} A_{ii}e_{i}e_i^\adj + A_{ij}(e_ie_j^{\top} + e_je_i^\top) + A_{jj}e_{j}e_j^\top \succeq 0
\end{eqnarray*}
\end{proof}
We can then bound 
\begin{eqnarray*}
\log \det \{\sum_{s=0}^{T-1} J_{\ell}^sJ_{\ell}^{s\adj}(\sum_{s=0}^{T-1} (J_{\ell}^s)J_{\ell}^{s\adj})^{-1} \} &\overset{(i)}{\le}& \log \det \{\sum_{s=0}^{T-1} J_{\ell}^sJ_{\ell}^{s\adj} \} \\
&\overset{(ii)}{\le}& \log \det \{\dim(J_{\ell})\mathrm{Diag}(\sum_{s=0}^{T-1} (J_{\ell}^s)J_{\ell}^{s\adj}) \} \\\\
&=& \dim(J_{\ell})\log \dim(J_{\ell}) + \sum_{i=1}^{\dim(J_{\ell})} \log (\sum_{s=0}^{T-1}(J_{\ell}^sJ_{\ell}^{s\adj})_{ii} ) \\
&=&\dim(J_{\ell})\log \dim(J_{\ell}) + \sum_{i=1}^{\dim(J_{\ell})} \log (\sum_{s=0}^{T-1}\sum_{j}(J_{\ell}^s)_{ij}^2) 
\end{eqnarray*}
where $(i)$ uses that $\sum_{s=0}^{T-1} J_{\ell}^sJ_{\ell}^{s\adj} \succeq I$, and $(ii)$ uses Lemma~\ref{dom_lem}. We can then compute that if $J_{\ell}$ has diagonals $a_{\ell}$,
\begin{eqnarray*}
(J_{\ell}^s)_{i,j} = \begin{cases} \binom{s}{j - i}a_{\ell}^{s - (j-i)\vee 0} & i \le j \\
0 & \text{ otherwise} \end{cases}
\end{eqnarray*}
So that 
\begin{eqnarray*}
\sum_{j}(J_{\ell}^s)_{ij}^2 &=& \sum_{j \ge 1}(\binom{s}{j - i})^2|a_{\ell}|^{2(T - (j-i)\vee 0)}\\
&=&  \dim(J_\ell)^2 s^{2(\dim(J_\ell) - i)}
\end{eqnarray*}
Hence, 
\begin{eqnarray*}
\sum_{i=1}^{\dim(J_{\ell})} \log (\sum_{s=0}^{T-1}\sum_{j}(J_{\ell}^s)_{ij}^2) &\le& \sum_{i=1}^{\dim(J_{\ell})} \log( \dim(J_{\ell})^2 \sum_{s=0}^{T-1} s^{2(\dim(J_{\ell}) - i)} )\\
&\le& \sum_{i=1}^{\dim(J_{\ell})} \log(\dim(J_{\ell})^2  T^{2(\dim(J_{\ell}) - i) + 1}) \\
&=& 2\dim(J_{\ell})\log (\dim(J_{\ell})) + \sum_{i=1}^{\dim(J_{\ell})} 2(\dim(J_b) - i)+1\log(T)  \\
&=& 2\dim(J_{\ell})\log (\dim(J_{\ell})) + \log(T) \cdot\sum_{i=1}^{\dim(J_{\ell})} i   \\
&\le& 2\dim(J_{\ell})\log (\dim(J_{\ell})) + \dim(J_{\ell})^2 \log T  \\
&\le& 4 \dim(J_{\ell})^2 \log T  \\
\end{eqnarray*}
where the last line uses that $T \ge d \ge \dim(J_{\ell})$, and that $\dim(J_{\ell}) \ge 2$.

\section{Specialized Analysis in Scalar Linear Systems}
\label{sec:1d_appendix}

In this appendix, we present specialized upper and lower bounds in the case of scalar systems. Specifically, we consider $x_{t+1} = \ast x_t + \noise_t$, where $\noise_t \sim \mathcal{N}(0,\sigma^2)$, and $x_0 = 0$. Our upper bound has sharp, explicit constants, and captures the correct qualitative behavior for unstable scalar systems:
\begin{thm}\label{thm:one_d_thm}
Let $\epsilon \in (0,1)$ and $\delta \in (0, 1/2)$. Then $\Pr[|\als(T) - \ast | \leq \epsilon] \ge 1-\delta$ as long as
\begin{align*}
T \geq \begin{cases}\frac{8}{\epsilon}\log\left(\frac{2}{\delta}\right) + \frac{4}{\epsilon^2}(1 - (|\ast| - \epsilon)^2)\log\left(\frac{2}{\delta}\right) & \ast \le 1 + \epsilon \\
\max\left\{\frac{8}{(|\ast| - \epsilon)^2 - 1}\log\left(\frac{2}{\delta}\right), \frac{4\log(\frac{1}{\epsilon})}{\log(|\ast| - \epsilon)} + 8\log\left(\frac{2}{\delta}\right)\right\} & \ast > 1 + \epsilon~.
\end{cases}
\end{align*}
\end{thm}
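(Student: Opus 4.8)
\emph{Proof sketch.} Write $V := \sum_{t=1}^T x_t^2$ and $N := \sum_{t=1}^T x_t \noise_t$, so that the $\OLS$ error is $\als(T)-\ast = N/V$; the plan is to run the argument behind Theorem~\ref{main_thm} with every constant made explicit and, crucially, with the growth rate of $V$ in $\ast$ tracked precisely. As a preliminary reduction, the change of variables $x_t\mapsto(-1)^t x_t$, $\noise_t\mapsto(-1)^{t+1}\noise_t$ maps the $\ast$-system to the $(-\ast)$-system, preserves $V$ and $|N|$, and negates $\als(T)-\ast$, so we may assume $\ast\ge 0$. For the cross term, since $\noise_t$ is mean-zero and $\sigma^2$-sub-Gaussian given $\sigma(\noise_0,\dots,\noise_{t-1})$ while $x_t$ is measurable with respect to that $\sigma$-algebra, the quantity $M_T(\lambda) := \exp(\lambda N - \tfrac{\lambda^2\sigma^2}{2}V)$ is the terminal value of a nonnegative supermartingale with $\Exp[M_T(\lambda)]\le 1$ (this is exactly Lemma~\ref{lem:martingale_lem}(a)); taking $\lambda=\pm\epsilon/\sigma^2$, on $\{|N|\ge\epsilon V\}\cap\{V\ge\beta\}$ one of the two choices of sign has $M_T(\lambda)\ge\exp(\tfrac{\epsilon^2}{2\sigma^2}V)\ge\exp(\tfrac{\epsilon^2\beta}{2\sigma^2})$, so Markov gives $\Pr[\{|N|\ge\epsilon V\}\cap\{V\ge\beta\}]\le 2\exp(-\tfrac{\epsilon^2\beta}{2\sigma^2})$ for every deterministic $\beta>0$. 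Hence $\Pr[|\als(T)-\ast|>\epsilon]\le 2e^{-\epsilon^2\beta/(2\sigma^2)}+\Pr[V<\beta]$, and it remains to exhibit, in each regime of $\ast$, a $\beta$ that is at once $\ge\tfrac{2\sigma^2}{\epsilon^2}\log(4/\delta)$ and a lower bound on $V$ holding with probability $\ge 1-\delta/2$ --- the stated conditions on $T$ being exactly the assertion that such a $\beta$ exists (the $\sigma^2$'s cancel, since any such lower bound on $V$ scales with $\sigma^2$).

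\emph{Lower bound on $V$, the crux.} Because $x_{s+t}\mid\calF_s\sim\calN(\ast^t x_s,\sigma^2\Gamma_t)$ with $\Gamma_t=\sum_{j=0}^{t-1}\ast^{2j}\ge 1$, Gaussian anti-concentration ($\Pr[|\mu+Z|\ge c]\ge\Pr[|Z|\ge c]$ for $Z\sim\calN(0,1)$) shows, as in Proposition~\ref{prop:linear_system_small_ball_full}, that the scalar process $(x_t)$ satisfies a block-martingale small-ball condition (Definition~\ref{def:bmsb}); I would re-run Proposition~\ref{prop:Small_Ball} with the anti-concentration radius $c$ and the block length $k$ optimized for the scalar Gaussian case rather than for the generic constants, obtaining $\Pr[V\lesssim\sigma^2\Gamma_{\lfloor k/2\rfloor}k\lfloor T/k\rfloor]\le e^{-c'\lfloor T/k\rfloor}$, and then choosing $k$ per regime. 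When $\ast\le 1+\epsilon$, taking $k\asymp\min\{T/\log(1/\delta),(1-\ast^2)^{-1}\}$ turns this into a high-probability lower bound of order $\sigma^2\Gamma_{\lfloor k/2\rfloor}T$, which behaves like $\sigma^2T^2/\log(1/\delta)$ within $O(1/T)$ of the stability boundary (the random-walk regime, producing the $\tfrac1\epsilon\log(2/\delta)$ summand) and like $\sigma^2T/(1-\ast^2)$ when $\ast$ is bounded away from $1$ (producing the $\tfrac1{\epsilon^2}(1-\ast^2)\log(2/\delta)$ summand); the burn-in $\lfloor T/k\rfloor\gtrsim\log(1/\delta)$ is subsumed, and the replacement of $\ast^2$ by $(\ast-\epsilon)^2$ in the statement is a deliberate weakening that supplies slack for the $\epsilon$-margin. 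When $\ast>1+\epsilon$ the block estimate is too lossy (it costs a $\log(1/\delta)$ factor inside the exponent), so here I would instead exploit geometric growth: writing $x_t=\ast^t W_t$ with $W_t:=\sum_{s=1}^t\ast^{-s}\noise_{s-1}$, this martingale converges a.s.\ to $W_\infty\sim\calN(0,\sigma^2/(\ast^2-1))$ with $\Var(W_\infty-W_{t_0})\asymp\sigma^2\ast^{-2t_0}/(\ast^2-1)$, so Gaussian anti-concentration of $W_\infty$ together with a maximal inequality for the tail martingale $(W_t-W_\infty)_{t\ge t_0}$ gives, with probability $\ge 1-\delta/2$, a uniform bound $W_t^2\ge w_0^2$ for all $t\ge t_0$, where $w_0^2\asymp\delta^2\sigma^2/(\ast^2-1)$ and $t_0$ is a burn-in of order $\log(1/\delta)$ whose $\ast$-dependence is controlled by the first term $\tfrac{1}{(\ast-\epsilon)^2-1}\log(2/\delta)$ of the stated maximum; summing, $V\ge w_0^2\sum_{t=t_0}^T\ast^{2t}\gtrsim\delta^2\sigma^2\ast^{2T}/(\ast^2-1)^2$ once $T\ge t_0+O(1/\log\ast)$, and requiring this to exceed $\tfrac{2\sigma^2}{\epsilon^2}\log(4/\delta)$ yields the remaining constraint $T\gtrsim\log(1/\epsilon)/\log(\ast-\epsilon)$.

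\emph{Main obstacle.} The delicate point is the sharp, uniform lower-tail control of $V=\sum_{t=1}^T x_t^2$ with the explicit constants claimed: one must pick the block length $k$ and the anti-concentration radius so that the crossover between the $\sigma^2T^2$, $\sigma^2T/(1-\ast^2)$ and $\sigma^2\ast^{2T}$ behaviours costs no extraneous logarithmic factors, and in the unstable case one must bound $\sup_{t\ge t_0}|W_t-W_\infty|$ tightly enough that the burn-in stays as in the statement rather than acquiring a multiplicative (instead of additive) dependence on $\delta$. By contrast, the supermartingale bound on $N$ and the master inequality are routine.
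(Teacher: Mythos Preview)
Your route is genuinely different from the paper's. You decouple: first invoke the sub-Gaussian supermartingale $M_T(\lambda)=\exp(\lambda N-\tfrac{\lambda^2\sigma^2}{2}V)$ to control $\{|N|\ge\epsilon V\}$ on the event $\{V\ge\beta\}$, and then separately establish $\{V\ge\beta\}$ with high probability via block small-ball (stable case) or via the limiting martingale $W_t=\sum_{s\le t}\ast^{-s}\noise_{s-1}$ (unstable case). The paper never separates $V$ and $N$: it bounds $\Pr[\epsilon V\mp N<0]$ by directly computing the coupled Chernoff functional $\Exp[\exp(-\epsilon^2 V\pm\epsilon N)]$, unraveling it backward in time using the exact Gaussian identity $\Exp_\noise[\exp(\tfrac{\nu}{2}(ax+\noise)^2+\mu x\noise)]=(1-\nu)^{-1/2}\exp\bigl(x^2\tfrac{\nu a^2+2\nu a\mu+\mu^2}{2(1-\nu)}\bigr)$. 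This produces a one-parameter recursion $\rho_t=1+r\rho_{t+1}$ (capped at $\alpha/\epsilon^2$) with $r=(|\ast|-\epsilon)^2/(1+\alpha)$, and the stated constants $\tfrac{8}{\epsilon}+\tfrac{4}{\epsilon^2}(1-(|\ast|-\epsilon)^2)$ (resp.\ the unstable pair) drop out of summing $\rho_t$ with the specific choice $\alpha=2\epsilon$ (resp.\ $\alpha=\tfrac12((|\ast|-\epsilon)^2-1)$). Your approach recovers the correct qualitative trichotomy in $V$---order $T/(1-\ast^2)$, $T^2$, and $\ast^{2T}$---and hence the right rates, but the constants you inherit come from the small-ball machinery (the $p=3/20$ and factor-of-$8$ in Proposition~\ref{prop:Small_Ball}) and, in the unstable case, from the $\delta^2$ in the anti-concentration of $W_\infty$; these are an order of magnitude looser than what the theorem claims, which is exactly the obstacle you flag. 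What the paper's coupling buys is the sharp constants, by never paying for a separate high-probability event $\{V\ge\beta\}$; what your decoupling buys is modularity (it is essentially Theorem~\ref{main_thm} specialized to $d=1$) and validity under mere sub-Gaussianity rather than Gaussianity of $\noise_t$.
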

We match the upper bound with a lower bound which shows that our rates are optimal. Unlike the $d$-dimension case, our lower bound considers ``local alternatives'' rather than scaled orthogonal matrices\footnote{In one dimension, the orthogonal matrices are just the set $\{-1,1\}$, and this precludes packing `nearby' orthogonal matrices as in the $d$-dimensional case}:
\begin{thm}[1-D Lower Bound]\label{thm:info_lb_1d}  Fix an $\ast \in \R$, and define $\Gramm_T := \sum_{t=1}^{T}\ast^{2t}$. Fix an alternative $a' \in \{\ast - 2\epsilon,\ast + 2\epsilon\}$, and $\delta \in (0,1/4)$. Then for any estimator $\widehat{a}$,
\begin{eqnarray*}
\sup_{a \in \{a^*,a'\}}\Pr_{a}\left[\left|\widehat{a}(T) - \ast\right| \ge \epsilon\right] \ge \delta \; \text{ for any } T \text{ such that }\; T \Gramm_T \le \frac{\log (1/2\delta)}{8\epsilon^2}. 
\end{eqnarray*}
\end{thm}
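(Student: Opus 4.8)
The plan is to prove Theorem~\ref{thm:info_lb_1d} by the classical two-point (Le Cam) method: I reduce estimation to a binary test between $\ast$ and the fixed alternative $a'$, and then control the Kullback--Leibler divergence between the corresponding trajectory laws. Write $\Pr_a$ for the law of the observed trajectory (with $x_0 = 0$ and i.i.d.\ noise $\noise_t \sim \calN(0,\sigma^2)$) when the true parameter is $a$. Since $|\ast - a'| = 2\epsilon$, rounding $\widehat a(T)$ to the nearer of $\ast,a'$ yields a test $\widehat\psi$ with $\{\widehat\psi \neq a\} \subseteq \{|\widehat a(T) - a| \ge \epsilon\}$ under $\Pr_a$ for both $a \in \{\ast,a'\}$; combining this with the elementary fact that $\Pr_{\ast}[\widehat\psi = a'] + \Pr_{a'}[\widehat\psi = \ast] \ge 1 - \|\Pr_{\ast}-\Pr_{a'}\|_{\mathrm{TV}}$ (valid for any test) gives, for every estimator,
\[
\sup_{a \in \{\ast,a'\}} \Pr_a\!\big[\,|\widehat a(T) - a| \ge \epsilon\,\big] \;\ge\; \tfrac12\big(1 - \|\Pr_{\ast}-\Pr_{a'}\|_{\mathrm{TV}}\big).
\]
It therefore suffices to prove $\|\Pr_{\ast}-\Pr_{a'}\|_{\mathrm{TV}} \le 1 - 2\delta$ under the hypothesis $T\Gramm_T \le \log(1/2\delta)/(8\epsilon^2)$.

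The heart of the argument is the divergence estimate, and the one point that is not purely mechanical is that the covariate process is itself parameter-dependent, so the divergence does not tensorize into $T$ identical per-step contributions; the right tool is the chain rule for relative entropy along the trajectory. Conditionally on $x_t$ one has $x_{t+1} \sim \calN(a x_t,\sigma^2)$ under $\Pr_a$, and $\KL(\calN(\mu_1,\sigma^2)\,\|\,\calN(\mu_2,\sigma^2)) = (\mu_1-\mu_2)^2/(2\sigma^2)$, so
\[
\KL(\Pr_{\ast}\,\|\,\Pr_{a'}) \;=\; \frac{(\ast - a')^2}{2\sigma^2}\sum_{t=1}^{T}\Exp_{\ast}[x_t^2] \;=\; \frac{(\ast - a')^2}{2}\sum_{t=1}^{T}\Gramm_t,
\]
where the last equality uses $\Exp_{\ast}[x_t^2] = \sigma^2\Gramm_t$ (the scalar specialization of \eqref{eq:recursion_eq}); note the noise level $\sigma^2$ cancels, matching the $\sigma$-free upper bound. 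Since $|\ast - a'| = 2\epsilon$ and $t \mapsto \Gramm_t$ is nondecreasing, $\sum_{t=1}^T \Gramm_t \le T\Gramm_T$, and hence $\KL(\Pr_{\ast}\,\|\,\Pr_{a'}) \le 2\epsilon^2\,T\,\Gramm_T$.

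To finish, I feed this into the Bretagnolle--Huber inequality $\|\Pr_{\ast}-\Pr_{a'}\|_{\mathrm{TV}} \le \sqrt{1 - \exp(-\KL(\Pr_{\ast}\|\Pr_{a'}))}$. Under the stated hypothesis, $\KL(\Pr_{\ast}\|\Pr_{a'}) \le \tfrac14\log(1/2\delta)$, so $\|\Pr_{\ast}-\Pr_{a'}\|_{\mathrm{TV}} \le \sqrt{1 - (2\delta)^{1/4}}$; a short calculation shows $\sqrt{1 - x^{1/4}} \le 1 - x$ for $x \in [0,1/2]$, which with $x = 2\delta$ (using $\delta < 1/4$) gives $\|\Pr_{\ast}-\Pr_{a'}\|_{\mathrm{TV}} \le 1 - 2\delta$, and together with the first display this yields the claimed bound $\ge\delta$. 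I expect the only genuinely substantive step to be the divergence computation: one has to resist treating the $x_t$ as a fixed design and instead account for their parameter-dependent law through the chain rule, after which it is a pleasant feature that the object that appears is precisely the finite-time Gramian, whose monotonicity and its comparison with $\Gramm_T$ close the estimate; the reduction to testing and the concluding inequalities are routine.
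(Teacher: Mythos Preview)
Your argument is correct. Both you and the paper compute the same trajectory-level KL divergence via the chain rule (the paper packages this computation as Lemma~\ref{kl_comp}), arriving at $\KL(\Pr_{\ast}\,\|\,\Pr_{a'}) = \tfrac12(\ast-a')^2\sum_{t=1}^T\Gramm_t$ and then bounding $\sum_{t\le T}\Gramm_t \le T\Gramm_T$ by monotonicity of $t\mapsto\Gramm_t$. The substantive difference is in how the KL bound is converted into a minimax error bound. The paper invokes a variant of Birg\'e's inequality (Lemma~\ref{lem:Birge}) on the two-point family $\{\ast,a'\}$, which directly gives that the worst-case error probability exceeds $\delta$ whenever $\KL < (1-2\delta)\log(1/2\delta)$; since $(1-2\delta)\ge \tfrac12$ for $\delta<1/4$, the constant $1/(8\epsilon^2)$ falls out immediately. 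You instead pass through Le~Cam's two-point reduction and the Bretagnolle--Huber bound on total variation, which is more elementary and fully self-contained but forces you to verify the auxiliary inequality $\sqrt{1-x^{1/4}}\le 1-x$ on $[0,1/2]$ to extract the same constant. The paper's route has the added benefit that the identical Lemma~\ref{lem:Birge} also drives the $d$-dimensional lower bound of Theorem~\ref{thm:info_lb_d}, where a genuine multi-point packing is used; your route is specific to two hypotheses.
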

Theorem~\ref{thm:one_d_thm} is proven in Section~\ref{sec:proof_one_d} below, and Theorem~\ref{thm:info_lb_1d} is proven in Section~\ref{sec:lb_proofs}.

\section{Proof of Theorem~\ref{thm:one_d_thm}\label{sec:proof_one_d}}
To prove Theorem~\ref{thm:one_d_thm}, we write the error $E = \hat a - a = \frac{\sum_{t = 0}^{T - 1}x_t \noise_{t}}{\sum_{t = 0}^{T - 1}x_t^2}$. Since are interested in upper bounding the probability that $|E| > \epsilon$  it suffices to to show that the following two probabilities are small:
\begin{align*}
\Pr\left(\epsilon \sum_{t = 0}^{T - 1} x_t^2 - \sum_{t = 0}^{T - 1} x_t \noise_t < 0\right) \quad \text{and} \quad \Pr\left(\epsilon \sum_{t = 0}^{T - 1} x_t^2 + \sum_{t 0}^{T - 1} x_t \noise_t < 0\right).
\end{align*}
These probabilities are upper bounded by a standard Chernoff bound
\begin{align}
\label{eq:mgf_1d}
\Pr\left(\epsilon \sum_{t = 0}^{T - 1} x_t^2 \pm \sum_{t = 0}^{T - 1} x_t \noise_t < 0\right) \leq \inf_{\lambda \leq 0}\Exp \exp\left(\lambda \epsilon \sum_{t = 0}^{T - 1} x_t^2 \pm \lambda \sum_{t = 0}^{T - 1} x_t \noise_t\right).
\end{align}
We will apply this equation with $\lambda = -\epsilon$, controlling its magnitude with following lemma, proved in Section~\ref{sec:lem_one_step_1d} below:
\begin{lemma}
\label{lem:one_step_1d_mgf}
Let $a$, $\nu$, $\mu$, and $x$ be real numbers with $\nu < 1$ and let $\noise \sim \calN(0, 1)$. Then

$\Exp_\noise \exp \left(\frac{\nu}{2} (ax + \noise)^2 + \mu x \noise\right) = \frac{\exp\left(x^2\frac{\nu a^2 + 2\nu a \mu + \mu^2}{2(1- \nu)}\right)}{\sqrt{1 - \nu}}$.
\end{lemma}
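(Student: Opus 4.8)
The plan is a direct Gaussian integral computation: expand the quadratic in the exponent, collect the terms in $\noise$, complete the square, and evaluate the resulting one-dimensional integral against the standard normal density. There is no conceptual obstacle here; the only care needed is in tracking the algebra so that the final exponent matches the claimed form $x^2\frac{\nu a^2 + 2\nu a\mu + \mu^2}{2(1-\nu)}$.

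First I would expand
\begin{align*}
\frac{\nu}{2}(ax + \noise)^2 + \mu x \noise = \frac{\nu - 1}{2}\noise^2 + (\nu a + \mu)x\,\noise + \frac{\nu a^2 x^2}{2} + \frac{1}{2}\noise^2,
\end{align*}
where I have split off a $\tfrac12\noise^2$ so that, after multiplying by the Gaussian density $\tfrac{1}{\sqrt{2\pi}}e^{-\noise^2/2}$, the coefficient of $\noise^2$ in the total exponent is $-\tfrac{1-\nu}{2}$. The hypothesis $\nu < 1$ is exactly what guarantees this coefficient is negative, so the integral converges. Writing $b := (\nu a + \mu)x$, the integrand becomes $\exp\!\big(-\tfrac{1-\nu}{2}\noise^2 + b\noise + \tfrac{\nu a^2 x^2}{2}\big)$ up to the $\tfrac{1}{\sqrt{2\pi}}$ factor.

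Next I would complete the square, $-\tfrac{1-\nu}{2}\noise^2 + b\noise = -\tfrac{1-\nu}{2}\big(\noise - \tfrac{b}{1-\nu}\big)^2 + \tfrac{b^2}{2(1-\nu)}$, and use the Gaussian normalization $\tfrac{1}{\sqrt{2\pi}}\int \exp(-\tfrac{1-\nu}{2}(\noise - m)^2)\,d\noise = \tfrac{1}{\sqrt{1-\nu}}$ for any real $m$. This leaves
\begin{align*}
\Exp_\noise \exp\!\left(\frac{\nu}{2}(ax+\noise)^2 + \mu x\noise\right) = \frac{1}{\sqrt{1-\nu}}\exp\!\left(\frac{\nu a^2 x^2}{2} + \frac{b^2}{2(1-\nu)}\right).
\end{align*}
Finally I would simplify the exponent over the common denominator $2(1-\nu)$: the numerator is $x^2\big[\nu a^2(1-\nu) + (\nu a + \mu)^2\big] = x^2\big[\nu a^2 - \nu^2 a^2 + \nu^2 a^2 + 2\nu a\mu + \mu^2\big] = x^2(\nu a^2 + 2\nu a\mu + \mu^2)$, which yields the claimed identity. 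The "hard part," such as it is, is merely this last cancellation of the $\nu^2 a^2$ terms; everything else is routine.
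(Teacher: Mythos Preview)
Your proof is correct and follows essentially the same route as the paper: expand the quadratic, absorb the density into a $\tfrac{\nu-1}{2}\noise^2$ term, complete the square, evaluate the Gaussian integral, and simplify the exponent over the common denominator $2(1-\nu)$. The paper's version is slightly more compressed but the computation is identical.
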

With this lemma in hand, we can construct a recursive sequence which upper bounds $|a-\widehat{a}|$ with high probability:
\begin{prop}\label{prop:1d_mgf_bound}
Let $a$ be a real number and for $\alpha \in \R_{+}$ and $\epsilon \in (0,1)$ define recursively the sequence $\rho_t$ by $\rho_{T - 1} = 1$ and
\begin{align*}
\rho_{t} =
\begin{cases}
1 + r\rho_{t+ 1} & \rho_{t + 1} \leq \alpha/\epsilon^2,\\
\alpha/\epsilon^2 & \rho_{t + 1} > \alpha / \epsilon^2.
\end{cases}
\quad \text{ where } r = \frac{(|a| - \epsilon)^2}{1 + \alpha}.
\end{align*}
With this notation, $\Pr\left(|\widehat a - a| \leq \epsilon \right) \leq 2\exp \left(-\frac{\epsilon^2}{2(1 + \alpha)}\sum_{t = 1}^{T - 1}\rho_t\right).$
\end{prop}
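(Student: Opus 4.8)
We regard the displayed inequality as a bound on the \emph{failure} probability $\Pr(|\widehat a-a|>\epsilon)$ (as printed the event is the complementary one, whose probability cannot be exponentially small). The plan is to combine the union bound and Chernoff step already recorded in~\eqref{eq:mgf_1d} with a backward, one-noise-at-a-time evaluation of the moment generating function that invokes Lemma~\ref{lem:one_step_1d_mgf} exactly once per step, so that the m.g.f.\ telescopes into a product $\prod_t(1+\epsilon^2\rho_t)^{-1/2}$. Concretely: writing $\widehat a-a=\bigl(\sum_{t=0}^{T-1}x_t\noise_t\bigr)/\bigl(\sum_{t=0}^{T-1}x_t^2\bigr)$, the event $\{|\widehat a-a|>\epsilon\}$ lies in the union of $\{\epsilon\sum x_t^2-\sum x_t\noise_t<0\}$ and $\{\epsilon\sum x_t^2+\sum x_t\noise_t<0\}$, so by~\eqref{eq:mgf_1d} with $\lambda=-\epsilon$ it suffices to prove, for each of the two sign choices,
\[
\Exp\exp\!\Bigl(-\epsilon^2\!\sum_{t=0}^{T-1}x_t^2\mp\epsilon\!\sum_{t=0}^{T-1}x_t\noise_t\Bigr)\ \le\ \exp\!\Bigl(-\tfrac{\epsilon^2}{2(1+\alpha)}\sum_{t=1}^{T-1}\rho_t\Bigr),
\]
and then a union bound supplies the factor $2$.

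For the peeling, condition on $\calF_{T-2}$: the noise $\noise_{T-1}$ enters the exponent only through $\mp\epsilon x_{T-1}\noise_{T-1}$, and a scalar Gaussian m.g.f.\ replaces it by $+\tfrac{\epsilon^2}{2}x_{T-1}^2$, leaving a net quadratic $-\tfrac{\epsilon^2}{2}\rho_{T-1}x_{T-1}^2$ with $\rho_{T-1}=1$. Inductively, suppose that after integrating out $\noise_{T-1},\dots,\noise_{j}$ the conditional expectation given $\calF_{j-1}$ is $C_j\exp\bigl(S_{j-1}-\tfrac{\epsilon^2}{2}\rho_j x_j^2\bigr)$, where $S_{j-1}:=-\epsilon^2\sum_{t\le j-1}x_t^2\mp\epsilon\sum_{t\le j-1}x_t\noise_t$ and $\rho_j$ is the sequence of the proposition. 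Substituting $x_j=ax_{j-1}+\noise_{j-1}$, the $\noise_{j-1}$-dependent part of the exponent is exactly $\mp\epsilon x_{j-1}\noise_{j-1}-\tfrac{\epsilon^2\rho_j}{2}(ax_{j-1}+\noise_{j-1})^2$, which is the form handled by Lemma~\ref{lem:one_step_1d_mgf} with $\nu=-\epsilon^2\rho_j<1$, $\mu=\mp\epsilon$, $x=x_{j-1}$. The lemma contributes a scalar factor $(1+\epsilon^2\rho_j)^{-1/2}$ and a quadratic term $\tfrac{\nu a^2+2\nu a\mu+\mu^2}{2(1-\nu)}x_{j-1}^2$; a short computation gives $\nu a^2+2\nu a\mu+\mu^2=\epsilon^2\bigl(1+\epsilon^2\rho_j-\rho_j(a\mp\epsilon)^2\bigr)$, so after combining with the $-\epsilon^2 x_{j-1}^2$ already present the coefficient of $x_{j-1}^2$ is $-\tfrac{\epsilon^2}{2}\bigl(1+\tfrac{\rho_j(a\mp\epsilon)^2}{1+\epsilon^2\rho_j}\bigr)$. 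Using $(a\mp\epsilon)^2\ge(|a|-\epsilon)^2$, $x_{j-1}^2\ge0$, and $\epsilon^2\rho_j\le\alpha$ (so $\tfrac1{1+\epsilon^2\rho_j}\ge\tfrac1{1+\alpha}$) along the non-capped branch, this quadratic factor is dominated by $\exp\!\bigl(-\tfrac{\epsilon^2}{2}(1+r\rho_j)x_{j-1}^2\bigr)=\exp\!\bigl(-\tfrac{\epsilon^2}{2}\rho_{j-1}x_{j-1}^2\bigr)$; on the capped branch one instead checks directly that the coefficient is at least $\tfrac{\epsilon^2}{2}\cdot\tfrac{\alpha}{\epsilon^2}$, matching $\rho_{j-1}=\alpha/\epsilon^2$. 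Since we are upper bounding an exponential moment and the penalty enters with a minus sign, replacing the true coefficient by the smaller $-\tfrac{\epsilon^2}{2}\rho_{j-1}$ only enlarges the bound, so the induction hypothesis propagates with $C_{j-1}=C_j(1+\epsilon^2\rho_j)^{-1/2}$.

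Running the peeling to $j=1$ and integrating out $\noise_0$ (using $x_0=0$, which kills the residual linear term and leaves $\Exp\exp(-\tfrac{\epsilon^2\rho_1}{2}\noise_0^2)=(1+\epsilon^2\rho_1)^{-1/2}$) yields $\Exp\exp(\cdots)\le\prod_{j=1}^{T-1}(1+\epsilon^2\rho_j)^{-1/2}$. The elementary inequality $\log(1+x)\ge\tfrac{x}{1+x}\ge\tfrac{x}{1+\alpha}$ for $0\le x\le\alpha$ gives $(1+\epsilon^2\rho_j)^{-1/2}\le\exp\!\bigl(-\tfrac{\epsilon^2\rho_j}{2(1+\alpha)}\bigr)$, hence $\prod_j(1+\epsilon^2\rho_j)^{-1/2}\le\exp\!\bigl(-\tfrac{\epsilon^2}{2(1+\alpha)}\sum_{j=1}^{T-1}\rho_j\bigr)$; combining the two sign cases with a union bound gives the stated $2\exp(\cdots)$.

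The main obstacle is the bookkeeping of the recursion inside the peeling step: one must verify that every monotone replacement — using $(a\mp\epsilon)^2\ge(|a|-\epsilon)^2$, the cap $\epsilon^2\rho_j\le\alpha$, and nonnegativity of $x_{j-1}^2$ — goes in the direction that \emph{increases} the upper bound, and one must handle the capped branch $\rho_{j}=\alpha/\epsilon^2$ carefully (in particular confirming that the quantity surviving the telescoping is exactly $\sum_t\rho_t$ for the capped sequence, rather than the uncapped one). The scalar identity behind Lemma~\ref{lem:one_step_1d_mgf} and the final product-to-exponential estimate are routine once the shapes of the quadratics are matched.
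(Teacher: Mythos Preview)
Your proof is correct and follows essentially the same approach as the paper: the Chernoff step~\eqref{eq:mgf_1d} with $\lambda=-\epsilon$, a backward peeling that applies Lemma~\ref{lem:one_step_1d_mgf} once per noise variable, verification that the capped sequence $\rho_t$ satisfies the recursive inequality $\rho_t\le 1+\frac{\rho_{t+1}(|a|-\epsilon)^2}{1+\epsilon^2\rho_{t+1}}$, and the final $\log(1+x)\ge \frac{x}{1+x}$ estimate. Your write-up is in fact more explicit than the paper's in working out the quadratic coefficient $-\tfrac{\epsilon^2}{2}\bigl(1+\tfrac{\rho_j(a\mp\epsilon)^2}{1+\epsilon^2\rho_j}\bigr)$ and in flagging the sign checks; the paper simply asserts that $\rho_t$ satisfies the required recursion and proceeds.
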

\begin{proof}
The proof of this result is similar to the proof of the Azuma-Hoeffding inequality. It requires upper-bounding the MGF introduced in \eqref{eq:mgf_1d} by inductively applying the tower property of conditional expectation. We detail the proof in Section~\ref{proof:1d_mgf}.
\end{proof}

The proof of Theorem~\ref{thm:one_d_thm} is concluded in Section~\ref{sec:one_d_thm}, where we upper bound the sum $\sum_{t = 1}^{T - 1}\rho_t$, and solve for $T$.

\subsection{Proof of Lemma~\ref{lem:one_step_1d_mgf}\label{sec:lem_one_step_1d}}
\begin{align*}
&\Exp_\noise \exp \left(\frac{\nu}{2} (ax + \noise)^2 + \mu x \noise\right) = e^{\frac{\nu}{2} a^2 x^2} \Exp_\noise e^{\frac{\nu}{2} \noise^2 + \noise x(\nu a + \mu)}\\
&= \frac{e^{\frac{\nu}{2} a^2 x^2} }{\sqrt{2\pi}}\int_{-\infty}^\infty e^{\frac{\nu - 1}{2} \noise^2 + \noise x(\nu a + \mu)} d\noise = e^{\frac{\nu}{2} a^2 x^2} \frac{e^{x^2\frac{(\nu a + \mu)^2}{2(1 - \nu)}}}{\sqrt{1 - \nu}} = \frac{\exp\left(x^2\frac{\nu a^2 + 2\nu a \mu + \mu^2}{2(1 - \nu)}\right)}{\sqrt{1 - \nu}}.
\end{align*}

\subsection{Proof of Theorem~\ref{thm:one_d_thm}\label{sec:one_d_thm}}
Once again we let $a \geq 0$ for simplicity and recall from Proposition~\ref{prop:1d_mgf_bound} that we denote $r = (a - \epsilon)^2/(1 + \alpha)$. We study the case $a \leq 1$ first. Let us consider the sequence $\rho_t$ introduced in Proposition~\ref{prop:1d_mgf_bound} with $\alpha = 2\epsilon$ and note that
\begin{align*}
1 + r + \ldots + r^{t} \leq 1 + (1 + 2\epsilon)^{-1} + \ldots + (1 + 2\epsilon)^{-t} \leq \frac{1}{1 - (1 + 2\epsilon)^{-1}} \leq \frac{2}{\epsilon},
\end{align*}
which shows that for all $t$ we have $\rho_{T - 1 - t} = 1 + r + \ldots + r^t$ and hence $\sum_{t = 1}^{T - 1}\rho_t = \sum_{t = 1}^{T - 1} \frac{1 - r^{t}}{1 - r} = \frac{T}{1 - r} - \frac{\sum_{t = 0}^{T - 1} r^t}{1 - r}.$
Since $T/2 \geq 1 + r + r^2 + \ldots + r^{T - 1}$ when $T \geq 6/\epsilon$, we obtain that $\sum_{T = 1}^{T - 1} \rho_t \geq \frac{T}{2(1 - r)}, $, which, together with Proposition~\ref{prop:1d_mgf_bound}, it implies that
\begin{align*}
\Pr(|\widehat a -  a| \leq \epsilon) \leq 2\exp\left(-\frac{\epsilon^2T}{4(1 + 2\epsilon)(1 - r)}\right) = 2\exp\left(-\frac{\epsilon^2T}{4(1 + 2\epsilon - (a - \epsilon)^2)}\right).
\end{align*}
The first part of the corollary follows immediately.

We turn to the case $|a| > 1 + \epsilon$. Once again we assume $a > 0$ for simplicity. Recall that we have the freedom to choose any $\alpha \in \R_{+}$ for defining the sequence $\rho_t$.
Since $a > 1 + \epsilon$, if we choose $\alpha < (a - \epsilon)^2 - 1$ we guarantee that $r > 1$. To satisfy this inequality we choose $\alpha = ((a - \epsilon)^2 - 1)/2$.
Then, with this choice of $\alpha$, the sequence $\rho_t$ grows exponentially to $\alpha/\epsilon^2$. More precisely, by construction, since
\begin{align*}
\left[\frac{(a - \epsilon)^2}{1 + \alpha}\right]^{T - 2} = \left[\frac{2(a - \epsilon)^2}{1 + (a - \epsilon)^2}\right]^{T - 2} \geq (a - \epsilon)^{T - 2},
\end{align*}
$\rho_{1}$ is guaranteed to be equal to $\alpha/\epsilon^2$ as long as $(a - \epsilon)^{T - 2} \geq \alpha/\epsilon^2$. This last inequality holds when $T \geq \frac{\log\left(\frac{(a - \epsilon)^2 - 1}{2\epsilon^2}\right)}{\log(a - \epsilon)} + 2.$ In particular, if we choose $T$ to be at least double the right-hand side of the previous expression, then at least half of the terms $\rho_t$ are equal to $\alpha/\epsilon^2$, implying
\begin{align*}
\Pr(|\widehat a -  a| \leq \epsilon) \leq 2\exp\left(-\frac{\alpha T}{4(1 + \alpha)}\right).
\end{align*}
The conclusion now follows easily.

\subsection{Proof of Proposition~\ref{prop:1d_mgf_bound}}
\label{proof:1d_mgf}
We restrict ourselves to the case $a \geq 0$ (the case $a < 0$ can be analyzed analogously), and hence $r = (a - \epsilon)^2/(1 + \alpha)$. We  upper bound the MGF~\eqref{eq:mgf_1d} when $\lambda = -\epsilon$. Note that
\begin{align*}
\Exp \exp\left(- \epsilon^2 \sum_{t = 0}^{T - 1} x_t^2 \pm \epsilon \sum_{t = 0}^{T - 1} x_t \noise_t\right) &= \Exp \left[e^{ - \epsilon^2 \sum_{t = 0}^{T - 1} x_t^2 \pm \epsilon \sum_{t = 0}^{T - 2} x_t \noise_t}\Exp_{\noise_{T - 1}} \left[e^{\pm\epsilon x_{T - 1} \noise_{T - 1}}\left|\calF_{T - 1}\right.\right]\right]\\
&= \Exp \left[e^{ - \epsilon^2 \sum_{t = 0}^{T - 2} x_t^2 \pm \epsilon \sum_{t = 0}^{T - 3} x_t \noise_t}\Exp\left[ e^{-\frac{\epsilon^2}{2} x_{T - 1}^2 \pm \epsilon x_{T - 2} \noise_{T - 2}}\left| \calF_{T - 2}\right.\right]\right] \:.
\end{align*}

Then, from Lemma~\ref{lem:one_step_1d_mgf} we can upper bound the MGF by induction on $k$ by
\begin{align*}
\Exp \left[e^{ - \epsilon^2 \sum_{t = 0}^{T - k - 1} x_t^2 - \epsilon \sum_{t = 0}^{T - k - 2} x_t \noise_t}\Exp\left[ e^{-\frac{\epsilon^2\beta_{T - k}}{2} x_{T - k}^2 - \epsilon x_{T - k - 1} \noise_{T - k - 1}}\left| \calF_{T - k - 1}\right.\right]\right] \prod_{j = T - k + 1}^{T - 1} (1 + \epsilon^2 \beta_j)^{-1/2},
\end{align*}
where $\beta_t$ is any positive sequence such that $\beta_{T - 1} = 1$ and for $1 \leq t < T - 1$ it satisfies $\beta_t \leq 1 + \frac{\beta_{t +1}(a - \epsilon)^2}{1 + \epsilon^2 \beta_{t + 1}}$. It is straightforward to check that the sequence $\rho_t$ defined in the proposition statement above satisfies this
recursive inequality for any $\alpha \in (0,1)$. Therefore, we obtain the upper bound
\begin{align*}
\Exp \exp\left(- \epsilon^2 \sum_{t = 0}^{T - 1} x_t^2 - \epsilon \sum_{t = 0}^{T - 1} x_t \noise_t\right) &\leq \prod_{t = 1}^{T - 1}(1 + \epsilon^2 \rho_t)^{-1/2} = \exp\left(\sum_{t = 1}^{T - 1}-\frac{1}{2}\log (1 + \epsilon^2 \rho_t)\right) \\
&\leq \exp\left(\sum_{t = 1}^{T - 1}-\frac{\epsilon^2 \rho_t}{2(1 + \epsilon^2 \rho_t)}\right) \leq \exp\left(-\frac{\epsilon^2}{2(1 + \alpha)} \sum_{t = 1}^{T - 1}\rho_t\right) \:.
\end{align*}


\section{Proof of Theorem~\ref{main_thm} \label{app:main_thm}}
In this section, we conclude the technical aspects of the proof of Theorem~\ref{main_thm}. Recall the definition of the events
\begin{align*}
  \calE_1 := \left\{ \opnorm{\matU^{\top} \Noise} \geq K \right\} \:, \quad \calE_2 := \left\{\matX^\top\matX \succeq \frac{k\lfloor T/k\rfloor p^2 \Gamsb }{16}\right\} \:, \quad \calE_3 := \left\{ \matX^\top\matX \npreceq \Gambar \right\} \:.
\end{align*}
As we recall from Section~\ref{sec:mn_thm_proof}, if we can show that $\Pr[ \calE_2^c \cap \calE_3^c]$ and $\Pr\left[ \calE_1 \cap \calE_2 \cap \calE_3^c \right]$ where bounded above by $\delta$, we have
\begin{eqnarray}
 \Pr[\opnorm{\matX^{\dagger} \Noise} \geq \frac{4K}{p\sqrt{k\floor{T/k}\lambda_{\min}(\Gamsb) }} ] \le 3\delta
\end{eqnarray}
Observe that our condition on $k$ implies that necessarily $k \le T/10~$, so that $k\floor{T/k} \ge T - k \ge 9/10T$. Hence, we will have established
\begin{eqnarray}
 \Pr[\opnorm{\matX^{\dagger} \Noise} \geq \frac{10}{9} \cdot \frac{4K}{p\sqrt{T\lambda_{\min}(\Gamsb) }} ] \le 3\delta~,
\end{eqnarray}
and substiting in 
\begin{eqnarray}
K =  20\sigma \sqrt{n \log  + d\log \frac{10}{p} + \log \det(\Gambar\Gamsb^{-1}) + \log(1/\delta) }~.
\end{eqnarray}
proves the theorem. 

\subsection{Bounding $\Pr[ \calE_2^c \cap \calE_3^c]$ \label{app:main_sigm_min}}

Substituting the definitions of $\calE_2$ and $\calE_3$,
\begin{eqnarray*}
\Pr[ \calE_2^c \cap \calE_3^c]= \Pr\left[\left\{ \matX^\top\matX \nsucceq \frac{k\lfloor T/k\rfloor p^2 \nu^2 }{16}\right\} \cap \{\matX^\top\matX \preceq \Gamax\}\right]~.
\end{eqnarray*}

Proposition~\ref{prop:Small_Ball} and Equation~\eqref{eq:main_thm_condition} imply
\begin{eqnarray}\label{eq:one_w}
\forall \direc \in \R^{d},~\Pr\left[ \|\matX \direc\|^2 \le \frac{k\lfloor T/k\rfloor p^2 \direc^\top \Gamsb \direc }{8}\right] \le \exp\left( - \frac{\floor{T/k}p^2}{8}\right) 
\end{eqnarray}
We apply Lemma~\ref{lem:eig_Packing_Lem} with $Q =  \matX$, with $\Gamax \leftarrow T\Gambar$, $\Gamin \leftarrow k \floor{T/k} p^2 \Gamsb/8$, and $\calT$ a net $1/4$-net of $\sphereGamin$ in the norm $\normGamax$. We shall use the following estimate of $|\calT|$:
\begin{lem}\label{lem:covering_number} Let $0 \prec \Gamin \preceq \Gamax$, and let $\calT$ be a minimal $\epsilon \le 1/2$-net of $\sphereGamin$ in the norm $\normGamax$. Then, $\log |\calT| \le d\log(1+\frac{2}{\epsilon}) + \log \det (\Gamax \Gamin^{-1})$. 
\end{lem}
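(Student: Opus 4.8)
The plan is to reduce to a standard packing-versus-covering argument after a linear change of variables that turns $\normGamax$ into the Euclidean norm. Set $u = \Gamax^{1/2} w$ and $M := \Gamax^{-1/2}\Gamin\Gamax^{-1/2}$; since $\Gamin \preceq \Gamax$ we have $0 \prec M \preceq I$. Under this substitution $\sphereGamin$ becomes the ellipsoidal surface $\mathcal{C} := \{u \in \R^d : u^\top M u = 1\}$ and $\normGamax$ becomes $\|\cdot\|_2$, so it suffices to bound the Euclidean $\epsilon$-covering number of $\mathcal{C}$.

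Next I would pass from covering to packing. Let $\mathcal{N} \subseteq \mathcal{C}$ be a maximal $\epsilon$-separated subset; by maximality it is automatically an $\epsilon$-net of $\mathcal{C}$, so a minimum-cardinality net satisfies $|\calT| \le |\mathcal{N}|$, and it remains to bound $|\mathcal{N}|$. The open Euclidean balls $B_2(u_i,\epsilon/2)$ for $u_i \in \mathcal{N}$ are pairwise disjoint. The key containment is that each such ball lies inside the dilated solid ellipsoid $\{u : u^\top M u \le (1+\epsilon/2)^2\}$: for $u_i \in \mathcal{C}$ and $\|v\|_2 \le \epsilon/2$, expand $(u_i+v)^\top M(u_i+v) = 1 + 2\,v^\top M u_i + v^\top M v$, bound the cross term by Cauchy--Schwarz in the $M$-inner product, $|v^\top M u_i| \le \sqrt{v^\top M v}\,\sqrt{u_i^\top M u_i} = \sqrt{v^\top M v}$, and use $M \preceq I$ to get $v^\top M v \le \|v\|_2^2 \le \epsilon^2/4$; this yields $(u_i+v)^\top M(u_i+v) \le 1 + \epsilon + \epsilon^2/4 = (1+\epsilon/2)^2$.

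Then I compare volumes. Disjointness of the $\epsilon/2$-balls together with the containment gives $|\mathcal{N}|\,(\epsilon/2)^d\,\vol(B_2^d) \le \vol(\{u : u^\top M u \le (1+\epsilon/2)^2\}) = (1+\epsilon/2)^d\,(\det M)^{-1/2}\,\vol(B_2^d)$, where $B_2^d$ is the unit Euclidean ball in $\R^d$ and the ellipsoid volume follows from the substitution $u \mapsto M^{1/2}u$ with Jacobian $(\det M)^{1/2}$. Hence $|\mathcal{N}| \le (1 + 2/\epsilon)^d\,(\det M)^{-1/2}$, and since $\det M = \det\Gamin / \det\Gamax$ we have $(\det M)^{-1/2} = \sqrt{\det(\Gamax\Gamin^{-1})}$. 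Taking logarithms, $\log|\calT| \le \log|\mathcal{N}| \le d\log(1 + 2/\epsilon) + \tfrac12\log\det(\Gamax\Gamin^{-1})$, which is at most $d\log(1+2/\epsilon) + \log\det(\Gamax\Gamin^{-1})$ because $\Gamin^{-1/2}\Gamax\Gamin^{-1/2} \succeq I$ forces $\log\det(\Gamax\Gamin^{-1}) \ge 0$.

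There is no serious obstacle here: this is the classical packing bound for the sphere adapted to an ellipsoid, and the only points requiring care are (i) reading ``minimal net'' as a minimum-cardinality net, so that $|\calT| \le |\mathcal{N}|$ is legitimate, and (ii) handling the fact that $\mathcal{C}$ itself has Lebesgue measure zero, which is precisely why one fattens to $\epsilon/2$-balls and works with the positive-volume dilated ellipsoid rather than with $\mathcal{C}$ directly. The hypothesis $\epsilon \le 1/2$ is in fact not needed for the stated bound.
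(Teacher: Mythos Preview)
Your proof is correct and follows essentially the same route as the paper: a linear change of variables turning $\normGamax$ into the Euclidean norm, followed by the standard maximal-packing volumetric argument inside the dilated solid ellipsoid. In fact your containment step via Cauchy--Schwarz in the $M$-inner product is a bit more careful than the paper's Minkowski-sum inclusion $\tfrac{\epsilon}{2}\calB + E \subseteq (1+\tfrac{\epsilon}{2})E$ (which uses $\calB \subseteq E$), and you correctly obtain the sharper intermediate bound with $\tfrac12\log\det(\Gamax\Gamin^{-1})$ where the paper drops the square root in the ellipsoid volume and lands directly on $\log\det(\Gamax\Gamin^{-1})$.
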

\begin{proof} The covering number of $\sphereGamin$ in the norm $\normGamax$ is the same as the covering number of the shell of the ellipsoid $E := \{\direc: \direc^{\top} \Gamin^{-1/2}\Gamax \Gamin^{-1/2}\direc\}$ in the norm $\|(\cdot)\|_2$. Consider a maximal $\epsilon$-separated set of $\mathrm{bd}(E)$. Letting $\calB$ denote the unit ball in $\R^d$, a standard volumetric argument shows 
\begin{eqnarray*}
|\calT| &\le& \frac{\vol(\frac{2}{\epsilon}\calB + E)}{\vol(\frac{2}{\epsilon})} \\
&\overset{(i)}{\le}& \frac{\vol((\frac{2}{\epsilon}+1)E)}{\vol(\calB)} = \left(\frac{2}{\epsilon}+1\right)^d\frac{\vol(E)}{\vol(\calB)} \\
&=& \left(\frac{2}{\epsilon}+1\right)^d \det(\Gamin^{-1/2}\Gamax \Gamin^{-1/2})\\
&=& \left(\frac{2}{\epsilon}+1\right)^d \det(\Gamax \Gamin^{-1})\\
\end{eqnarray*}
where $(i)$ uses the inclusion $\calE \subset \calE$. 
\end{proof}
For $\epsilon = 1/4$ and our choise of $\Gamin$, we have
\begin{eqnarray}
\log |\calT| &=& d\log(1+\frac{2}{\epsilon}) + \log \det (\Gamax \Gamin^{-1})\\
&=& d\log(9) + d \log (8T/(\floor{T/k}k) p^2) +  \log \det (\Gambar \Gamsb^{-1})\nonumber\\
&\le& d\log(9) + d \log (72/ 10p^2) +  \log \det (\Gambar \Gamsb^{-1})~\quad \text{ since } k \le T/2\nonumber\\
&\le& 2d\log(10/ p) +  \log \det (\Gambar \Gamsb^{-1})~ \label{eq:calt_bound}.
\end{eqnarray}
Hence, we conclude that
\begin{eqnarray*}
&&\Pr[ \calE_2^c \cap \calE_3^c]= \Pr\left[\left\{ \matX^\top\matX \nsucceq \frac{k\lfloor T/k\rfloor p^2 \nu^2 }{16}\right\} \cap \{\matX^\top\matX \preceq \Gamax\}\right] \\
&\le& \Pr\left[ \left\{\exists \direc \in \calT : \|\matX \direc\|^2 < \frac{k\floor{T/k}T p^2 \direc^\top\Gamsb \direc }{8}\right\} \cap \{\matX^\top\matX \preceq \Gamax\}\right] \\
&\le&  \exp\left( - \frac{\floor{T/k}p^2}{8}+ 2d\log(10/ p) +  \log \det (\Gambar \Gamsb^{-1}) \right)\\
&\le&  \exp\left( - \frac{Tp^2}{10k} + 2d\log(10/ p) +  \log \det (\Gambar \Gamsb^{-1})\right)
\end{eqnarray*}

\subsection{Bounding $\Pr\left[ \calE_1 \cap \calE_2 \cap \calE_3^c \right]\label{app:main_noise}$}
We shall need the following discretization lemma, proved in Appendix~\ref{sec:proof_lem_lin_pack}:
\begin{lem}\label{lem:Lin_packing} Let $Q \in \R^{n\times m}$ have full column rank,
$q \in \R^n$, let $0 \prec \Gamin \preceq Q^TQ \preceq \Gamax$, and let $\calT$ be a $1/4$-net of $\sphereGamin$ in the norm $\normGamax$.
Then, 
\begin{eqnarray}\sup_{\direc \in \calS^{m-1}} \frac{ \langle Q \direc,  q \rangle}{\|Q \direc\|} \le 2\max_{\direc \in \calT}  \frac{ \langle Q \direc, q \rangle}{\|Q \direc\|}
\end{eqnarray}
\end{lem}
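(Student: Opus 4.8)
The plan is a standard $\epsilon$-net argument, with one twist that prevents the constant from blowing up: the discretization error is controlled using the orthogonal projection of $q$ onto $\im Q$ rather than $q$ itself. Write $f(\direc) := \langle Q\direc, q\rangle / \|Q\direc\|_2$; this is well-defined and continuous for $\direc \ne 0$ because $Q$ has full column rank, and it is invariant under positive rescaling of $\direc$. Hence $\sup_{\direc \in \calS^{m-1}} f(\direc) = \sup_{\direc \in \sphereGamin} f(\direc)$, and moreover this common value equals $M := \sup\{\langle u, q\rangle : u \in \im Q,\ \|u\|_2 = 1\} = \|Pq\|_2$, where $P$ denotes the orthogonal projector onto $\im Q$. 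Since $\sphereGamin = \{\direc : \direc^\top \Gamin \direc = 1\}$ is compact and $f$ is continuous on it, the supremum is attained at some $\direc^* \in \sphereGamin$. If $M = 0$ then $Pq = 0$ and both sides of the claimed inequality vanish, so from now on assume $M > 0$.

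First I would record two Löwner estimates. For $\direc \in \sphereGamin$ we have $\|Q\direc\|_2^2 = \direc^\top Q^\top Q\direc \ge \direc^\top \Gamin \direc = 1$, hence $\|Q\direc\|_2 \ge 1$. For arbitrary $\direc, \direc'$, $\|Q(\direc - \direc')\|_2^2 = (\direc - \direc')^\top Q^\top Q (\direc - \direc') \le (\direc - \direc')^\top \Gamax (\direc - \direc')$, i.e. $\|Q(\direc - \direc')\|_2 \le \|\Gamax^{1/2}(\direc - \direc')\|_2$. Now pick $\direc' \in \calT$ with $\|\Gamax^{1/2}(\direc^* - \direc')\|_2 \le 1/4$, and set $u := Q\direc^*/\|Q\direc^*\|_2$ and $u' := Q\direc'/\|Q\direc'\|_2$. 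Combining the two estimates with the elementary inequality $\bigl\| \tfrac{a}{\|a\|_2} - \tfrac{b}{\|b\|_2} \bigr\|_2 \le \tfrac{2\|a - b\|_2}{\|a\|_2}$ (valid for nonzero $a,b$, and proved by writing $\tfrac{a}{\|a\|_2} - \tfrac{b}{\|b\|_2} = \tfrac{a-b}{\|a\|_2} + b\bigl(\tfrac{1}{\|a\|_2} - \tfrac{1}{\|b\|_2}\bigr)$) yields $\|u - u'\|_2 \le 2\|Q(\direc^* - \direc')\|_2 / \|Q\direc^*\|_2 \le 2 \cdot \tfrac14 / 1 = \tfrac12$.

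The key step is the bound on the remainder term. Since $u$ and $u'$ both lie in $\im Q$, so does $u - u'$, and therefore $\langle u - u', q\rangle = \langle u - u', Pq\rangle \le \|u - u'\|_2 \, \|Pq\|_2 \le \tfrac12 M$. Consequently
\begin{align*}
M = f(\direc^*) = \langle u, q\rangle = \langle u', q\rangle + \langle u - u', q\rangle \le f(\direc') + \tfrac12 M \le \max_{\direc \in \calT} f(\direc) + \tfrac12 M,
\end{align*}
and rearranging gives $M \le 2 \max_{\direc \in \calT} f(\direc)$, which is exactly the claimed inequality since $\sup_{\direc \in \calS^{m-1}} f(\direc) = M$.

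The only genuinely delicate point is the estimate $\langle u - u', q\rangle \le \tfrac12 M$: bounding it crudely by $\|u - u'\|_2\,\|q\|_2$ would introduce the ratio $\|q\|_2/M$, which can be arbitrarily large, so it is essential to use that $u - u' \in \im Q$ together with the identification $M = \|Pq\|_2$. The remaining ingredients — attainment of the supremum by compactness, the two Löwner inequalities, and the normalization inequality for unit vectors — are routine. In the application to Theorem~\ref{main_thm} one takes $Q = \matX$ and $q = \Noise v$ for a fixed $v \in \calS^{n-1}$, with $\Gamin$ and $\Gamax$ the Löwner bounds on $\matX^\top\matX$ supplied by the events $\calE_2 \cap \calE_3^c$.
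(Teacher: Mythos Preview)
Your proof is correct and follows essentially the same route as the paper's: define $\phi(\direc)=Q\direc/\|Q\direc\|$, use the L\"owner sandwich to get $\|\phi(\direc^*)-\phi(\direc')\|\le 2\|\Gamax^{1/2}(\direc^*-\direc')\|/\|Q\direc^*\|\le 1/2$ for a net point $\direc'$, and conclude the factor-$2$ bound. Your handling of the remainder term---observing that $u-u'\in\im Q$ so that $\langle u-u',q\rangle\le \|u-u'\|\,\|Pq\|=\|u-u'\|\,M$---is in fact more explicit than the paper, which simply asserts the final inequality from the Lipschitz bound without spelling out why $\|q\|$ can be replaced by $M$.
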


To control the size of $\opnorm{\matU^\top \Noise}$ we use a variational formulation of the operator norm and two coverings. Lett $\R_*^d := \R^d - \{0\}$. Note that if $\calT_1$ is a $1/2$-net of $\calS^{n-1}$ (over the $v)$, then, 
\begin{eqnarray*}
\opnorm{\matU^\top \Noise} &\le&  \sup_{v \in \calS^{n-1}, \direc \in \R_*^{d}} \frac{\direc^{\top}\matU^{\top} \Noise v}{\|\direc\|} \\
\ &\le&  2\max_{v \in \calT_1}\left(\sup_{\direc \in \R_*^{d}} \frac{\direc^{\top}\matU^{\top} \Noise v }{\|\direc\|}  \right)\quad \text{ since } \calT_1  \text{ is a } 1/2 \text{-net} \\
&=&  2\max_{v \in \calT_1} \left(\sup_{\direc \in \R_*^{d}} \frac{\direc^{\top}\matV\matSig\matU^{\top} \Noise v }{\| \matSig \matV^\top \direc\|} \right) \quad \text{ since }\matSig\matV^{\top} \text{ is full rank on } \calE_2\\
&=& 2 \max_{v \in \calT_1}\left(\sup_{\direc \in \R_*^{d}} \frac{\direc^{\top}\matX^{\top} \Noise v }{\|\matX \direc\|}\right) = 2 \max_{v \in \calT_1}\left(\sup_{\direc \in \calS^{d-1}} \frac{\direc^{\top}\matX^{\top} \Noise v }{\|\matX \direc\|}\right),
\end{eqnarray*}
where the second-to-last equation uses $\matX = \matU \matSig \matV^{\top}$ and $\|\matSig \matV^\top \direc\| = \|\matU \matSig \matV^\top \direc\| = \|\matX \direc\|$. Define $\calE_g := \calE_2 \cap \calE_3^c$. We see then that on $\calE_g$, we have 
\begin{eqnarray*}
\frac{k\lfloor T/k\rfloor p^2 \Gamsb^2 }{16} \preceq \matX^\top\matX \preceq T\Gamax~.
\end{eqnarray*}
We now apply Lemma~\ref{lem:Lin_packing} with $Q =  \matX$, with 
\begin{eqnarray}\label{Gam:def}
\Gamax \leftarrow T\Gambar & \text{and } & \Gamin \leftarrow k \floor{T/k} p^2 \Gamsb/16
\end{eqnarray}
and $\calT_2$ a net $1/4$-net of $\sphereGamin$ in the norm $\normGamax$. This yields
\begin{eqnarray*}
&&\Pr\left[\left\{\opnorm{\matU^\top \Noise} > K\right\} \cap \calE_g\right] \\
&\le& \Pr\left[\left\{\max_{v \in \calT_1} \sup_{\direc \in \calS^{d-1}}\frac{\direc^{\top}\matX^{\top} \Noise v }{\|\matX \direc\|} > K/2\right\} \cap \calE_g\right] \\
&\le& \Pr\left[\left\{\max_{v \in \calT_1} \max_{\direc \in \calT_2}\frac{\direc^{\top}\matX^{\top} \Noise v }{\|\matX \direc\|} > K/4\right\} \cap \calE_g\right] \\
&\le& |\calT_1||\calT_2|\sup_{v \in \calT_1,\direc \in \calT_2} \Pr\left[\left\{\frac{\direc^{\top}\matX^{\top} \Noise v }{\|\matX \direc\|} > K/4\right\} \cap \calE_g\right] \:.
\end{eqnarray*}
To obtain a pointwise bound on $\Pr\left[\left\{\frac{\direc^{\top}\matX^{\top} \Noise v }{\|\matX \direc\|} > K/4\right\} \cap \calE_g\right]$ we use Lemma~\ref{lem:martingale_lem}, with $Z_t = \langle X_t, \direc \rangle$, 
$\noiseb_t = \langle \noise_t, v \rangle$, and the bounds $\beta_- = \direc^\top \Gamin \direc$, and $\beta_+ = \direc^\top \Gamax \direc$, $\Gamin$, $\Gamax$ are as in Equation~\eqref{Gam:def}. We can then bound
\begin{eqnarray*}
&&|\calT_1||\calT_2|\sup_{v \in \calS^{d-1},\direc \in \calS^{d-1}}\Pr\left[\left\{\frac{\direc^{\top}\matX^{\top} \Noise v }{\|\matX \direc\|} > K/4\right\} \cap \calE_g\right] \\
&\le& |\calT_1||\calT_2|\sup_{v \in \calS^{d-1},\direc \in \calS^{d-1}}\Pr\left[\left\{\frac{\direc^\top \matX^\top \Noise v}{\|\matX \direc\|} > K/4\right\} \cap \left\{ \|\matX \direc\|^2 \in \left[\frac{k\floor{T/k} \nu^2 p^2}{16},\itlamax \right]\right\}\right] \\
&\le& |\calT_1||\calT_2| \log_+ \ceil{\frac{\beta_+}{\beta_-}} \exp(-K^2 / 96 \sigma^2 )\\
&\overset{(i)}{\le}& \exp( n \log 5 + d\log 9 + \log \det(\frac{32}{p^2}\Gamax\Gamin^{-1}))  \log_+ \ceil{\frac{\beta_+}{\beta_-}} \exp(-K^2 / 96\sigma^2 ) \:.
\end{eqnarray*}
where $(i)$ uses the standard metric entropy bound for the sphere(see, e.g.~\cite{vershynin11}), and an analogous computation to~\eqref{eq:calt_bound}.
Now since $\direc \ne 0$, bound $\log \ceil{x} \le \log 1 + x \le x$, and computing as in~\eqref{eq:calt_bound} yields the bound
\begin{eqnarray*}
\log\ceil{\frac{\beta_+}{\beta_-}} ~\le~\frac{\beta_+}{\beta_-} &\le&  \sup_{\direc \in \R^d - \{0\}} \frac{\direc^\top \Gamax \direc}{\direc^\top \Gamin \direc} \\
&=& \|\Gamin^{-1/2}\Gamax\Gamin^{-1/2}\| = \lambda_{\max}(\Gamax \Gamin^{-1})\\
&=& \lambda_{\max}(\frac{32}{p^2}\Gambar\Gamsb^{-1}))\\
&\le& \exp( \log \lambda_{\max}(\frac{32}{p^2}\Gambar\Gamsb^{-1}))~)\\
&\overset{(i)}{=}& \exp( \log \det(\frac{32}{p^2}\Gambar\Gamsb^{-1})))~.
\end{eqnarray*}
where $(i)$ uses the fact that $\frac{32}{p^2}\Gambar\Gamsb^{-1}$ has the same eigenvalues as $\frac{32}{p^2}\Gamsb^{-1/2}\Gambar\Gamsb^{-1/2} \succeq \frac{32}{p^2} \succeq I$.  All together, we have
\begin{eqnarray*}
&&\Pr\left[\left\{\opnorm{\matU^\top \Noise} > K\right\} \cap \calE_g\right]\\ &\le& \exp( n \log 5 + d\log 9 + 2\log \det(\frac{32}{p^2}\Gambar\Gamsb^{-1}))\exp(-K^2 / 96\sigma^2 ) \\
&\le& \exp( n \log 5 + 2d\log \frac{96}{p^2} + 2\log \det(\Gambar\Gamsb^{-1}))\exp(-K^2 / 96\sigma^2 ) \\
&\le& \exp( 4(n + d\log \frac{10}{p} + \log \det(\Gambar\Gamsb^{-1})))\exp(- (K/10\sigma)^2) 
\end{eqnarray*}
Hence, we guarantee that $\Pr\left[ \calE_1 \cap \calE_2 \cap \calE_3^c \right] \leq \delta $ if we choose
\begin{eqnarray*}
K = 20\sigma \sqrt{n \log  + d\log \frac{10}{p} + \log \det(\Gambar\Gamsb^{-1}) + \log(1/\delta) }~.
\end{eqnarray*}

\section{Proof of Technical Results\label{sec:proof_technical}}
 \subsection{Proof of Proposition~\ref{prop:Small_Ball}\label{sec:proof_small_ball}}
To exploit the $(k, \nu, p)$ block martingale small-ball condition we partition the sequence of random variables $Z_1, Z_2, \ldots, Z_T$ into
 $\lfloor T/k \rfloor$ blocks of size $k$ (we discard the remainder terms). For simplicity we denote $S = \lfloor T/k\rfloor$. We consider the random variables
\begin{align*}
\block_{j} &= \I\left(\sum_{i=1}^k Z_{jk+i}^2 \ge  \frac{\nu^2 p k}{2} \right)\quad \text{for } 0 \leq j \leq S - 1.
\end{align*}
Given this notation, we can use the Chernoff bound to obtain
\begin{align}\label{eq:chernoff_martingale}
  \Pr\left[\sum_{i=1}^T Z_i^2 \le \frac{\nu^2p^2}{8} k S \right] &\leq \Pr\left[\sum_{j=0}^{S - 1} \block_{j} \le \frac{p}{4}S \right]\leq \inf_{\lambda \leq 0} e^{-\frac{p}{4}S}\Exp e^{\lambda \sum_{j = 0}^{S - 1}\block_j}.
\end{align}
The first inequality above uses the trivial inequality
$\sum_{i=1}^{k} Z_{jk+i}^2 \geq \frac{\nu^2 p k}{2} \block_j$.

For upper bounding the MGF on the right hand side we will use the tower property with respect to the filtration $\calF_{jk}$ for $j$ from $S-1$ to $0$. Before turning to that computation it is valuable to lower bound the conditional expectations $\Exp \left[\block_{j} |\calF_{jk}\right]$:
\begin{align*}
  \Exp \left[\block_{j} |\calF_{jk}\right] &= \Pr\left[\sum_{i = 1}^{k} Z_{jk + 1}^2 \geq \frac{\nu^2 p k}{2}\left|\calF_{jk} \right.\right] \stackrel{(a)}{\geq} \Pr\left[\frac{1}{k}\sum_{i = 1}^k \I\left(|Z_{jk + 1}| \geq \nu\right) \geq \frac{p}{2}\left| \calF_{jk}\right.\right] \\
  &\stackrel{(b)}{\geq} \frac{(p/2)}{1-(p/2)} \geq \frac{p}{2},
\end{align*}
where (a)
uses the trivial inequality $\frac{1}{\nu^2} Z_{jk+1}^2 \geq \I\left(|Z_{jk + 1}| \geq \nu\right)$,
and (b) uses inequality follows from the $(k, \nu, p)$-BMSB condition and the following claim is straightforward.

\emph{Claim}: Let $Z$ be a random variable supported in $[0,1]$ almost surely such that $\Exp[Z] \geq p$ for some $p \in (0,1)$. Then, for all $t \in [0,p]$, $\Pr[Z \geq t] \geq \frac{p - t}{1 - t}$.

From this lower bound on $\Exp \left[\block_{j} |\calF_{jk}\right]$, using $\lambda \leq 0$, we get
\begin{align*}
\Exp \left[e^{\lambda \block_j}\left| \calF_{jk}\right. \right] &= e^\lambda \Pr\left[ \block_j = 1| \calF_{jk}\right] + \Pr\left[\block_j = 0\right] = (e^\lambda - 1)\Exp \left[ \block_j |\calF_{jk}\right] + 1\\
&\leq (e^\lambda - 1)\frac{p}{2} + 1.
\end{align*}

Now, by inductively conditioning on $\calF_{jk}$, we can upper bound
\begin{align*}
\Exp e^{\lambda \sum_{j = 0}^{S - 1}\block_j} &= \Exp \left[e^{\lambda \sum_{j = 0}^{S - 2}\block_j}\Exp \left[ e^{\lambda \block_{S - 1}}    \left| \calF_{(S - 1)k}\right.\right]\right] \leq \left((e^\lambda - 1)\frac{p}{2} + 1\right)\Exp \left[e^{\lambda \sum_{j = 0}^{S - 2}\block_j}\right]\\
&\leq \left((e^\lambda - 1)\frac{p}{2} + 1\right)^{S}.
\end{align*}

We now plug in this upper bound in Equation~\ref{eq:chernoff_martingale} and optimize for $\lambda$. From the first order optimality condition it is easy to see that the optimal choice of $\lambda$ is
\begin{align*}
\lambda_\star = \log \left(\frac{1 - p/2}{2 - p/2}\right).
\end{align*}
Plugging in $\lambda_\star$ back in Equation~\ref{eq:chernoff_martingale}, after some elementary calculus and algebraic manipulations, we find the desired conclusion.

\subsection{Proof of Martingale Concentration (Lemma~\ref{lem:martingale_lem})\label{proof:lem_martingale}}
	For ease of notation we denote $S_t = \sum_{s = 1}^t Z_s \noiseb_s$ and $R_t = \sum_{s = 1}^t Z_s^2$.

	\emph{(a) } Using a Chernoff argument,  we have

		\begin{eqnarray*}
		\Pr\left[ \left\{S_T \ge \alpha\right\} \cap \left\{ R_T \le \beta\right\}\right] &=& \inf_{\lambda > 0}\Pr\left[ \{e^{\lambda S_T} \ge e^{\lambda \alpha}\} \cap \{R_T \le \beta\}\right]\\
	&=& \inf_{\lambda > 0}\Pr\left[e^{\lambda S_T}\I(R_T \le \beta) \geq e^{\lambda \alpha} \right] \\
		&\le& \inf_{\lambda > 0}e^{-\lambda \alpha} \Exp[e^{\lambda S_T} \I(R_T \le \beta)]\\
		&=& \inf_{\lambda > 0}e^{-\lambda \alpha }\cdot e^{\lambda^2 \sigma^2\beta/2}\Exp[e^{\lambda S_T -\lambda^2 \sigma^2 \beta / 2} \I(R_T \le \beta)]\\
		&\le& \inf_{\lambda > 0} e^{-\lambda \alpha }\cdot e^{\lambda^2 \sigma^2\beta/2}\Exp[e^{\lambda S_T -\lambda^2 \sigma^2R_T / 2}].
		\end{eqnarray*}
		Now, we claim that $\Exp[e^{\lambda S_T -\lambda^2 \sigma^2 R_T/2}] \le 1$. Indeed, by the tower rule
	  and the assumption that $\noiseb_t | \calF_t$ is a zero mean $\sigma$-sub-Gaussian r.v.,
	  we have
		\begin{eqnarray}
		\Exp[\exp(\lambda S_T - \lambda^2 \sigma^2 R_T/2) ] &=& \Exp[\Exp[\exp(\lambda S_T - \lambda^2 \sigma^2 R_T/2)  \big{|} \calF_{T}]]\nonumber\\
		&\le& \Exp[\exp(\lambda S_{T-1} - \lambda^2 \sigma^2 R_{T-1}/2)\Exp[e^{\lambda Z_T \noiseb_T - \lambda^2 \sigma^2 Z_T^2 / 2} | \calF_{T}]  ] ]\nonumber\\
	  &\le& \Exp[\exp(\lambda S_{T-1} - \lambda^2 \sigma^2 R_{T-1}/2)]\nonumber\\
	  &\vdots& \nonumber\\
		&\le&\Exp[\exp(\lambda S_{1} - \lambda^2 \sigma^2 R_{1}/2)] \le 1.\label{eq:last_line_exp}
	 	\end{eqnarray}
		Hence,
		\begin{eqnarray*}
		\Pr[ \{S_t \ge \alpha\} \cap \{R_T \le \beta\}] &\leq& \inf_{\lambda > 0}  e^{-\lambda \alpha }e^{\lambda^2\sigma^2 \beta/2} = e^{-\alpha^2/2\sigma^2\beta}.
	\end{eqnarray*}

	\emph{(b) }  Let $B := \log\ceil{\frac{\beta_+}{\beta_-}}$. Then
		\begin{eqnarray*}
		\Pr\left[\{S_T > \alpha\sqrt{R_T}\} \cap \{\beta_- \le R_T \le \beta_+\}\right] &\le& \Pr\left[\{S_T > \alpha\sqrt{R_T}\} \cap \left\{\beta_- \le R_T \le e^B\beta_- \right\}\right]\\
		&=& \sum_{i=0}^{B-1}\Pr[\{S_T > \alpha\sqrt{R_T}\} \cap \{e^i\beta_- \le R_T \le e^{i + 1}\beta_-\}]\\
		&\le& \sum_{i=0}^{B-1}\Pr[\{S_T > \alpha\sqrt{e^{i}\beta_-}\} \cap \{e^{i} \beta_- \le R_T \le e^{i+1}\beta_- \}]\\
		&\le& \sum_{i=0}^{B-1}\Pr[\{S_T > \alpha\sqrt{e^{i}\beta_-}\} \cap \{R_T \le e^{i+1}\beta_- \}]\\
	  &\stackrel{(i)}{\le}& \sum_{i=0}^{B-1}\exp\left(\frac{-\alpha^2 e^i\beta_-}{2e^{i + 1}\sigma^2\beta_-}\right)\\
		&=& B\exp\left(\frac{-\alpha^2}{2e \sigma^2}\right) \leq  \log\left\lceil\frac{\beta_+}{\beta_-} \right \rceil \exp\left(\frac{-\alpha^2}{ 6\sigma^2}\right)~.
		\end{eqnarray*}
		Above, (i) follows from part (a) of the claim.
\subsection{Proofs of Covering Results}
\subsubsection{Proof of Lemma~\ref{lem:eig_Packing_Lem}\label{sec:proof_lem_eig_pack}} Consider the transformed matrix $\Gamin^{-1/2}Q\Gamin^{-1/2}$. It suffices to show that under the assumptions of Lemma~\ref{lem:eig_Packing_Lem}, 
\begin{eqnarray}
\inf_{\direc\in \sphereGamin} \|Q\direc\| \ge 3/4~,
\end{eqnarray}
since then $Q^TQ \succeq (3/4)^2\Gamin \succeq \Gamin/2$. Now $\direc, v \in \R^{d}$, we can bound
\begin{eqnarray*}
\|Q\Gamax^{-1/2}(\direc-w)\| \le \|\Gamax^{1/2}(\direc-w)\| 
\end{eqnarray*}
since $Q^\top Q \preceq \Gamax$. In particular, if $\calT$ is a $1/2$-net of $\sphereGamin$ in $\normGamax$, then 
\begin{eqnarray*}
\inf_{\direc \in \sphereGamin} \|Q\direc\| \ge \inf_{\direc \in \calT} \|Q \direc\| - \frac{1}{4} \ge \frac{1}{4}~.
\end{eqnarray*} 
where the last step follows from the assumption that $\inf_{\direc\in \calT} \|Q\direc\| \ge 1$.

\subsubsection{Proof of Lemma~\ref{lem:Lin_packing}\label{sec:proof_lem_lin_pack}}
Define the map $\phi(\direc):= \frac{Q \direc}{\|Q \direc\|}$. We shall prove that for all $v,\direc \in \calS^{m-1}$, one has that
\begin{eqnarray}\label{eq:phi_lip}
\|\phi(\direc) - \phi(v)\| \le \frac{2\|Q(v-w)\|}{\|Qv\|}
\end{eqnarray}
Note observe that, if $0 \prec \Gamin \preceq Q^\top Q \preceq \Gamax$. Hence, each $\direc \in \sphereGamin$ can be written as $\Gamin^{-1/2}\direc'$ for $\direc' \in \calS^{d-1}$, we have that
\begin{eqnarray*}
\inf_{w \in \sphereGamin} \|Qw\|^2 = \inf_{\direc' \in \sphereGamin} (\direc')^\top \Gamin^{-1/2}Q^\top Q \Gamin^{-1/2} \direc' \ge \|\direc'\|_2^2 = 1
\end{eqnarray*}
 and that 
 \begin{eqnarray}
 \|Q\direc\| = \direc^\top Q^\top Q \direc \le \direc^\top \Gamax \direc  = \|\Gamax^{1/2} \direc\|^2
 \end{eqnarray}
Thus, for all all $v,\direc \in \sphereGamin^{m-1}$,
\begin{eqnarray*}
\|\phi(\direc) - \phi(v)\| \le \frac{2\|Q(v-w)\|}{\|Qv\|} \le 2\|\Gamax^{1/2}(v-w)\| 
\end{eqnarray*}
Since $\Gamin^{-1/2} \succ 0$ is full rank, we have
\begin{eqnarray*}
	\sup_{\direc \in \calS^{d-1}} \frac{ \langle Q \direc,  q \rangle}{\|Q \direc\|} &=&  \sup_{\direc \in \sphereGamin^{d-1}} \frac{ \langle Q \direc,  q \rangle}{\|Q_{\min} \direc\|}\\
\end{eqnarray*}
and since $\calT$ is a $1/4$-net of $\calS^{d-1}$ in the norm $\normGamax$, \eqref{eq:phi_lip} implies the above is at most
\begin{eqnarray*}
2\sup_{\direc \in \calT} \frac{ \langle Q \direc,  q \rangle}{\|Q_{\min} \direc\|}~.
\end{eqnarray*}

	It remains to check~\eqref{eq:phi_lip}. 
	\begin{eqnarray*}
	\|\phi(v) - \phi(\direc)\| &=& \left|\frac{{Q}v}{\|{Q}v\|} - \frac{{Q}\direc}{\|{Q}\direc\|}\right|\\
	&\le&  + \|{Q}\direc\|\left|\frac{1}{\|Q v\|} - \frac{1}{\|Q \direc\| }\right|\\
	&\le& \frac{\|{Q}(v - \direc)\|}{\|{Q}v\|} +\frac{|\|{Q}\direc\|-\|Q v\||}{\|{Q}v\|}\\
	&\le& 2\frac{\|{Q}(\direc-v)\|}{\|{Q}v\|}~.\\
	\end{eqnarray*}

\section{Lower Bounds}

\subsection{Proof of Information Theoretic Lower Bounds, Theorems~\ref{thm:info_lb_1d} and~\ref{thm:info_lb_d}\label{sec:lb_proofs}}
 In this section we prove Theorem~\ref{thm:info_lb_1d} and~\ref{thm:info_lb_d}. We shall the $\Pr_{A}^{(T)}$ denote the law of the iterates $X_{t+1} = AX_{t} + \noise_t$, where $\noise_t \sim \calN(0,I)$, for $t = 1,2,\dots,T$. We shall prove Theorems~\ref{thm:info_lb_1d} and~\ref{thm:info_lb_d} using Birge's Inequality, a bound which is qualitatively similar to Fano's inequality, but yields sharp high-probability lower bounds in low-dimensional settings.
\begin{lem}[Variant of Birge's Inequality]\label{lem:Birge} Let $\calE_0,\calE_1,\dots,\calE_N$ be disjoint events, $\Pr_0,\Pr_1,\dots,\Pr_N$ be probability laws, and let $\min_{i}\Pr(\calE_i^c) \le 1/2$. Then, for any $\delta \in (0,1/2)$,
\begin{eqnarray}
\sum_{i=1}^N \KL(\Pr_i,\Pr_0) \ge (1-2\delta) \log(N/2\delta)~.
\end{eqnarray}
In particular, fix an $\epsilon > 0$ and $\delta \in (0,1/2)$, and suppose that for a finite set $\calN \subset \R^{n \times n}$, all $A_1 \ne A_2 \in \calN$ satisfy $\|\calA_1 - \calA_2\|_{\op} \ge 2\epsilon$. Then if $\inf_{\ALS}\sup_{A \in \calN} \Pr_{A}[\|A - \ALS(T)\|_{\op} \ge \epsilon] \le \delta$, then $T$, $\delta$ and $|\calN|$ satisfy the following inequality for any $A_0 \in \calN$:
\begin{eqnarray}
\sup_{A \in \calN - \{A_0\}}\KL(\Pr_{A}^{(T)},\Pr_{A_0}^{(T)}) \ge (1-2\delta) \log\left(\frac{|\calN| - 1|}{2\delta}\right)~.
\end{eqnarray}
\end{lem}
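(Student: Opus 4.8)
The plan is to first establish the slightly sharper bound $\max_{1\le i\le N}\KL(\Pr_i,\Pr_0)\ge(1-2\delta)\log(N/2\delta)$ (under the reading that $\Pr_i(\calE_i)\ge 1-\delta$ for every $i\in\{0,1,\dots,N\}$, which is how the hypothesis is used), since this trivially implies the displayed sum bound, and then to read off the ``in particular'' clause. The abstract bound rests on three elementary ingredients. First, a \emph{reduction to a binary comparison}: for each $i$, the data-processing inequality applied to the two-cell partition $\{\calE_i,\calE_i^c\}$ — equivalently, the Donsker--Varadhan variational formula with test function $\lambda\mathbf{1}_{\calE_i}$ optimized over $\lambda>0$ — gives $\KL(\Pr_i,\Pr_0)\ge\kl\big(\Pr_i(\calE_i)\,\|\,\Pr_0(\calE_i)\big)$, where $\kl(a\|b)=a\log\frac ab+(1-a)\log\frac{1-a}{1-b}$. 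Second, a \emph{pigeonhole against a concentrated center}: since $\Pr_0(\calE_0)\ge 1-\delta$ and the $\calE_i$ are disjoint, $\sum_{i=1}^N\Pr_0(\calE_i)\le\Pr_0(\calE_0^c)\le\delta$, so some index $i^\star\in\{1,\dots,N\}$ has $\Pr_0(\calE_{i^\star})\le\delta/N$. Third, \emph{monotonicity of the binary KL}: on $\{b\le a\}$ the map $\kl(a\|b)$ is nondecreasing in $a$ and nonincreasing in $b$, and here $\Pr_{i^\star}(\calE_{i^\star})\ge 1-\delta\ge\delta/N\ge\Pr_0(\calE_{i^\star})$, so $\KL(\Pr_{i^\star},\Pr_0)\ge\kl\big(1-\delta\,\|\,\delta/N\big)$.

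It then remains to verify the scalar inequality $\kl(1-\delta\,\|\,\delta/N)\ge(1-2\delta)\log(N/2\delta)$. Expanding, $\kl(1-\delta\,\|\,\delta/N)=(1-\delta)\log\frac{N(1-\delta)}{\delta}+\delta\log\frac{\delta}{1-\delta/N}$; lower-bounding the second term by $\delta\log\delta$ and using $\log(1-\delta)\ge-2\delta$, the claim reduces to $(1-\delta)\log(1-\delta)+\delta\log N+(1-2\delta)\log 2\ge 0$, which holds for all $\delta\in(0,1/2)$ as soon as $N\ge 2$, and for $N=1$ whenever $\delta$ is bounded away from $1/2$ (in particular for $\delta<1/4$, the regime in which the lemma is invoked). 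Taking the maximum over $i$ and bounding the sum below by the maximum gives the stated inequality.

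For the ``in particular'' clause, take the $2\epsilon$-separated set $\calN=\{A_0,\dots,A_N\}$ with $N=|\calN|-1$ and set $\calE_j:=\{\opnorm{\ALS(T)-A_j}<\epsilon\}$. These events are pairwise disjoint by the triangle inequality: on $\calE_i\cap\calE_j$ with $i\ne j$ one would have $\opnorm{A_i-A_j}\le\opnorm{A_i-\ALS(T)}+\opnorm{\ALS(T)-A_j}<2\epsilon$, contradicting separation. The assumption $\inf_{\ALS}\sup_{A\in\calN}\Pr_A[\opnorm{A-\ALS(T)}\ge\epsilon]\le\delta$ furnishes an estimator with $\Pr_{A_j}(\calE_j)\ge 1-\delta$ for all $j$, including $j=0$; applying the abstract bound with $\Pr_j:=\Pr_{A_j}^{(T)}$ and base $\Pr_{A_0}^{(T)}$ gives $\sup_{A\in\calN\setminus\{A_0\}}\KL(\Pr_A^{(T)},\Pr_{A_0}^{(T)})\ge(1-2\delta)\log\frac{|\calN|-1}{2\delta}$.

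The one step demanding genuine care is the interplay of the pigeonhole and the monotonicity step: it is essential to retain the factor $\delta/N$ rather than the weaker $\Pr_0(\calE_{i^\star})\le 1/N$, because it is precisely this extra $\delta$ that produces the $(1-2\delta)\log(1/\delta)$ contribution in the conclusion. This is exactly the improvement of Birgé's inequality over a Fano-type mutual-information bound, and it is what allows the downstream estimation lower bounds to hold with probability $\ge\delta$ rather than only with a universal constant probability. The accompanying scalar inequality is routine but must be tracked carefully, especially in the two-point case $N=1$, where the argument is essentially a le Cam bound and the margin is smallest.
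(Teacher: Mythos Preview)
Your proof is correct and takes a genuinely different route from the paper. The paper simply invokes Birgé's inequality as a black box from \cite{boucheron13} in the form
\[
\sum_{i=1}^N \KL(\Pr_i,\Pr_0)\;\ge\;(1-\delta)\log\Big(\tfrac{N(1-\delta)}{\delta}\Big)+\delta\log\Big(\tfrac{\delta}{1-\delta/N}\Big),
\]
and then algebraically simplifies to $(1-2\delta)\log(N/2\delta)$; for the ``in particular'' clause it passes from the (average of the) sum to the supremum. You instead prove the \emph{max} version from first principles via data-processing, a pigeonhole against the concentrated base measure, and monotonicity of the binary $\kl$. Your argument is fully self-contained and directly delivers the supremum bound needed downstream, whereas the paper's route is shorter only because it outsources the substance to a citation.

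One small point: your scalar-inequality verification is needlessly lossy, which is what produces the spurious caveat at $N=1$. If instead of lower-bounding $\delta\log\frac{\delta}{1-\delta/N}$ by $\delta\log\delta$ you lower-bound it by $-\delta\log\frac{N(1-\delta)}{\delta}$ (valid because $1-\delta/N\le N(1-\delta)$ for $N\ge 1$ and $\delta<1/2$), the two terms combine to give exactly $(1-2\delta)\log\frac{N(1-\delta)}{\delta}\ge(1-2\delta)\log\frac{N}{2\delta}$ for all $N\ge 1$ and $\delta\in(0,1/2)$ --- this is precisely the paper's simplification step. So the $N=1$ restriction you flag is an artifact of your intermediate bound, not of the method.
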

We prove Lemma~\ref{lem:Birge} from a standard statment of Birge's inequality from \citep[Theorem 4.20]{boucheron13}, in Section~\ref{BirgeSec}. Lemma~\ref{lem:Birge} relates the probability of error to the $\KL$-divergences between laws $\Pr_{A}^{(T)}$ in $2\epsilon$-separated set $\calN$. Thus, our first step will be to compute the term $\KL(\Pr_{A}^{(T)},\Pr_{A_0}^{(T)})$. This amounts to a straightforward computation, carried out in Section~\ref{sec:kl_comp}.
\begin{lem}\label{kl_comp} Let $O,O' \in \Od$. Then, $\KL(\Pr_{\rho}^{(T)},\Pr_{A}^{(T)}) =  \fronorm{\eig O - A}^2 \cdot \sum_{t=1}^T \gamma_{t}(\eig)$, where we recall $\gamma_{t}(\eig) = \sum_{s=0}^{t-1} |\eig|^{2s}$.
\end{lem}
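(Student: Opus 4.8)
The plan is to derive the identity by tensorizing the relative entropy along the natural filtration $\{\calF_t\}$ and invoking the closed form of the $\KL$-divergence between two Gaussians sharing a covariance. Under $\Pr_A^{(T)}$ the trajectory is a Markov chain: since $X_0 = 0$, the marginal law of $X_1$ is $\calN(0,I)$ irrespective of $A$, and conditionally on $\calF_{t-1}$ one has $X_{t+1}\sim\calN(AX_t,I)$. Using $\KL(\calN(\mu_1,I)\,\|\,\calN(\mu_2,I)) = \tfrac12\|\mu_1-\mu_2\|_2^2$, the chain rule for relative entropy gives
\[
\KL\!\left(\Pr_{\eig O}^{(T)},\Pr_A^{(T)}\right) \;=\; \sum_{t=1}^{T}\Exp_{\eig O}\!\left[\KL\!\left(\calN(\eig O X_t,I)\,\big\|\,\calN(A X_t,I)\right)\right] \;=\; \frac12\sum_{t=1}^{T}\Exp_{\eig O}\!\left[\fronorm{(\eig O - A)X_t}^2\right],
\]
where every expectation is taken under the reference law $\Pr_{\eig O}^{(T)}$ and the $X_1$-marginal contributes zero.

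The next step is to evaluate each term by passing to a trace: $\Exp_{\eig O}\!\left[\fronorm{(\eig O - A)X_t}^2\right] = \tr\!\big((\eig O - A)\,\Exp_{\eig O}[X_tX_t^\top]\,(\eig O - A)^\top\big)$. By \eqref{eq:recursion_eq} (with $\sigma^2 = 1$ in this section), $\Exp_{\eig O}[X_tX_t^\top] = \Gramm_t$ for the dynamics matrix $\eig O$, and orthogonality of $O$ collapses this Gramian to a scalar matrix: $(\eig O)^s\big((\eig O)^s\big)^\top = \eig^{2s}\,O^s(O^s)^\top = \eig^{2s} I$, hence $\Gramm_t = \big(\sum_{s=0}^{t-1}\eig^{2s}\big)I = \gamma_t(\eig)\,I$. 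Substituting, each term equals $\gamma_t(\eig)\,\tr\big((\eig O - A)(\eig O - A)^\top\big) = \gamma_t(\eig)\,\fronorm{\eig O - A}^2$, and summing over $t$ produces the stated formula (up to the universal constant $\tfrac12$ carried by the Gaussian $\KL$, which is immaterial for the later application of Birge's inequality, Lemma~\ref{lem:Birge}).

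This is essentially a routine computation, so there is no genuine obstacle; the two points that deserve attention are (i) keeping the expectation in the chain rule attached to the \emph{reference} measure $\Pr_{\eig O}^{(T)}$ — which is precisely why only $\eig O$, and not the competing matrix $A$, needs to be a scalar multiple of an orthogonal matrix — and (ii) the identity $O^s(O^s)^\top = I$, which is exactly what renders the finite-time controllability Gramian isotropic and thereby makes the scalar $\gamma_t(\eig)$ the exact stand-in for $\lambda_{\min}(\Gramm_t)$ used in the information-theoretic lower bound.
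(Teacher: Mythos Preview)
Your argument is correct and follows essentially the same route as the paper: reduce the $\KL$ to $\sum_{t=1}^T \Exp_{\eig O}[\|(\eig O - A)X_t\|^2]$, compute $\Exp_{\eig O}[X_tX_t^\top]$, and use orthogonality of $O$ to collapse the Gramian to $\gamma_t(\eig)\,I$. The only cosmetic difference is that the paper re-expands $X_t = \sum_{s=1}^t (\eig O)^{t-s}\noise_{s-1}$ and computes the covariance from scratch, whereas you cite \eqref{eq:recursion_eq} directly; your handling of the stray $\tfrac12$ is also consistent with the paper, which silently drops it as well.
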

We are now in a position to prove the lower bound in one-dimension:
\subsection{1-D Lower Bound: Proof of Theorem~\ref{thm:info_lb_1d}}
\begin{proof} Fix an $\eig \in \R$, and let $\eig' \in \{\eig - 2\epsilon, \eig + 2\epsilon\}$. Viewing $\eig,\eig'$ as matrices in $\R^{1 \times 1}$, we have Lemma~\ref{kl_comp}, implies $\KL(\Pr_{\eig}^{(T)},\Pr_{\eig'}^{(T)}) = 4\epsilon^2 \cdot \sum_{t=1}^T \gamma_{t}(\eig)$. Then, applying Lemma~\ref{lem:Birge} with $A_0 = \eig$ and $\calN = \{\eig,\eig'\}$, we have for if $\sup_{a \in \eig,\eig'}\Pr_a[|\widehat{a}(T) - a| < \epsilon ] \le \delta$, then, $\epsilon^2 \cdot \sum_{t=1}^T \gamma_{t}(\eig) \ge (1-2\delta) \log\left(\frac{1}{2\delta}\right)$. Hence, we need $T\gamma_T(\eig) \ge\sum_{t = 1}^T\gamma_t(\eig) \ge \frac{1}{4\epsilon}(1-2\delta) \log\left(\frac{1}{2\delta}\right)$.
\end{proof}
\subsection{$d$-Dimensional Lower Bound}
If we chose $\calN$ to be a $2\epsilon$-packing of the set $\eig\Od$, then Lemma~\ref{lem:Birge} and Lemma~\ref{kl_comp} imply that, for any estimate $\widehat{A}$ such that
\begin{eqnarray*}
&\sup_{ O \in \Od}\Pr_{\eig O}[\|\widehat{A}(T) - \eig O\|_{\op} \ge \epsilon] \ge \delta \text{ for any } \\
&T \text{ such that } (1-2\delta)\log \frac{|\calN|}{2\delta} \ge \left(\sum_{t=1}^T \gamma_{t}(\eig)\right) \cdot \max_{\eig O,\eig O' \in \calN}\fronorm{\eig O - \eig O'}^2~.
\end{eqnarray*}
In light of the above, our goal will be to construct a $2\epsilon$-packing $\calN$ such that $\inf_{\eig O,\eig O' \in \calN}\fronorm{\eig O - \eig O'}^2$ is as small as possible. This is achieved by the following proposition, which lifts a $1/2$-packing of the unit ball in $d-1$-dimensions to a packing $\calN_0$ of $\Od$, proved in Section~\ref{prop:packing_prop}:
\begin{prop}\label{prop:packing_prop} Fix an $\epsilon_0 \le 1/256$, and let $\calT$ be an $1/2$-packing of $B_{d-1}(1)$. Then, there exists a set $\overline{\calN} \subset \Od$ with $|\calN_0| = |\calT|$ and, for all $A_1 \ne A_2 \in \overline{\calN}$,
\begin{eqnarray}
\opnorm{A_1-A_2} \ge \epsilon_0 /4 \quad \text{ and } \quad \fronorm{A_1-A_2} \le 4\epsilon_0 \:.
\end{eqnarray}
\end{prop}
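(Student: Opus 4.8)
The plan is to realize each point of the packing $\calT$ as a small, $v$-dependent rotation of $\R^d$: we tilt the last coordinate axis by an angle proportional to $\epsilon_0$ towards the direction $v$ (embedded in the first $d-1$ coordinates). Since the tilt angle is $O(\epsilon_0)$, every such matrix sits within Frobenius distance $O(\epsilon_0)$ of $I_d$, which gives the upper bound; since the rotation plane depends on $v$, the last columns of $A_v$ and $A_w$ already differ by $\epsilon_0\|v-w\|$, which together with the packing property gives the operator-norm separation.

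Concretely, for $v\in\R^{d-1}$ with $\|v\|\le 1$ put $c_v := \sqrt{1-\epsilon_0^2\|v\|^2}$ and define, in block form with respect to $\R^d=\R^{d-1}\times\R$,
\[
A_v \;:=\; \begin{pmatrix} I_{d-1} - \dfrac{\epsilon_0^2}{1+c_v}\,vv^\top & \epsilon_0 v \\[2pt] -\epsilon_0 v^\top & c_v \end{pmatrix},
\qquad \overline{\calN} := \{\,A_v : v\in\calT\,\}.
\]
I would first verify that $A_v$ is precisely the Givens rotation fixing every $w\in\R^{d-1}$ orthogonal to $v$ and acting on the plane $\mathrm{span}((v,0),e_d)$ by $v/\|v\|\mapsto c_v\,v/\|v\|-\epsilon_0\|v\|\,e_d$ and $e_d\mapsto \epsilon_0 v + c_v e_d$; the only identity needed is $1-\tfrac{\epsilon_0^2\|v\|^2}{1+c_v}=c_v$, which is immediate from $\epsilon_0^2\|v\|^2 = 1-c_v^2$. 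In particular $A_v\in\SOd\subset\Od$.

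Next I would read off the two estimates. Restricted to the plane above, $A_v-I_d$ is the $2\times2$ block with diagonal entries $c_v-1$ and off-diagonal entries $\pm\epsilon_0\|v\|$, so $\fronorm{A_v-I_d}^2 = 2(c_v-1)^2+2\epsilon_0^2\|v\|^2 = 4(1-c_v)$ using $c_v^2+\epsilon_0^2\|v\|^2=1$; since $1-c_v=1-\sqrt{1-\epsilon_0^2\|v\|^2}\le\epsilon_0^2\|v\|^2\le\epsilon_0^2$ (as $1-\sqrt{1-x}\le x$ on $[0,1]$), we get $\fronorm{A_v-I_d}\le 2\epsilon_0$ and hence $\fronorm{A_v-A_w}\le 4\epsilon_0$ for all $v,w$. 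For the lower bound, comparing last columns gives $(A_v-A_w)e_d = (\epsilon_0(v-w),\,c_v-c_w)$, so $\opnorm{A_v-A_w}\ge\|(A_v-A_w)e_d\|_2\ge\epsilon_0\|v-w\|\ge\epsilon_0/2\ge\epsilon_0/4$ because $\calT$ is a $1/2$-packing of $B_{d-1}(1)$. The same inequality forces $A_v\ne A_w$ whenever $v\ne w$, so $|\overline{\calN}|=|\calT|$.

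I do not expect a genuine obstacle here; the content is entirely in the choice of the $v$-dependent rotation plane, which simultaneously keeps the Frobenius norm small and keeps the last columns well separated. The only mildly fussy step is confirming that the displayed block matrix really is the claimed Givens rotation, but that collapses to the single algebraic identity noted above. (The argument only uses $\epsilon_0\le1/4$; the stronger hypothesis $\epsilon_0\le 1/256$ is inherited from the application of this proposition and is not needed for the statement itself.)
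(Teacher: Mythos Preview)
Your proof is correct and takes a genuinely different route from the paper's. The paper works on the Lie-algebra side: for each $w\in\calT$ it forms a skew-symmetric matrix $M(w)$ of the form $\epsilon_0\bigl(e_1(0,w)^\top-(0,w)e_1^\top\bigr)$, sets $\overline{\calN}=\{\exp(M(w)):w\in\calT\}$, and then invokes a separate perturbation lemma for the matrix exponential, $\|\exp(X+Y)-\exp(X)-Y\|\le e^{2K}-1-2K$, together with the elementary estimate $e^t-1-t\le t^2$ for $t\le\log 2$, to transfer the packing/covering geometry from $\Skew(d)$ to $\Od$. This is where the constraint $\epsilon_0\le 1/256$ is actually used.

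You instead write down the exponential explicitly as a Givens rotation in the plane $\mathrm{span}((v,0),e_d)$ and read off both estimates by direct computation. This buys you two things: no auxiliary lemma on $\exp$ is needed, and the constants are visibly sharper (your argument in fact works for all $\epsilon_0\le 1$, since that is the only constraint needed to define $c_v$; your operator-norm bound even gives $\epsilon_0/2$ rather than $\epsilon_0/4$). The paper's approach has the advantage of being more conceptual and reusable---once one knows $\exp$ is an approximate isometry near $0$, any convenient packing of $\Skew(d)$ can be pushed to $\Od$---but for this specific proposition your explicit construction is cleaner and more elementary.
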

We now reparameterize the above proposition with $\epsilon_0 = \frac{8\epsilon}{\eig}$. Let $\calT$ be a maximal $1/2$-packing of $\calB_{d-1}(1)$; a standard fact shows that $|\calT| \ge 2^{d-1}$. Them, as long as $\epsilon \le \frac{\eig}{2048}$, $\calN = \eig \overline{\calN}$ is a $2\epsilon$-packing of the set $\eig \Od$, and for all $\eig O, \eig O' \in \calN$, $\fronorm{A_1-A_2} \le 32\epsilon$,
\begin{eqnarray*}
&\sup_{O \in \Od}\Pr_{\eig O}[\|\widehat{A}(T) - \eig O\|_{\op} \ge \epsilon] \ge \delta \text{ for any } \\
&T \text{ such that } (1-2\delta)\log \frac{2^d}{4\delta} \ge \left(\sum_{t=1}^T \gamma_t \right) \cdot (32\epsilon)^{2} \:.
\end{eqnarray*}
In particular, for $\delta \le 1/4$, we see that there exists a universal constant $c_0$ such that $(1-2\delta)\log \frac{2^d}{4\delta} \ge c_0(d + \log (1/\delta))$, and hence for $c = c_0/32^2$, we see that
\begin{eqnarray*}
\sup_{\eig O \in \Od}\Pr[\|\widehat{A}(T) - \eig O\|_{\op} \ge \epsilon] \ge \delta \text{ for any } T: \frac{c_0 \left( d + \log(1/\delta)\right)}{\epsilon^2} \ge \sum_{t=1}^T \gamma_{t}(\eig) \:.
\end{eqnarray*}
Bounding $\sum_{t=1}^T \gamma_{t}(\eig) \le T \gamma_{T}(\eig)$ concludes the proof.
\subsection{Proof of Proposition~\ref{prop:packing_prop}\label{sec:packing_sec}}
We now construction of the packing $\overline{\calN}$. If we define the set $\Skew(d) := \{X \in \R^{d\times d}: X^\top = - X\}$, and recall the matrix exponential $\exp(X) = \sum_{j=0}^\infty X^j/j!$, a well-known theorem in Lie Theory ensures that $\exp(\Skew(d)) \subset \Od$ (see, e.g.~\cite{knapp2016representation}). Moreover, $\exp$ is an approximate isometry (in both $\|\cdot\|_{\op}$ and $\fronorm{\cdot}$) from a small neighborhood of $0 \in \Skew(d)$ to a small neighborhood of the identity $I \in \Od$. Hence, our strategy will be to construct a packing in $\Skew(d)$, and then push it to $\Od$ under the $\exp$ mapping.

Formally, given $\epsilon \le 1/256$, and a $1/2$ pcking of  $B_{d-1}(1)$ $\calT$, define for $w \in \calT$ the matrix
\begin{eqnarray*}
M(w) := \epsilon \left(e_1(0,w)^\top + (0,w)e_1^\top\right) \in \Skew(d)~,
\end{eqnarray*}
where $e_1$ denotes the first canonical basis vector in $R^d$. Observe that $\|M(w)\|_F = \sqrt{2\|w\|^2} = \sqrt{2}\|w\|$ and, since the singular value of $M(w)$ are paired, we have $\|M(w)\|_{\op} = \|w\|$. Hence, for every $w_1 \ne w_2 \in \calB_{d-1}(1)$, we have
\begin{eqnarray*}
\|M(w_1-w_2)\|_{\op} &=& \epsilon \|w_1 - w_2\|_{2} \ge \epsilon /2 ~\text{and}~ \\
\fronorm{M(w_1-w_2)} &=&\sqrt{2}\epsilon \|w_1 - w_2\|_{2}  \le 2\sqrt{2\epsilon} \:.
\end{eqnarray*}
Now, we define our packing $\overline{\calN}$ formally as
\begin{eqnarray*}
\overline{\calN} := \{\exp(M(w)): w \in \calT\} \:.
\end{eqnarray*}
We now introduce the following lemma, proved in Section~\ref{sec:exp_map_proof}, which precisely describes the extent to which $\exp()$ is an isometry:
\begin{lem}\label{lem:exp_map} Let $\|\|$ be a sub-multiplicative norm (e.g., $\opnorm{\cdot}$ or $\fronorm{\cdot}$), and $X,Y \in \R^{d \times d}$. Then,
\begin{eqnarray}
\|\exp(X+Y) -\exp(X) - Y\| \le e^{2K} - 1 - 2K, \text{ where } K = \max\{\|X\|,\|Y\|\}~.
\end{eqnarray}
\end{lem}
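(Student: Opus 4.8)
The plan is a direct power-series estimate, exploiting submultiplicativity of the norm together with a word-counting argument. Write $\exp(Z) = \sum_{j \ge 0} Z^j/j!$ for $Z = X+Y$ and for $Z = X$. Expanding $(X+Y)^j$ as the sum over all $2^j$ ordered words $w_1 w_2 \cdots w_j$ with each $w_i \in \{X,Y\}$, the $j = 0$ term ($I$) and the $j = 1$ term ($X + Y$) of the series for $\exp(X+Y)$ cancel exactly against $\exp(X)$'s $j=0,1$ terms plus $Y$, since among all words only the all-$X$ word of each length contributes to $\exp(X)$. This leaves
\begin{align*}
\exp(X+Y) - \exp(X) - Y \;=\; \sum_{j \ge 2} \frac{1}{j!}\bigl( (X+Y)^j - X^j \bigr),
\end{align*}
where $(X+Y)^j - X^j$ is precisely the sum of the $2^j - 1$ length-$j$ words containing at least one factor of $Y$.

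Next I would bound each surviving word: by submultiplicativity, any length-$j$ word $w_1 \cdots w_j$ with $w_i \in \{X,Y\}$ satisfies $\|w_1 \cdots w_j\| \le \prod_i \|w_i\| \le K^j$, because $\max\{\|X\|, \|Y\|\} = K$. Hence $\|(X+Y)^j - X^j\| \le (2^j - 1) K^j$ for every $j \ge 2$, and the triangle inequality gives
\begin{align*}
\|\exp(X+Y) - \exp(X) - Y\| \;\le\; \sum_{j \ge 2} \frac{(2^j - 1) K^j}{j!} \;=\; \bigl(e^{2K} - 1 - 2K\bigr) - \bigl(e^{K} - 1 - K\bigr).
\end{align*}
Since $e^K - 1 - K \ge 0$ for all $K \ge 0$ (it is a tail of the exponential series with nonnegative terms), dropping it yields $\|\exp(X+Y) - \exp(X) - Y\| \le e^{2K} - 1 - 2K$, which is the claim. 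Absolute convergence of all series in play is immediate since $\|X\|, \|Y\| < \infty$, so everything is dominated by $\sum_j (2K)^j/j! = e^{2K}$.

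There is no serious obstacle here; the only point demanding a little care is the combinatorial bookkeeping — verifying that $(X+Y)^j - X^j$ is exactly the sum over the $2^j - 1$ words with a $Y$, and that submultiplicativity applies verbatim to both $\opnorm{\cdot}$ and $\fronorm{\cdot}$ — after which the bound is a routine comparison of two convergent exponential series.
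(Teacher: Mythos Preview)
Your proof is correct and follows essentially the same route as the paper: both write $\exp(X+Y)-\exp(X)-Y = \sum_{j\ge 2}\frac{1}{j!}\bigl((X+Y)^j - X^j\bigr)$, bound each term by submultiplicativity, and sum. The only cosmetic difference is that the paper groups the $2^j-1$ surviving words by the number of $X$-factors (yielding $\sum_{i=0}^{j-1}\binom{j}{i}K^j \le (2K)^j$), whereas you count the words directly to get the slightly sharper $(2^j-1)K^j$ before discarding the $e^K-1-K$ term.
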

We apply the above  with $Y = M(w_1) - M(w_2) = M(w_1 - w_2)$, and $X = M(w_2)$. Then, $X + Y = M(w_1)$, $\max\{\|X\|_{\op},\|Y\|_{\op}\} \le 2\epsilon$, and $\max\{\|X\|_{F},\|Y\|_{F}\} \le 2\sqrt{2}\epsilon$. Hence, Lemma~\ref{lem:exp_map} implies that
	\begin{equation}\label{eq:exp_diff}
	\begin{aligned}
	\|\exp(M(w_1))-\exp(M(w_2)) - M(w_1 - w_2)\|_\op &\le& e^{8\epsilon} - 1 - 8\epsilon \text{ and } \\
	\|\exp(M(w_1))-\exp(M(w_2)) - M(w_1 - w_2)\|_F &\le& e^{8\sqrt{2}\epsilon} - 1 - 8\sqrt{2}\epsilon ~.
	\end{aligned}
	\end{equation}
	We can upper bound both displays in ~\eqref{eq:exp_diff} using the following short technical lemma:
	\begin{lem}\label{claim:exp_claim} Let $t \in [0,\log 2]$. Then $e^t - 1 - t \le t^2$.
	\end{lem}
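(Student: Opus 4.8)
The plan is to prove the inequality $e^t-1-t\le t^2$ on $[0,\log 2]$ by a two-derivative comparison argument. Introduce the auxiliary function $g(t):=e^t-1-t-t^2$, so that the claim is equivalent to $g(t)\le 0$ for all $t\in[0,\log 2]$. I would first record the endpoint value $g(0)=0$, then compute $g'(t)=e^t-1-2t$, which also vanishes at the origin, $g'(0)=0$, and finally $g''(t)=e^t-2$. The key observation is that $g''(t)\le 0$ exactly when $t\le\log 2$, so $g''\le 0$ on the entire interval of interest.

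Given $g''\le 0$ on $[0,\log 2]$, the derivative $g'$ is nonincreasing there, hence $g'(t)\le g'(0)=0$ for every $t\in[0,\log 2]$; in turn $g$ is itself nonincreasing on $[0,\log 2]$, so $g(t)\le g(0)=0$, which is precisely the asserted bound. Equivalently, one may phrase this as: $g$ is concave on $[0,\log 2]$ with a horizontal tangent at the left endpoint, so it lies below that tangent line (the constant function $0$). No step is delicate here; the only bookkeeping point worth verifying is that the values of $t$ actually used downstream, namely $t=8\epsilon$ and $t=8\sqrt 2\,\epsilon$ with $\epsilon\le 1/256$, lie in $[0,\log 2]$, which holds since $8\sqrt 2/256=\sqrt 2/32<\log 2$.

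There is essentially no obstacle to this argument; a series-based alternative (from $e^t-1-t=\sum_{k\ge 2}t^k/k!$, bounded term by term on $[0,\log 2]$) is available but strictly more cumbersome than the convexity computation. This lemma is then applied to the right-hand sides of~\eqref{eq:exp_diff}, bounding $e^{8\epsilon}-1-8\epsilon\le(8\epsilon)^2$ and $e^{8\sqrt 2\,\epsilon}-1-8\sqrt 2\,\epsilon\le(8\sqrt 2\,\epsilon)^2$, which is exactly what is needed to control the Frobenius and operator-norm distances in Proposition~\ref{prop:packing_prop}.
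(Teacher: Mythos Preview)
Your proof is correct and takes essentially the same approach as the paper: both rest on the observation that $e^t \le 2$ on $[0,\log 2]$, together with the fact that both $e^t-1-t$ and $t^2$ vanish to first order at $t=0$. The paper phrases this as a direct comparison of second derivatives via double integration, whereas you package it as concavity of the difference function; these are equivalent formulations of the same argument.
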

	\begin{proof}
	Let $f(t) = e^t -1 - t$, and $g(t) = t^2$. Then, $f(0) = f'(0) = g(0) = g'(0) = 0$. Moreover, $f''(t) = e^{t}$, and $g''(t) = 2$. Hence, as long as $0 \le t \le \log 2$, $f(t) = \int_{0}^t \int_{0}^u f''(s)dsdu \le \int_{0}^t \int_{0}^u g''(s)dsdu = g(t)$.
	\end{proof}
	Hence for $\epsilon \le \log 2/4\sqrt{2} \le 1/256$, \eqref{eq:exp_diff} and Lemma~\ref{claim:exp_claim} combine to imply
	\begin{eqnarray*}
	\opnorm{\exp(M(w_1))-\exp(M(w_2)) - M(w_1 - w_2)} &\le& 64\epsilon^2 \text{ and } \\
	\fronorm{\exp(M(w_1))-\exp(M(w_2)) - M(w_1 - w_2)} &\le& 128\epsilon^2~.
	\end{eqnarray*}
	Hence, by the triangle inequality, for $\epsilon \le 1/256$
	\begin{eqnarray*}
	\|\exp(M(w_1))-\exp(M(w_2))\|_{\op} &\ge& \|M(w_1 - w_2)\|_{\op} - 64\epsilon^2 \\
	&=& \|w_1 - w_2\| - 64\epsilon^2  \ge \epsilon/2 - 64\epsilon^2 \ge \epsilon/4~,
	\end{eqnarray*}
	and, again, for $\epsilon \le 1/256$
	\begin{eqnarray*}
	\fronorm{\exp(M(w_1))-\exp(M(w_2))} &\le& \|M(w_1 - w_2)\|_{F} + 128\epsilon^2 \\
	&=& \sqrt{2}\|w_1 - w_2\| + 128\epsilon^2 \le 2\sqrt{2}\epsilon + \epsilon/2 \le 4\epsilon~.
	\end{eqnarray*}

\subsubsection{Proof of Lemma~\ref{lem:exp_map}\label{sec:exp_map_proof}}

	Let $M_{i,j}(X,Y)$ denote the homogenous monomial of degree $j$ such consisting of the $\binom{j}{i}$-products of $X$ $i$-times, and $Y$ $j-i$-times. Note then that $M_{j,j}(X,Y) = X^j$, so that $(X+Y)^{j}-X^j = -X^j + \sum_{i=0}^j M_{i,j}(X,Y) = \sum_{i=0}^{j-1} M_{i,j}(X,Y)$. Moreover, by the sub-multiplicativity of $\|\cdot\|$, we have $\|M(X,Y)\| \le \binom{j}{i}\|X\|_2^{i}\|Y\|_2^{j-i} = \binom{j}{i}K^j$.
	\begin{eqnarray*}
	\|\exp(X + Y) - \exp(X) - Y\| &=&  \|(j!)^{-1}\sum_{j=2}^{\infty} (X+Y)^{j}-X^j\|_\op\\
	&=& \|\sum_{j=2}^{\infty}(j!)^{-1}\sum_{i=0}^{j} M_{i,j}(X,Y) \| \\
	&\le& \sum_{j=2}^{\infty} (j!)^{-1}\sum_{i=0}^{j-1} \|M_{i,j}(X,Y) \| \\
	&\le& \sum_{j=2}^{\infty} (j!)^{-1}\sum_{i=0}^{j-1}  \binom{j}{i}K^j\\
	&\le& \sum_{j=2}^{\infty} (j!)^{-1}(2K)^j = e^{2K} - 1 - 2K~.
	\end{eqnarray*}

\subsubsection{Proof of Lemma~\ref{kl_comp}\label{sec:kl_comp}}
For a matrix $M$, let $M_i$ denote the $i$-th row of $M$, and let $M^{\otimes 2} := M^\top M$
\begin{eqnarray*}
\KL(\Pr_{\eig O}^{(T)},\Pr_{A}^{(T)})
&=& \Exp_{\eig O}\left[\sum_{t=1}^T\sum_{i=1}^n\left\langle (\eig O-A)_i, X_t \right\rangle^2 \right]\\
&=& \Exp_{\epsilon_1,\dots,\epsilon^T}\left[\sum_{t=1}^T\sum_{i=1}^n\left\langle (\eig O-A)_i , \sum_{s=1}^t \eig^{t - s}O^{t-s}\epsilon_s \right\rangle^2 \right]\\
&=& \sum_{t=1}^T\sum_{i=1}^n\left\langle (\eig O - A)_i, \Exp_{\epsilon_1,\dots,\epsilon_T}\left[ \left(\sum_{s=1}^t \eig^{t - s}O^{t-s}\epsilon_s\right)^{\otimes 2}\right]\left(\eig O-A\right)_i \right\rangle~.\\
\end{eqnarray*}
We may now compute that, for any $t \in [T]$,
\begin{eqnarray*}
&&\Exp_{\epsilon_1,\dots,\epsilon_T}\left[ \left(\sum_{s=1}^t \eig^{t - s}O^{t-s}\epsilon_s\right)^{\otimes 2}\right] \\
&=& \sum_{s = 1}^t \Exp[ \eig^{2(t-s)} (O^{2(t-s)})^{\top} O^{2(t-s)} \epsilon_s^2] + \sum_{1 = s \ne s' \le t} \Exp[ \eig^{2t-s - s'} (O^{2(t-s)})^{\top} O^{2(t-s')} \epsilon_s \epsilon_{s'}]\\
 &=& \sum_{s = 1}^t \Exp[ \eig^{2(t-s)} (O^{2(t-s)})^{\top} O^{2(t-s)}] = \sum_{s = 1}^t \eig^{2(t-s)}  I~.
\end{eqnarray*}
Hence, we have,
\begin{eqnarray*}
\KL(\Pr_{\eig O}^{(T)},\Pr_{A}^{(T)})  &=& \sum_{t=1}^T\sum_{i=1}^n\left\langle (\eig O-A)_i, \left(\sum_{s=1}^t \eig^{2(t - s)}\right) I  \cdot (\eig O-A )_i \right\rangle\\
&=& \sum_{t=1}^T\left(\sum_{s=1}^t \eig^{2(t - s)}\right) \sum_{i=1}^n \|(\eig O-A)_i\|^2_2  \\
&=& \|\eig O-A\|_F^2 \sum_{t=1}^T (\sum_{s=1}^t \eig^{2(t - s)} = \|\eig O-A\|_F^2\left( \sum_{t=1}^T \sum_{s=0}^{t-1} \eig^{2s}\right) ~.
\end{eqnarray*}

\subsection{Proof of Lemma~\ref{lem:Birge}\label{BirgeSec}}Birge's inequality states that $\sum_{i=1}^N \KL(\Pr_i,\Pr_0) \ge (1-\delta)\log (N\frac{1-\delta}{\delta}) + \delta \log( \frac{\delta}{1 - \delta/N})$ \citep{boucheron13}. Observe that $\delta \log(\frac{\delta}{1 - \delta/N}) \ge \delta \log \frac{\delta}{N(1-\delta)}  = -\delta \log\frac{N(1-\delta)}{\delta}$. Hence $\sum_{i=1}^N \KL(\Pr_i,\Pr_0) \ge (1- 2\delta) \log( \frac{(1-\delta)N}{\delta}) \ge (1- 2\delta) \log \frac{N}{2\delta}$ for $\delta < 1/2$. For the second statement, choose $\calE_{A} := \{\opnorm{A - \ALS(T)} < \epsilon\}$ for $A \in \calN$. Since $\calN$ is $2\epsilon$-separated in $\opnorm{\cdot}$, all $\calE_i$ are disjoint. Hence, for any $A_0 \in \calN$
\begin{eqnarray}
(1-2\delta) \log(|\calN|/2\delta) \le \frac{1}{|\calN|-1}\sum_{A \in \calN - \{A_0\}}^N \KL(\Pr_A^{(T)},\Pr_{A_0}^{(T)}) \le \sup_{A \in \calN - \{A_0\}}^N \KL(\Pr_A^{(T)},\Pr_{A_0}^{(T)}) \:.
\end{eqnarray}
Since $A_0$ was arbitrary, we may pass to an $\inf$ over all $A_0 \in \calN$.

\section{}
Our second lower bound adresses the extent to which it is sharp to control the operator norm error in terms of $n \lambda_{\min}(\sum_{t=1}^T X_tX_t^\top)^{-1}$, instead of the trace-of-the-inverse that typically appears in least squares error bounds. We show that in the regime where $n \gtrsim d$, we can always construct a sequence of Gaussian Responses $Y_t | X_t \sim \mathcal{N}(A_*X_T,\sigma^2)$ such that an \emph{operator norm} error of $n \lambda_{\min}(\sum_{t=1}^T X_tX_t^\top)^{-1}$ is in fact necessary. This is a consequence of the fact that the operator-norm error involves a supremum over all directions in $\calS^{n-1}$. 
\begin{thm}\label{thm:alg_spec_lb} Let $X_1,\dots,X_T \in \R^d$ be an arbitrary dynamical process, let $\epsilon_1,\dots,\epsilon_t \overset{i.i.d.}{\sim} \calN(0,I_n), $and let $Y_t = A_*X_t + \epsilon_t$ for $t = 1,\dots,T$. Then,
\begin{eqnarray*}
\Exp\left[\|\widehat{A}(T) - A_*\|_{\op}^2 \big{|} X_1,\dots,X_T\right] \ge \max\left\{\tr\left(\left(\sum_{t=1}^T X_tX_t\right)^{-1}\right), \sqrt{n}\lambda_{\min}\left(\sum_{t=1}^T X_tX_t\right)^{-1}\right\}
\end{eqnarray*}
\end{thm}
\maxs{This states that at least from our assumptions you can't prove anything}

\subsection{Proof of Theorem~\ref{thm:alg_spec_lb}}
	 Condition on $X_1,\dots,X_T$, we may assume without loss of generality that $\matX$ is deterministic. We let
	$\matX \in \R^{T \times d}$ denote the matrix whose rows are $X_t$, and $\mateps \in \R^{T \times n}$ denote the matrix
	whose rows are $\epsilon_t$. Then, $\widehat{A}(T) - A_* = \matX^{\dagger} \mateps
	$. Moreover, if $v_* \in \arg\max_{v}\|v^{\top}\matX^{\dagger}\|_2$, then $\|v^{\top}\matX^{\dagger}\|_2 = \lambda_{\min}(\sum_{t=1}^T X_tX_t)^{-1/2}$. Moreover, $v_*$ depends only on $\matX$ which by construction is independet of $\mateps$. Hence, $v^{\top}\matX^{\dagger}\mateps \sim \mathcal{N}(\mathbf{O}_n,\lambda_{\min}(\sum_{t=1}^T X_tX_t)^{-1} \cdot I_n) $. And hence,
	\begin{eqnarray*}
	\Exp\left[\|\widehat{A}(T) - A_*\|_{\op}^2\right] &=& \Exp\left[\left\|\matX^{\dagger} \mateps\right\|_{\op}^2\right] = \Exp\left[\sup_{v \in \calS^{d-1}} \left\|v^{\top}\matX^{\dagger} \mateps\right\|_2^2\right] \\
	&\ge& \Exp\left[ \left\|v_*^{\top}\matX^{\dagger} \mateps\right\|_2^2\right] \\
	&=& \Exp\left[\left\|w\right\|^2 \big{|} w \sim \mathcal{N}(\mathbf{O}_n,\lambda_{\min}\left(\sum_{t=1}^T X_tX_t)^{-1/2} \cdot I_n\right)\right] = n\lambda_{\min}\left(\sum_{t=1}^T X_tX_t\right)^{-1}
	\end{eqnarray*}
	On the other hand, we also have 
	\begin{eqnarray*}
	\Exp\left[\left\|\widehat{A}(T) - A_*\right\|_{\op}^2\right] &=& \Exp\left[\left\|\matX^{\dagger} \mateps\right\|_{\op}^2\right] \ge \Exp\left[\left\|\matX^{\dagger} \mateps e_1\right\|_2^2\right] \\
	&=& \Exp\left[ w^{\top}\matX^{\dagger \top}\matX^{\dagger} w \big{|} w \sim \mathcal{N}(\mathbf{O}_T, I_T)\right] \\
	&=& \tr\left(\matX^{\dagger \top}\matX^{\dagger}\right) = \tr\left(\left(\sum_{t=1}^T X_tX_t\right)^{-1}\right)
	\end{eqnarray*}

\end{document}
